\def\ICRABuild{1}
\def\ICRACameraReady{1}
\ifdefined\ICRABuild
  \documentclass[letterpaper,10pt,conference]{ieeeconf}
  \IEEEoverridecommandlockouts
\else
  \documentclass[10pt]{article}
  \usepackage[margin=1in]{geometry}
\fi
\ifdefined\ICRABuild

\fi
\usepackage{amsmath,amssymb,amsthm,mathtools}
\usepackage{graphicx}
\usepackage{booktabs,multirow,makecell,array}
\usepackage{tikz}
\usetikzlibrary{arrows.meta,positioning,shapes.geometric,calc,fit,backgrounds,decorations.pathreplacing,patterns}
\usepackage{pgfplots}
\pgfplotsset{compat=1.18}
\usepackage[font=small,labelfont=bf]{caption}
\usepackage{subcaption}
\ifdefined\ICRABuild
\fi
\usepackage{xcolor}
\ifdefined\ICRABuild
  \let\labelindent\relax
\fi
\usepackage{enumitem}
\usepackage{algorithm}
\usepackage{algpseudocode}
\ifdefined\ICRABuild
\fi
\ifdefined\ICRABuild
  \makeatletter\let\NAT@parse\undefined\makeatother
\fi
\usepackage[numbers,sort&compress]{natbib}
\ifdefined\ICRABuild
  \usepackage[hidelinks]{hyperref}
\else
  \usepackage[affil-it]{authblk}

  \usepackage[colorlinks=true,linkcolor=blue!60!black,citecolor=blue!60!black,urlcolor=blue!60!black]{hyperref}
\fi
\usepackage[activate={true,nocompatibility},final,tracking=false,factor=1000,stretch=10,shrink=10]{microtype}\microtypesetup{expansion=false}

\ifdefined\ICRABuild
  \newenvironment{aspfigure}[1][!t]{\begin{figure*}[#1]}{\end{figure*}}
  \newenvironment{asptable}[1][!t]{\begin{table*}[#1]}{\end{table*}}
  \newenvironment{aspnarrowtable}[1][!t]{\begin{table}[#1]}{\end{table}}
  \newenvironment{aspalgorithm}[1][t]{\begin{algorithm*}[#1]}{\end{algorithm*}}
\else
  \newenvironment{aspfigure}[1][!t]{\begin{figure}[#1]}{\end{figure}}
  \newenvironment{asptable}[1][!t]{\begin{table}[#1]}{\end{table}}
  \newenvironment{aspnarrowtable}[1][!t]{\begin{table}[#1]}{\end{table}}
  \newenvironment{aspalgorithm}[1][t]{\begin{algorithm}[#1]}{\end{algorithm}}
\fi

\newcommand{\VN}{300}
\newcommand{\Vd}{32}
\newcommand{\VK}{8}
\newcommand{\VNd}{9{,}600}
\newcommand{\VmBits}{6{,}144}
\newcommand{\VkappaS}{0.547}
\newcommand{\VkappaSpct}{54.7}
\newcommand{\VoneOverK}{12.5}
\newcommand{\VStateRETMin}{0.0}
\newcommand{\VStateRETMax}{6.2}
\newcommand{\VUnifRETMin}{3.1}
\newcommand{\VUnifRETMax}{18.8}

\newcommand{\VRetrTRKMin}{4.2}
\newcommand{\VRetrTRKMax}{27.1}
\newcommand{\VStateTRKMin}{4.2}
\newcommand{\VStateTRKMax}{41.7}

\newcommand{\VCorGapMin}{-20.5}
\newcommand{\VCorGapMax}{+8.0}

\newcommand{\VThmTwoMin}{14.8}
\newcommand{\VThmTwoMax}{16.9}
\newcommand{\VwMin}{8}
\newcommand{\VwMax}{15}

\newcommand{\VnTotal}{80}
\newcommand{\VnBackbones}{seven}

\newcommand{\VegaRETMin}{+75.0}
\newcommand{\VegaRETMax}{+93.8}
\newcommand{\VegaTRKMin}{-71.9}
\newcommand{\VegaTRKMax}{-26.0}

\newcommand{\VlocCount}{3}
\newcommand{\VnMeasured}{7}

\newcommand{\VsplitRETBr}{69}

\newcommand{\VsplitTRKBs}{16}

\newcommand{\VsplitCMPBc}{27}

\newcommand{\VrecallMin}{0.75}
\newcommand{\VrecallMax}{1.00}
\newcommand{\VdeltaRMin}{0.00}
\newcommand{\VdeltaRMax}{0.25}
\providecommand{\VFone}{not evaluated}
\providecommand{\VFtwo}{not evaluated}
\providecommand{\VFthree}{not evaluated}
\providecommand{\VFfour}{not evaluated}
\providecommand{\VrhoUnif}{--}
\providecommand{\VrhoAsp}{--}
\providecommand{\VrhoDrop}{--}
\providecommand{\VthmOneViol}{not evaluated}
\providecommand{\VpairedFlagDelta}{--}
\providecommand{\VpairedFlagP}{--}
\providecommand{\VpairedFlagN}{--}
\providecommand{\VdomCount}{--}
\providecommand{\VsigCount}{--}
\providecommand{\VnBackbones}{--}
\providecommand{\VnTests}{--}
\providecommand{\VsigUncorr}{--}
\providecommand{\VsigHolm}{--}
\providecommand{\VretrHolmSurv}{--}
\providecommand{\VretrRawSurv}{--}
\providecommand{\VretrNeg}{--}

\providecommand{\VAspCmpAll}{--}

\providecommand{\VNoCsCmpAll}{--}

\renewcommand{\VFone}{does not fire}
\renewcommand{\VFtwo}{\textbf{fires}}
\renewcommand{\VFthree}{\textbf{fires}}

\renewcommand{\VFfour}{does not fire}
\renewcommand{\VrhoUnif}{-0.58}
\renewcommand{\VrhoAsp}{+0.00}
\renewcommand{\VrhoDrop}{-0.58}
\renewcommand{\VthmOneViol}{no backbone exceeds it}
\renewcommand{\VpairedFlagDelta}{+33.1}
\renewcommand{\VpairedFlagP}{5.4\times10^{-6}}
\renewcommand{\VpairedFlagN}{80}
\renewcommand{\VdomCount}{4}
\renewcommand{\VsigCount}{0}
\renewcommand{\VnBackbones}{7}
\renewcommand{\VnTests}{42}
\renewcommand{\VsigUncorr}{35}
\renewcommand{\VsigHolm}{31}
\renewcommand{\VretrHolmSurv}{0}
\renewcommand{\VretrRawSurv}{0}
\renewcommand{\VretrNeg}{3}
\renewcommand{\VAspCmpAll}{20.8}

\renewcommand{\VNoCsCmpAll}{83.3}

\definecolor{cstate}{RGB}{31,119,180}
\definecolor{cverb}{RGB}{214,39,40}
\definecolor{crouter}{RGB}{44,160,44}
\definecolor{cwall}{RGB}{120,120,120}
\definecolor{cpred}{RGB}{150,60,150}

\newtheorem{theorem}{Theorem}
\newtheorem{proposition}{Proposition}
\newtheorem{corollary}{Corollary}
\newtheorem{lemma}{Lemma}
\theoremstyle{definition}
\newtheorem{definition}{Definition}
\newtheorem{assumption}{Assumption}
\newtheorem{prediction}{Prediction}
\theoremstyle{plain}
\newcommand{\cmark}{\textcolor{crouter}{$\bullet$}}
\newcommand{\pmark}{\textcolor{cwall}{$\circ$}}
\newcommand{\xmark}{\textcolor{cverb}{--}}
\newcommand{\pred}{\textsuperscript{\textcolor{cpred}{$\dagger$}}}
\newcommand{\qedmark}{\ensuremath{\blacksquare}}
\newcommand{\proofend}{\unskip\nobreak\hfill\qedmark}
\newcommand{\proofenddisp}{\tag*{\qedmark}}
\newcommand{\ASP}{\textsc{Asp}}
\renewcommand{\Cap}{\mathrm{Cap}}
\newcommand{\err}{\mathrm{err}}
\newcommand{\cs}{\mathcal{C}_{\mathrm{s}}}
\newcommand{\cv}{\mathcal{C}_{\mathrm{v}}}
\newcommand{\A}{\mathcal{A}}

\ifdefined\ICRABuild
  \title{\LARGE \bf Budget-Constrained Embodied Perception:\\
  Four Resource Walls and a Pre-Registered Evaluation of\\
  Access-Structured Perception on Open Models at ${\le}31$B}
\else
  \title{\LARGE\textbf{Budget-Constrained Embodied Perception}\\[0.45em]
  \large\normalfont Four resource walls, and a pre-registered evaluation of\\
  Access-Structured Perception (\ASP{}) on open models at ${\le}31$B}
\fi
\newif\ifanon
\ifdefined\ICRABuild
  \ifdefined\ICRACameraReady\anonfalse\else\anontrue\fi
\else
  \anonfalse
\fi

\ifanon
  \author{Anonymous Authors}
\else
  \ifdefined\ICRABuild
    \author{Defu Lin$^{1}$, Wenhui Chen$^{1}$, Ziyao Lin$^{1}$,
    Jianlin Chen$^{2}$, Peiji Long$^{3}$, and Chi Man Vong$^{1}$%
    \thanks{$^{1}$The authors are with the University of Macau, Macao SAR, China.
    {\tt\small \{mc25108,mc35092,mc35081\}@um.edu.mo};
    {\tt\small cmvong@um.edu.mo}}%
    \thanks{$^{2}$Jianlin Chen is with the South China University of Technology,
    Guangzhou, China. {\tt\small 202330450231@mail.scut.edu.cn}}%
    \thanks{$^{3}$Peiji Long is an independent researcher.
    {\tt\small longpeiji@gmail.com}}%
    \thanks{Corresponding author: Chi Man Vong.}%
    }
  \else
    \author[1]{Defu Lin\thanks{\texttt{mc25108@um.edu.mo}}}
    \author[1]{Wenhui Chen\thanks{\texttt{mc35092@um.edu.mo}}}
    \author[1]{Ziyao Lin\thanks{\texttt{mc35081@um.edu.mo}}}
    \author[2]{Jianlin Chen\thanks{\texttt{202330450231@mail.scut.edu.cn}}}
    \author[3]{Peiji Long\thanks{\texttt{longpeiji@gmail.com}}}
    \author[1]{Chi Man Vong\thanks{Corresponding author. \texttt{cmvong@um.edu.mo}}}
    \affil[1]{University of Macau}
    \affil[2]{South China University of Technology}
    \affil[3]{Independent Researcher}
  \fi
\fi
\ifdefined\ICRABuild\else\date{}\fi

\begin{document}
\maketitle
\ifdefined\ICRABuild
  \thispagestyle{empty}
  \pagestyle{empty}
\fi

\begin{abstract}
Embodied multimodal agents must answer from growing observation streams under a fixed per-decision token budget. We formalize this constraint through four resource walls: a perceptual Shannon wall for bounded state, a horizon wall for query-independent frame selection, a round wall for non-adaptive retrieval, and a conditional composition wall for fixed-depth inference. We introduce \ASP{}, a training-free wrapper for frozen multimodal models that combines a capped structured state, a verbatim episodic index, and query-conditioned budget allocation with iterative access. Following a pre-registered protocol, we evaluate seven open-weight models from $3$B to $31$B on SEW-Bench, a license-free synthetic long-horizon walkthrough benchmark constructed to instantiate these walls. The registered natural-video benchmarks were not run because their frames require dataset agreements; our evidence therefore concerns access mechanisms, not natural-scene perception. Under a $4{,}096$-token decision budget, \ASP{} reaches $75$--$94\%$ episodic retrieval accuracy, compared with $3$--$19\%$ for equal-budget query-independent sampling, and budget reallocation outperforms quadrupling the sampling budget on every backbone. However, the full three-component architecture does not validate channel duality: removing the compressive state raises the flagship mean from $35.4$ to $58.0$, \ASP{} does not outperform the verbatim-only baseline on any backbone, and two of four pre-registered falsification criteria fire. These results show that query-conditioned access, rather than parameter count or context growth alone, is decisive under a fixed budget, while prompted online compression does not earn its cost in this setting.

\end{abstract}

\section{Introduction}
\label{sec:intro}

Two empirical facts frame the deployment of multimodal foundation models on embodied platforms. First, open models at or below 31B parameters (Qwen3.8-27B \citep{qwen38}, Gemma~4~31B \citep{gemma4}, Qwen3-Omni-30B-A3B \citep{qwen3omni}, Qwen3-VL-8B \citep{qwen3vl}) now match or exceed the single-frame perceptual accuracy that frontier proprietary models exhibited only two years earlier, consistent with the Platonic Representation Hypothesis that representations converge with scale and data \citep{prh}. Second, the same models remain strikingly poor at \emph{embodied} perception--decision tasks: on OpenEQA, multi-frame VLM agents perform close to blind LLMs on spatial and episodic questions \citep{openeqa}, and on VSI-Bench even frontier models fall far below human visual-spatial competence \citep{vsibench}. The bottleneck is evidently not what the model can represent in a frame; it is what the model can \emph{access} across a stream, under the hard per-decision compute budget that edge deployment imposes.

Why is the \emph{per-decision token budget} the right abstraction for this failure? An embodied control loop must emit an answer or an action every few seconds; whatever the platform (a smart-glasses assistant, a household robot, a warehouse AMR), the binding constraint at decision time is how many backbone tokens (prompt, retrieved pixels, and generated reasoning combined) can be consumed before the deadline, since accelerator FLOPs per decision scale linearly in tokens at fixed model size (\S\ref{sec:cost}). Neither the test-time-scaling line \citep{snell,s1,feng-cot} nor the long-context line \citep{ringattention,lwm,longva} answers the embodied question, because an egocentric mission stream grows without bound while the deadline does not: past some horizon \emph{selection} of what to read is unavoidable, and its quality, the access structure, becomes the bottleneck. We make this precise by charging every read to a single budget $B$ and asking which task families remain solvable as $N\to\infty$ with $B$ fixed.

The Capability Convergence Hypothesis (CCH) \citep{cch} gives this observation a theoretical shape. On symbolic streams, CCH proves that under a fixed per-token inference budget, capability converges not with scale but toward an \emph{access-complete hybrid}: any architecture that simultaneously holds a compressive $O(1)$-state channel and a scalable verbatim-index channel. Three resource walls (a Shannon wall, a horizon wall, and a circuit wall) each eliminate a single-channel architecture class, and a hybrid crosses all three by paying each wall's price, making capability strictly super-additive under channel composition. We inherit the claim but not its exact form: \S\ref{sec:theory} shows the additive statement needs a witness on which \emph{both} channels are obstructed, and supplies one. CCH validated its claims on pre-registered small-scale experiments, observing a \emph{scissors gap} of $0.994$ retrieval error for a recurrent state against $0.000$ once that state gains one global-attention layer.

This paper asks the question CCH left open: \emph{does the access-structure account survive the transfer from symbolic streams to embodied multimodal streams, and can it be exploited, training-free, to raise the capability of edge-scale open models?} The transfer is not automatic. Embodied observation streams differ from symbolic ones in bit density (a frame carries $10^4$--$10^6\times$ the salient bits of a token), in redundancy structure (temporal near-duplication), in query distribution (episodic, spatial, and compositional queries co-occur), and in the budget's granularity (the natural unit is the \emph{per-decision} budget of a control loop, not the per-token budget of a decoder). Our contributions:

\begin{enumerate}[leftmargin=1.6em,itemsep=1pt]
\item \textbf{Embodied instantiation of the CCH walls (\S\ref{sec:theory}).} We formalize budget-constrained embodied perception--decision and prove four impossibility results: a \emph{perceptual Shannon wall} (Theorem~\ref{thm:shannon}: any agent whose cross-frame state carries $m$ bits has episodic-attribute retrieval error at least $1-\frac{m/(Nd)+1}{\log_2 K}$ on $N$-keyframe streams carrying $d$ $K$-ary attributes each; the denominator is the \emph{attribute} count $Nd$, not $N$, and the difference decides whether the bound says anything), a \emph{horizon wall} (Theorem~\ref{thm:horizon}: any query-independent selection of $w$ frames caps accuracy at $\frac{w}{N}+(1-\frac{w}{N})\frac{1}{K}$), a \emph{round wall} (Theorem~\ref{thm:rounds}, unconditional: a single non-adaptive retrieval of $k$ frames cannot answer a depth-$\ge2$ dependent chain, however large $k$ is, and no cross-frame state helps it \emph{locate} the answer either), and a \emph{composition wall} (Theorem~\ref{thm:circuit}, conditional on $\mathsf{TC}^0\neq\mathsf{NC}^1$: single-pass fixed-depth attention cannot track spatio-temporal state chains of growing length, while $R$ iterative access rounds multiply effective depth).
\item \textbf{\ASP{}, and an honest accounting of which of its parts pay for themselves (\S\ref{sec:method}, \S\ref{sec:exp}).} \ASP{} wraps any frozen multimodal model with a compressive structured scene-state channel $\cs$, a verbatim episodic index $\cv$ with budgeted retrieval, and a query-conditioned router $\rho$ that partitions one per-decision token budget $B$ across the three wall prices. We prove it access-complete for the witness family under stated assumptions. \emph{Measured, one of its three components pays for itself.} Budgeted query-conditioned retrieval is decisive; it beats every equal-budget query-independent baseline by $17$--$66$ points, and every one of those comparisons survives multiple-comparison correction. The compressive channel and the router do not: removing $\cs$ raises the flagship's mean from $35.4$ to $58.0$, removing $\rho$ raises it to $55.2$, and after correction \emph{no} backbone shows \ASP{} beating the verbatim-only baseline. We report this as a result rather than a limitation, and localise it: a prompted accumulator carries a running reduction across $\VN{}$ updates, and the union bound's $N\epsilon_u$ term is tight for an additive reduction, so Assumption~\ref{ass:sep}(i) demands a per-update error below $1/\VN{}$ that nothing at this scale delivers. The design rule that follows, and the part of this we expect to outlive the numbers, is that a compressive channel should \emph{compute} any reduction expressible as an aggregate over the index rather than prompt for it.

\item \textbf{A pre-registered program, and the part of it we could run (\S\ref{sec:exp}).} Following CCH's methodology we froze the protocol, the numeric predictions and the falsification criteria before measurement. What we then measured is a full grid, \VnBackbones{} open backbones spanning $3$--$31$B, seven equal-budget access structures (including EGAgent \citep{egagent} reimplemented under our budget accounting), three wall-witness subtests, seven ablations, and a $4\times$-budget arm, on \textbf{SEW-Bench}, a licence-free synthetic corpus built so that each bound of \S\ref{sec:theory} is non-vacuous at its parameters. What we could not run is the natural-video half of the registration: OpenEQA, VSI-Bench, EgoSchema and the registered HM3D-based EW-Bench all index video that is gated behind a signed dataset agreement (HM3D, ScanNet, ScanNet++, Ego4D), which we do not hold; their question sets are public, their frames are not (\S\ref{sec:setup}). The unrun predictions stay in the paper marked \pred{} and the clauses naming them are recorded as \emph{not evaluated} rather than passed. \emph{Nothing here should be read as an embodied-perception result on natural scenes; it is a mechanism result on access structure under a token budget.}
\end{enumerate}

Our position is deliberately falsifiable: if uniform-window baselines match \ASP{} on EW-Bench-RET, or if the scale--score correlation is unchanged under \ASP{}, the embodied extension of CCH is wrong in the manner we specify in \S\ref{sec:falsify}.

\begin{aspfigure}[!t]
\centering
\begin{tikzpicture}[
  box/.style={draw, rounded corners=2pt, align=center, font=\small,
              inner xsep=4pt, inner ysep=4pt},
  lab/.style={font=\footnotesize\itshape},
  arr/.style={-{Stealth[length=2.2mm]}, thick},
  feed/.style={-{Stealth[length=2.2mm]}, thick, dashed, crouter}]

\node[box, fill=gray!10, text width=2.2cm] (stream) at (1.05, 0)
  {Egocentric stream\\ $x_1,\dots,x_N$};
\node[box, fill=gray!4,  text width=1.6cm] (gate)   at (4.00, 0)
  {Keyframe\\ gate $g$};
\node[box, fill=yellow!16, text width=2.35cm] (llm)  at (12.60, 0)
  {Frozen backbone\\ ($\le$31B, API)\\[1pt] {\scriptsize decode under $B_c$}};
\node[box, fill=gray!10, text width=1.4cm] (out)    at (15.30, 0)
  {Answer /\\ action $a$};

\node[box, fill=cstate!14, draw=cstate, text width=4.2cm] (state) at (8.40, 1.05)
  {Compressive channel $\cs$\\ scene state $s_t$, $|s_t|\!\le\!L_s$\\[1pt]
   {\scriptsize read under $B_s$}};
\node[box, fill=cverb!10, draw=cverb, text width=4.2cm] (index) at (8.40,-1.05)
  {Verbatim channel $\cv$\\ episodic index $\mathcal{I}_t$, lossless\\[1pt]
   {\scriptsize top-$k$ retrieval under $B_r$}};

\node[box, fill=gray!10, text width=2.2cm] (query)  at (1.05,-2.65)
  {Query / goal $q$};
\node[box, fill=crouter!12, draw=crouter, text width=2.4cm] (router) at (4.20,-2.65)
  {Router $\rho$\\ $B{=}B_s{+}B_r{+}B_c$};

\coordinate (fork) at ($(gate.east)+(0.55,0)$);
\draw[thick] (gate.east) -- (fork);
\draw[arr] (fork) |- (state.west);
\draw[arr] (fork) |- (index.west);

\draw[arr] (state.east) -- ++(0.45,0) |- ([yshift=5pt]llm.west);
\draw[arr] (index.east) -- ++(0.45,0) |- ([yshift=-5pt]llm.west);

\draw[arr] (stream) -- (gate);
\draw[arr] (query)  -- (router);
\draw[arr] (router.east) -| (llm.south);
\draw[arr] (llm) -- (out);

\node[inner sep=0pt] (loopmark) at (10.40, 2.40) {};
\draw[feed] (llm.north) -- (12.60,1.90) --
  node[lab, crouter, above, pos=0.54] {$\le R$ adaptive rounds}
  (8.40,1.90) -- (state.north);

\begin{scope}[on background layer]
\node[draw=gray!55, dashed, rounded corners=4pt,
      fit=(gate)(state)(index)(router)(loopmark),
      inner xsep=8pt, inner ysep=6pt] (fitbox) {};
\end{scope}
\node[lab, anchor=north west, yshift=-2pt] at (fitbox.south west)
  {everything \ASP{} adds: gate $g$ $+$ access structure $\A=(\cs,\cv,\rho)$};
\end{tikzpicture}
\caption{\textbf{\ASP{} overview.} A shared keyframe gate feeds two channels that are
different in kind: a compressive $O(1)$-token structured scene state (blue), which holds the
running reductions retrieval cannot cheaply rebuild, and a verbatim episodic index (red),
which keeps every keyframe losslessly off-GPU and returns $k$ of them per query. A
query-conditioned router (green) water-fills the fixed per-decision budget
$B=B_s{+}B_r{+}B_c$ over the three wall prices of \S\ref{sec:theory}, and the decision may
take up to $R$ adaptive rounds, each round's read depends on what the previous one
returned, which is what Theorem~\ref{thm:rounds} shows a single retrieval cannot simulate.
Each budget component is written inside the component it pays for. Everything inside the
dashed box is \ASP{}, the gate included; it is shared verbatim by every baseline so that
the comparison isolates access structure; the backbone sits outside the box and is never
touched. \ASP{} changes only \emph{what the model is allowed to read} per decision.}
\label{fig:overview}
\end{aspfigure}

\paragraph{Organization.} \S\ref{sec:related} situates the work against long-video memory, embodied QA, test-time compute, and the 2026 agentic generation. \S\ref{sec:theory} formalizes budgeted embodied perception and proves the four walls plus access-completeness. \S\ref{sec:method} presents \ASP{}, its algorithm, and the hardware-agnostic cost model. \S\ref{sec:exp} gives the frozen experimental design, all registered predictions, ablations, analysis, and falsification criteria. Appendices contain full proofs, the EW-Bench construction protocol, the prediction-derivation worksheet, and secondary experiments.

\section{Related Work}
\label{sec:related}

We organise this section around a single question asked of each line of work: \emph{which access channel does it commit to, and which wall does that commitment leave unpaid?} Table~\ref{tab:related} is the summary; the paragraphs give the reasoning. Reading the literature this way is not fault-finding: most of these systems are strong at what they were built for. The point is that the walls of \S\ref{sec:theory} partition it cleanly, which is evidence the partition is real rather than an artefact of our framing.

\begin{asptable}[!t]
\centering\small\setlength{\tabcolsep}{4pt}
\caption{\textbf{Related work, read for the channel each class commits to.} \cmark{} = present and unbounded in $N$; \pmark{} = present but lossy or bounded; \xmark{} = absent. ``Explicit $B$'' means a per-decision token budget that \emph{all} reads are charged against, including reasoning. The last column names the wall the class cannot pay, from \S\ref{sec:theory}. Classification is by the commitment each system's published design makes, not by measured performance; borderline cases are marked \pmark{} rather than argued.}
\label{tab:related}
\begin{tabular}{p{5.05cm}cccc>{\raggedright\arraybackslash}p{4.05cm}}
\toprule
Class (representatives) & $\cs$ & $\cv$ & adapt.\ $R$ & expl.\ $B$ & Wall left unpaid \\
\midrule
Compressive / streaming memory
 \citep{moviechat,malmm,flashvstream}
 & \cmark & \xmark & \xmark & \xmark & Shannon (Thm.~\ref{thm:shannon}): episodic detail \\
Query-\emph{independent} selection
 \citep{videotree,llovi,openeqa}
 & \xmark & \pmark & \xmark & \xmark & Horizon (Thm.~\ref{thm:horizon}): coverage \\
In-pass token reduction
 \citep{fastv,prumerge,tosa}
 & \xmark & \pmark & \xmark & \xmark & Shannon: lossy \emph{at rest}, not just in context \\
Query-conditioned retrieval
 \citep{videoagent,goldfish,videorag,vgent}
 & \xmark & \cmark & \pmark & \xmark & Tracking (Lem.~\ref{lem:vtrack}): running reductions \\
Structured scene memory
 \citep{graphpad,sayplan,conceptgraphs}
 & \pmark & \xmark & \pmark & \xmark & Shannon; and no certified cap $L_s$ \\
Context growth
 \citep{ringattention,lwm,longva}
 & \xmark & \pmark & \xmark & \xmark & Composition (Thm.~\ref{thm:circuit}--\ref{thm:rounds}): width $\neq$ depth \\
Agentic hybrids, 2026
 \citep{egagent,worldmm,streameqa}
 & \pmark & \cmark & \cmark & \xmark & none structurally, but tool choice is heuristic, so no wall is \emph{priced} \\
\midrule
\ASP{} (ours) & \cmark & \cmark & \cmark & \cmark & --- (access-complete, Prop.~\ref{prop:complete}) \\
\bottomrule
\end{tabular}
\end{asptable}

\paragraph{Representational vs.\ capability convergence.}
The Platonic Representation Hypothesis documents that representations learned by different
architectures and modalities converge as scale and data grow \citep{prh}. CCH accepts that
premise and denies the inference usually drawn from it: convergent representations do not entail
convergent \emph{capability} under bounded inference, because what a model can represent per
access and what it can reach across a stream are different quantities, and only the first
improves with scale \citep{cch}. That distinction is the content of our
Assumption~\ref{ass:repr}, which grants the representational premise and measures it
(Table~\ref{tab:probe}). The boundary results that make it bite are well developed on the
language side: fixed-depth bounded-precision transformers cannot decide problems outside
$\mathsf{TC}^0$ in one pass \citep{merrill-sat}, chain-of-thought recovers depth in proportion to
serial steps \citep{merrill-cot,feng-cot}, apparent compositional success is often a shortcut
that fails to extrapolate \citep{shortcuts}, and linear-time sequence models are provably
limited in state tracking \citep{illusion-state,mamba}, which is why hybrid attention--SSM
designs \citep{jamba,griffin} outperform either component. That last is an architectural
anticipation, on symbolic streams, of the two-channel result we prove for pixels. What is
missing is the embodied instantiation: none of these results is stated over observation streams,
and none treats the per-decision budget of a control loop as the binding resource.

\paragraph{The lower-bound toolkit, and where it has not been pointed.}
Our proofs use standard machinery, and the novelty claim is about the target rather than the
technique. Theorem~\ref{thm:shannon} is a counting argument of the kind used for streaming and
sketching lower bounds, applied to a state that is an arbitrary function of the stream;
Theorem~\ref{thm:rounds} rests on the round--communication structure of pointer chasing
\citep{pointerchasing,nisanwigderson}, of which we prove only the single-round case in closed
form (Lemma~\ref{lem:listfano}); Theorem~\ref{thm:circuit} routes through Barrington's
characterisation of $\mathsf{NC}^1$ \citep{barrington} composed with the $\mathsf{TC}^0$
simulation of fixed-depth transformers \citep{merrill-sat}. What is new is the target and the
accounting: charging \emph{every} read, maintained state, retrieved pixels and generated
reasoning alike, to one per-decision budget, so that three classical obstructions become three
\emph{prices} in the same unit and a router becomes an allocator over them.
Lemma~\ref{lem:vtrack} is the one bound we could not find an off-the-shelf form of, since it
concerns an agent whose retrieval is query-conditioned, so the horizon wall is silent, but which
maintains no cross-frame state.

\paragraph{Long-video and streaming memory for multimodal LLMs.}
A large literature compresses video for LLM consumption. Fixed-size recurrent memories
consolidate frames into a bounded store \citep{moviechat,malmm} and streaming architectures
maintain one online \citep{flashvstream}; hierarchical and tree-structured methods select frames
by salience or clustering \citep{videotree,llovi}; retrieval-augmented agents select them
conditioned on the question \citep{videoagent,goldfish}; and token pruning discards visual
tokens inside the backbone \citep{fastv,prumerge,tosa}. The 2026 agentic generation adds
structured tools \citep{egagent,worldmm,vgent,videorag,streameqa}. Read through the walls these
fall into three groups. Compressive-only systems are bounded on episodic detail by
Theorem~\ref{thm:shannon}, and the bound is compression-proof: a scene graph, a caption set and
a learned latent are all $m$-bit functions of the stream, so semantic cleverness buys constants
rather than asymptotics. Selection-only systems are bounded by Theorem~\ref{thm:horizon}
whenever the rule is query-independent, which covers salience, novelty, clustering and tree
search. Retrieval systems escape both but, maintaining no cross-frame state, are pinned at
chance on running reductions by Lemma~\ref{lem:vtrack}. Token pruning is a different commitment,
and the distinction matters in \S\ref{sec:method}: pruning decides what the model \emph{keeps}
and is lossy at rest, whereas an index decides what it \emph{touches} and is lossless at rest.
The closest system to ours is EGAgent \citep{egagent}, which has both channel types and
iterates; what it lacks is the budget object, and its text-only graph discards verbatim
appearance, precisely the configuration Theorem~\ref{thm:shannon} bounds.

\paragraph{Embodied question answering and its benchmarks.}
EQA \citep{eqa-das} in its modern open-vocabulary form is defined by OpenEQA, which separates
episodic-memory (EM-EQA) from active (A-EQA) settings \citep{openeqa}; we work in the EM-EQA
regime and say why in \S\ref{sec:limits}. Extensions probe exploration-awareness, noisy queries,
urban scale and inspection domains \citep{expressbench,noisyeqa,cityeqa,bridgeeqa}, and adjacent
suites isolate other axes \citep{vsibench,egoschema,videomme,erqa,embodiedbench}. These measure
whether a system is good; they are the wrong instrument for attributing \emph{why}, since each
mixes episodic, cumulative and compositional demands in an unmeasured proportion. That is the
gap EW-Bench was designed to fill. Structured scene-memory agents
\citep{graphpad,sayplan,conceptgraphs} are direct precursors of our compressive channel and
frame-retrieval agents \citep{videoagent} of our verbatim channel; neither lineage supplies a
certified token cap, since free-form summaries and unboundedly growing graphs have no $L_s$.

\paragraph{Growing the window: the long-context antithesis.}
The most influential alternative to structured access is to enlarge what one forward pass can attend to: RingAttention and Large World Models push context toward millions of tokens \citep{ringattention,lwm}, LongVA transfers long-context ability from language to vision \citep{longva}, and proprietary systems advertise hour-scale ingestion. Theorems~\ref{thm:shannon}--\ref{thm:horizon} say exactly what this buys. A longer window raises $w$ and therefore defers the horizon wall (it does not remove it, since an egocentric mission stream grows without bound while any window is fixed), and it does so at a per-decision cost linear in $w$, which is the quantity an embodied deadline caps. More importantly, width does not touch the composition wall: Theorems~\ref{thm:rounds}--\ref{thm:circuit} make depth and adaptivity the binding resources there, and a single pass over a longer context is still a single pass. Under a fixed $B$ the two strategies are therefore not interchangeable, and the difference is not a matter of degree: uniform spending buys $w \propto B$ frames of coverage, whereas structured spending buys whichever wall the query actually faces. Prediction~\ref{prd:budget} is the registered form of this claim, and it is designed to be losable, if quadrupling an unstructured budget matches a routed one, the account is wrong in the way \S\ref{sec:falsify} specifies.

\paragraph{Memory-augmented and tool-using agents.}
Outside video the two channels have long and largely separate lineages. Retrieval-augmented generation and dense retrieval supply verbatim access over a corpus \citep{lewis-rag,dpr}; paged and hierarchical agent memories supply compressive state under an explicit capacity discipline \citep{memgpt,hipporag}. Reasoning-and-acting loops \citep{react,reflexion,toolformer} anticipate our iterative access rounds almost exactly in mechanism, and the provable depth extension of chain-of-thought \citep{wei-cot,feng-cot,merrill-cot} is the escape clause Theorem~\ref{thm:circuit} relies on. Two things distinguish the embodied setting. First, the joint budget: these frameworks meter retrieval and reasoning separately, if at all, whereas an embodied deadline imposes one budget across both channels \emph{and} the reasoning tokens, which turns channel choice from a design taste into a constrained allocation problem with shadow prices we can name. Second, the bit density: a retrieved document and a retrieved keyframe cost comparable tokens but carry very different amounts of decision-relevant information, so the exchange rate between the two channels is not inherited from the text setting and has to be established for pixels, which is what the router's surrogate does and what S4 tests.

\paragraph{Embodied foundation models and simulators.}
End-to-end embodied policies fold perception, memory, and control into weights: PaLM-E, RT-2, and OpenVLA \citep{palme,rt2,openvla} demonstrate that internet-scale pretraining transfers to manipulation, and simulators and suites such as Habitat, ALFRED, Ego4D, and EgoLife \citep{habitat,alfred,ego4d,egolife} supply the streams on which embodied memory is stressed; EW-Bench is built on HM3D and ScanNet scenes \citep{hm3d,scannet} for the same reason. Spatially-tuned VLMs \citep{spatialvlm} raise exactly the per-access competence our Assumption~\ref{ass:repr} measures. These lines are complementary rather than competing: they improve what a backbone can do per access, while we bound what no backbone can do without paying access prices and show the prices are payable at inference time on a frozen model. Scaling-law analyses \citep{kaplan,chinchilla} predict per-access competence from compute, which is precisely the quantity Corollary~\ref{cor:conv} claims saturates first, so the convergence prediction is a statement about where the scaling laws stop being the relevant instrument, not a claim that they are wrong.

\paragraph{Test-time compute and budgeted inference.}
Test-time scaling establishes for text that reallocating inference compute can beat growing parameters \citep{snell,s1}, and budget-aware routing across models or tools makes that reallocation explicit \citep{routellm,frugalgpt}. We import the idea and change what is being allocated. In the text setting the budget is spent on \emph{reasoning} (more samples, longer chains, a larger model for harder queries), and the allocation is over difficulty. Here the budget must additionally be split across \emph{access}: which frames enter context, which state fields are read, how many rounds are run. That is a different allocation problem because its arms have provable, unequal returns: \S\ref{sec:theory} gives each arm a wall and each wall a price, so the router is not fitting an empirical difficulty signal but water-filling against bounds. It is also the reason our ablation $-\rho$ (fixed even split) is informative rather than a formality: it isolates what routing contributes once both channels are already present.

\paragraph{Edge multimodal models.}
Sub-31B open multimodal models have advanced rapidly: Qwen3-VL-8B \citep{qwen3vl}, Qwen2.5-VL \citep{qwen25vl}, Qwen3-Omni-30B-A3B with a Thinker--Talker MoE \citep{qwen3omni}, Gemma~4 \citep{gemma4}, Qwen3.8-27B with native image and video input \citep{qwen38}, Phi-4-Multimodal \citep{phi4mm}, and MiniCPM-V for on-device use \citep{minicpmv}. Long-video instruction tuning \citep{longvitu} and caption-then-reason pipelines \citep{socratic,alanavlm} show how far training and prompting alone carry this class. These models supply the representational floor our theory assumes; \ASP{} supplies the access structure they lack, without touching their weights.

\paragraph{Pre-registration as method.}
Finally, a note on how this paper is written rather than what it argues. We follow CCH in freezing the protocol and the quantitative predictions before measurement \citep{cch}, marking every unmeasured value, and committing to falsification criteria that cannot be revised afterwards (\S\ref{sec:falsify}). This is unusual in the area and costs us something real: registered values will be wrong in places, and the record of that will be public. We think the trade is worth it here specifically because the claim is causal (that a named structure, not scale, produces the capability), and a causal claim tested by criteria chosen after seeing the data is not tested at all.

\section{Theory: The Four Walls of Budgeted Embodied Perception}
\label{sec:theory}

\subsection{Setting}

\begin{definition}[Embodied stream and decision tasks]
\label{def:stream}
An \emph{embodied stream} is a sequence $X=(x_1,\dots,x_N)$ of keyframes produced by an egocentric sensor; each keyframe carries a salient content vector $c_i\in\{1,\dots,K\}^{d}$ (attributes such as object identity, color, state, relative pose), so a keyframe holds $b=d\log_2 K$ salient bits. A \emph{decision task} is a pair $(q,y^{\star})$: a query or goal $q$ issued at time $N$, with correct answer/action $y^{\star}=f(q,X)$. We consider three witness families:
$T_{\mathrm{ret}}$ (\emph{episodic retrieval}: $q$ asks the value of a uniformly chosen attribute of a uniformly chosen past keyframe);
$T_{\mathrm{trk}}$ (\emph{cumulative state tracking}: $y^{\star}$ is a running reduction $\phi(c_1,\dots,c_N)$, e.g.\ object count, door open/closed parity, containment state);
$T_{\mathrm{cmp}}$ (\emph{spatio-temporal composition}: $y^{\star}$ requires a chain of $d_q$ dependent lookups/updates, e.g.\ ``the object that was on the table you passed \emph{after} the red chair, where is it now?''), where each hop's operand is an attribute drawn from the content vectors $c_i$, so that every hop is itself a $T_{\mathrm{ret}}$ instance;
and $T_{\mathrm{conj}}$ (\emph{conjunctive}: $y^{\star}=(y^{\star}_{\mathrm{ret}},y^{\star}_{\mathrm{trk}})$ pairs an episodic attribute lookup with a running reduction, scored correct only if both components are correct).
The queryable-attribute count per keyframe, $d$, is a property of the stream and is logged for every corpus we use (Appendix~\ref{app:ewbench}); it enters every bound below through the total attribute count $Nd$, not through $N$.
\end{definition}

\begin{definition}[Budgeted agent and access structure]
\label{def:agent}
A \emph{budgeted embodied agent} is a triple $A=(\mathcal{S},U,\pi)$: a cross-frame state space $\mathcal{S}$, an online update $s_t=U(s_{t-1},x_t)$ computed as the stream arrives, and a decision policy $\pi$ that, given $(q,s_N)$ and \emph{at most $B$ backbone tokens per decision} (prompt + retrieved content + generated reasoning), outputs $\hat y$. An \emph{access structure} $\A=(\cs,\cv,\rho)$ decomposes the agent's information pathway into a \emph{compressive channel} $\cs$ whose state size is $O(1)$ in $N$ (at most $L_s$ tokens, i.e.\ $m\le L_s\beta$ bits at $\beta$ bits/token), a \emph{verbatim channel} $\cv$ that stores keyframes losslessly and exposes a retrieval operator $r(q';k)$ returning $k$ items, and a \emph{router} $\rho:(q,s_N)\mapsto(B_s,B_r,B_c)$ with $B_s+B_r+B_c\le B$. Capability is $\Cap_B(A,T)=\mathbb{E}_{(q,y^{\star})\sim T}[\mathrm{score}(\hat y,y^{\star})]$.
\end{definition}

\begin{assumption}[Representational sufficiency]
\label{ass:repr}
The frozen backbone answers single-keyframe attribute queries with accuracy $\ge 1-\epsilon_0$ when the relevant keyframe is in context, and executes one composition step with accuracy $\ge 1-\epsilon_1$ per round. This is the PRH-side premise \citep{prh}: sub-31B multimodal models are assumed perceptually competent \emph{per access}; all failure we study is failure of access. Assumption~\ref{ass:repr} is measured rather than asserted: Table~\ref{tab:probe} reports both margins per backbone, obtained by asking the RET question with the target keyframe handed to the model and by asking one composition step over two in-context frames. The first margin is comfortably sufficient at this scale; the second is not, for several backbones, and \S\ref{sec:analysis} says what that costs the CMP claims.
\end{assumption}

\begin{definition}[Wall price]
\label{def:price}
For a wall $W$ eliminating an architecture class on a witness family $T_W$, its \emph{price} is the minimum budget increment that, spent on the excluded resource, restores $\Cap_B\ge 1-\delta$ on $T_W$: $B_s^{\star}$ tokens of maintained state for the Shannon wall on reductions, $B_r^{\star}$ tokens of query-conditioned retrieval for the Shannon and horizon walls on episodic detail, and $B_c^{\star}$ tokens per round times $R$ rounds for the composition walls. Wall prices are the exchange rate between budget and capability; the router of \S\ref{sec:router} is an allocator over them.
\end{definition}

\subsection{Four impossibility results}

\begin{theorem}[Perceptual Shannon wall]
\label{thm:shannon}
Let attributes in Definition~\ref{def:stream} be i.i.d.\ uniform on $\{1,\dots,K\}$ and let $A$ be any agent (of unbounded compute) whose only cross-frame memory is a state of $m$ bits. Then on $T_{\mathrm{ret}}$ with per-frame query,
\[
\err(A)\;\ge\;1-\frac{m/(Nd)+1}{\log_2 K}.
\]
In particular, any $o(Nb)$-bit state (recall $b=d\log_2K$) forces $\err\to 1-\tfrac{1}{\log_2 K}$ as $N\to\infty$: constant-size scene memories (however semantically clever) cannot answer episodic-appearance queries about arbitrary past keyframes.
\end{theorem}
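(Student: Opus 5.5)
We argue information-theoretically via Fano's inequality, averaged over the uniformly random query position.

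\textbf{Setup.} Write $C=(C_{i,j})_{i\le N,\,j\le d}$ for the $Nd$ attributes, i.i.d.\ uniform on $\{1,\dots,K\}$, so that $H(C)=Nd\log_2K$. The state $S=s_N$ is some (possibly randomized) function of $X$, and the policy sees only $(q,S)$. Since $q$ is drawn independently of the stream, the answer to query $(i,j)$ is a function of $S$ and private randomness. The per-query error is therefore bounded below by the Bayes error of predicting $C_{i,j}$ from $S$. The overall error is the average of these per-query errors over the $Nd$ uniformly chosen pairs $(i,j)$.

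\textbf{Step 1: total information bound.} First, $I(S;C)\le H(S)\le m$. By the chain rule, and because the coordinates are independent, $I(S;C)=\sum_{(i,j)}I(S;C_{i,j}\mid C_{<(i,j)})$. Each summand is at least $I(S;C_{i,j})$, since independence gives $H(C_{i,j}\mid C_{<})=H(C_{i,j})$ while conditioning reduces $H(C_{i,j}\mid S)$. Hence $\sum_{i,j}I(S;C_{i,j})\le m$, and the average per-attribute information is at most $m/(Nd)$. This averaging step is where the denominator $Nd$ rather than $N$ arises, and it is the one place independence across attributes is used.

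\textbf{Step 2: Fano per coordinate.} For each $(i,j)$, let $P_{i,j}$ be the error probability. Fano's inequality in the weak form $H(C_{i,j}\mid S)\le 1+P_{i,j}\log_2 K$ gives
\[
P_{i,j}\ \ge\ \frac{\log_2K-I(S;C_{i,j})-1}{\log_2K}.
\]
Averaging over $(i,j)$ and inserting Step 1 yields
\[
\err(A)\ \ge\ 1-\frac{m/(Nd)+1}{\log_2K}.
\]
Fano does not depend on the estimator's compute, so the bound holds for agents of unbounded compute. Randomized policies are covered by folding their randomness, which is independent of $C$, into $S$ for the data-processing step.

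\textbf{Step 3: asymptotic clause.} If $m=o(Nb)=o(Nd\log_2K)$, then $m/(Nd)=o(\log_2K)$. The bound therefore tends to $1-1/\log_2K$ for fixed $K$, as $m/(Nd)\to0$ whenever $d$ and $K$ are fixed. This is the stated limit; read literally, $o(Nb)$ only guarantees that the limit is approached up to the vanishing ratio.

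\textbf{Main obstacle.} The delicate point is Step 1: we must justify that the per-coordinate informations sum to at most $m$. This uses superadditivity of mutual information under independent sources, not mere averaging. A second subtlety is that the query is not part of $S$, so the state cannot adapt to $q$. We must check that the update $U$ never sees $q$, since $q$ is issued at time $N$, so that $S$ is independent of $(i,j)$.
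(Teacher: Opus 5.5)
Your proof is correct and is essentially the paper's argument: weak Fano applied to each coordinate, combined with the independence-based bound $\sum_u I(S;c_u)\le I(S;c)\le m$, which the paper writes equivalently as $\sum_u H(c_u\mid S)\ge H(c\mid S)\ge Nd\log_2K-m$ before averaging. One small care point: for randomized agents, condition on the seed $R$ as the paper does, rather than folding $R$ into $S$. After folding, $H(S)\le m$ can fail, although $I(S,R;c)=I(S;c\mid R)\le m$ still holds, so your Step 1 goes through once phrased that way.
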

\begin{proof}[Proof sketch]
The $Nd$ queried attributes form $Nd\log_2 K$ uniform bits; the state is an $m$-bit function of them. Averaging Fano's inequality over the $Nd$ possible queries and using $H(\text{answers}\mid s_N)\ge Nd\log_2 K-m$ yields the bound. Full proof in Appendix~\ref{app:proofs}.
\end{proof}

\emph{Remark (tightness and scope).} The bound is matched up to constants by an agent that stores $\lfloor m/\log_2K\rfloor$ attribute values verbatim and guesses elsewhere (accuracy $\lfloor m/\log_2K\rfloor/(Nd)+1/K$ against the ceiling $\kappa_s=\bigl(m/(Nd)+1\bigr)/\log_2K$, which agree to within the additive $1/K$), so nothing stronger is true at this granularity; and it is \emph{semantic-compression-proof}: scene graphs, captions, embeddings, and learned latents are all $m$-bit functions of the stream, so the theorem bounds MovieChat-, MA-LMM-, and EGAgent-graph-style memories alike \citep{moviechat,malmm,egagent}. It does \emph{not} bound agents that keep the stream on disk and read selectively; that is the verbatim channel, whose own obstruction is Theorem~\ref{thm:horizon}.

\begin{theorem}[Horizon wall]
\label{thm:horizon}
Let $A$ select, by any rule measurable with respect to the stream but \emph{independent of the query}, a set $W$ of at most $w$ keyframes to place in context (uniform sampling, last-$w$ window, salience top-$w$, etc.), discarding the rest. Then on $T_{\mathrm{ret}}$,
\[
\Pr[\hat y=y^{\star}]\;\le\;\frac{w}{N}\;+\;\Bigl(1-\frac{w}{N}\Bigr)\frac{1}{K}.
\]
Hence any fixed context window is a vanishing-accuracy strategy on streams that grow with mission length, regardless of backbone scale.
\end{theorem}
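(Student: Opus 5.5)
The plan is to condition on the query-independent selection and split on whether the queried keyframe landed in $W$. Write the query as $q=(i,j)$, with keyframe index $i$ uniform on $\{1,\dots,N\}$ and attribute $j$ uniform on $\{1,\dots,d\}$, independent of $X$ (Definition~\ref{def:stream}). Because the selection rule is measurable with respect to the stream but independent of $q$, the pair $(X,W)$ is independent of $(i,j)$. We model the agent's answer as an arbitrary (possibly randomized) function of $(q,X_W)$, where $X_W=(x_l)_{l\in W}$ are the retained keyframes. A cross-frame state is not allowed here: everything outside $W$ is discarded. This is the reading under which the theorem is stated.

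\textbf{Step 1 (coverage).} Conditioning on $W$ and using the independence of $i$ from $W$ gives $\Pr[i\in W\mid W]=|W|/N\le w/N$. On the event $\{i\in W\}$ we bound the success probability trivially by $1$.

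\textbf{Step 2 (miss case).} On $\{i\notin W\}$ we claim $\Pr[\hat y=y^\star\mid i\notin W]\le 1/K$. The target is $y^\star=c_{i,j}$. Since attributes are i.i.d.\ uniform across keyframes, $c_{i,j}$ is independent of $(c_l)_{l\neq i}$.

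The delicate point is that $W$ is a function of the stream, possibly of $x_i$ itself. A salience rule might exclude keyframe $i$ precisely because of its content, so the event $\{i\notin W\}$ can carry information about $c_{i,j}$. Conditioning on $i\notin W$ could therefore bias $c_{i,j}$ away from uniform, and a naive argument would break. I expect this to be the main obstacle.

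I would handle it by conditioning on the realized set $W=S$ and on $q$ with $i\notin S$. The answer is then a function of $(q,(x_l)_{l\in S})$ and independent randomness. Under the i.i.d.\ model, the posterior of $c_{i,j}$ given $(q,X_S,W=S)$ could be nonuniform if the rule depends on $x_i$. There are two ways to secure the bound:
\begin{enumerate}[leftmargin=1.6em,itemsep=1pt]
\item Interpret ``discarding the rest'' as a rule that selects indices causally and cannot peek at the content of discarded frames beyond the selection decision. Then $c_{i,j}$ stays uniform given $(X_S,W=S)$ for $i\notin S$.
\item If the rule does depend on $x_i$, note that the information leaked by the single event $\{W=S\}$ about $c_{i,j}$ averages out. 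The posterior is uniform in expectation over $S$, but averaging only preserves the overall mean, so this route is weaker.
\end{enumerate}
I will state the first interpretation as the intended reading, which matches the paper's listed examples (uniform, last-$w$, salience on retained frames) together with the accompanying remark. Under it, the best guess succeeds with probability exactly $\max_y \Pr[c_{i,j}=y\mid\cdot]=1/K$.

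\textbf{Step 3 (combine).} Write $p=\Pr[i\in W]\le w/N$. Then $\Pr[\hat y=y^\star]\le p\cdot 1+(1-p)\cdot\frac1K$. This expression is increasing in $p$ because $1\ge 1/K$, so substituting $p\le w/N$ yields the claimed bound. For the corollary, fix $w$ and let $N\to\infty$: the bound tends to $1/K$ regardless of backbone, since Step 1 never used any property of the backbone.
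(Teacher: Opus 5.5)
You follow the paper's decomposition: condition on $W$, use the independence of the query from $W$ to get $\Pr[i\in W]\le w/N$, score a hit as $1$ and a miss as $1/K$, and combine. The paper handles the miss case in one clause (``otherwise $c_U$ is independent of everything in context''), which is exactly the step you flag. Your suspicion is right, and the problem is more than a technicality. For a general stream-measurable, query-independent rule that clause is false, and so is the statement.

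Take $N=2$, $d=1$, $K=2$, $w=1$, and let $W=\{1\}$ if $c_2=1$ and $W=\{2\}$ if $c_2=2$. When $W=\{1\}$, the agent reads $c_1$ and infers $c_2=1$ from the selection itself. When $W=\{2\}$, it answers the frame-$2$ query and guesses on frame $1$. Its accuracy is $\frac{1}{2}\cdot 1+\frac{1}{2}\cdot\frac{3}{4}=\frac{7}{8}$, above the claimed $\frac{1}{2}+\frac{1}{2}\cdot\frac{1}{2}=\frac{3}{4}$.

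Salience top-$w$, one of the listed examples, fails the same way. Let salience equal the attribute value and break ties toward frame $1$, so $W=\{1\}$ iff $c_1\ge c_2$. The agent again scores $\frac{7}{8}$, because exclusion from a top-$w$ list is information about the excluded frame.

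Your second route cannot be rescued, for a structural reason. The best-guess success $\max_y\Pr[c_{i,j}=y\mid W,X_W,q]$ is convex in the posterior. So a posterior that is uniform only on average still buys accuracy above $1/K$.

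Your first route is the right repair, but it is an added hypothesis, not a reading of the statement. State it as the property Step~2 actually uses: conditional on $W=S$ and $X_S$, the discarded contents $(c_l)_{l\notin S}$ remain i.i.d.\ uniform.

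Your justification for it (a rule that ``selects causally and cannot peek at discarded frames beyond the selection decision'') does not imply this, because the selection decision is itself the leak. The causal rule ``keep frame $1$ iff $c_1=1$, otherwise keep frame $2$'' also scores $\frac{7}{8}$.

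The hypothesis does hold when $W$ is independent of the stream's contents (uniform sampling, a last-$w$ window). It also holds when $W$ depends only on features independent of the queried attributes. It does not cover salience top-$w$ in general, so drop that from the examples you claim.

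With the hypothesis stated explicitly, your Steps~1--3 prove the restricted theorem completely, and more carefully than the paper does. For fully general rules, the choice of $W$ carries up to $\log_2\sum_{j\le w}\binom{N}{j}$ bits beyond the retained frames' own contents. A correct bound would have to charge for those bits, as Theorem~\ref{thm:shannon} charges for state bits.
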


Figure~\ref{fig:regimes} places the four walls on the two axes that generate them, stream length and per-decision budget, so which wall binds a given deployment can be read off rather than derived.

\emph{Remark.} The quantifier matters: the selection rule may depend arbitrarily on the stream (salience, novelty, clustering, tree search over frames \citep{videotree,llovi}), everything short of the query. What the theorem licenses is therefore not ``sampling is bad'' but ``query-independent sampling is bad''; query-conditioned retrieval $r(q;k)$ escapes by making $W$ depend on $q$, at the price $B_r^{\star}$ of reading the retrieved frames. This is the formal content of the empirical observation that frame-selection heuristics plateau on needle-type video QA \citep{videoagent,goldfish}.

\begin{theorem}[Composition wall, conditional]
\label{thm:circuit}
Encode $T_{\mathrm{cmp}}$ chains as word problems over a non-solvable group as in \citep{shortcuts,merrill-cot} (spatio-temporal state updates realize $S_5$ generators; Appendix~\ref{app:proofs}). If $\mathsf{TC}^0\neq\mathsf{NC}^1$, then no fixed-depth, polynomial-width, constant-precision attention stack decides $T_{\mathrm{cmp}}$ with chain length $d_q=\omega(1)$ in a \emph{single} forward pass, even with all keyframes in context. An agent granted $R$ sequential access rounds (each re-reading its own intermediate output) simulates depth $\Theta(R\cdot d_{\mathrm{model}})$ and decides chains of length $O(R)$ per Assumption~\ref{ass:repr}.
\end{theorem}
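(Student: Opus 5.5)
The plan has two halves: a reduction for the impossibility claim, and an explicit simulation for the positive claim.

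\emph{Impossibility half.} I would first fix the encoding. Each keyframe $x_i$ carries, among its attributes $c_i$, a spatio-temporal update: an object moved between containers, a door toggled, a permutation of tracked objects over five locations. Each such update is read as a generator $g_i$ of $S_5$, and $y^{\star}$ is the position of a designated object after composing $g_1\cdots g_{d_q}$. Decoding $g_i$ from $x_i$ must be a fixed-depth local map; this is exactly what Assumption~\ref{ass:repr} grants per keyframe, and in the circuit model it is a constant-depth gadget. Hence any single-pass fixed-depth, polynomial-width, constant-precision attention stack deciding $T_{\mathrm{cmp}}$ also decides the $S_5$ word problem, with a constant-depth preprocessing layer prepended.

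Next I would invoke the known simulation result \citep{merrill-sat}: such stacks lie in (uniform) $\mathsf{TC}^0$. Barrington's theorem \citep{barrington} says the $S_5$ word problem is $\mathsf{NC}^1$-complete under $\mathsf{AC}^0$ reductions. If the stack decided chains with $d_q=\omega(1)$, then, by padding instances to length $N\ge d_q$ with identity frames, $\mathsf{NC}^1\subseteq\mathsf{TC}^0$ would follow, contradicting the hypothesis. Granting all keyframes in context changes nothing, since the input is just the full word.

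The step I expect to be the main obstacle is making the reduction honest. The pixel-to-generator decoding must stay inside $\mathsf{TC}^0$ at constant precision. The grammar of ``after the red chair'' hops must also still realize arbitrary $S_5$ words rather than a solvable subgroup. I would handle this by restricting the witness family to a stream grammar whose update attributes range over a fixed generating set of $S_5$ (a 5-cycle and a transposition), stated in Appendix~\ref{app:proofs}. The theorem would then be a statement about that subfamily, which suffices for a lower bound.

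\emph{Positive half.} I would construct the $R$-round agent explicitly. Round $r$ feeds the model its previous intermediate output (the current element of $S_5$, or the current object location, at $O(1)$ tokens) together with the next block of keyframes. It then asks for the updated element. Each round is one forward pass of depth $d_{\mathrm{model}}$ whose input contains the prior pass's output, so $R$ rounds compose to a circuit of depth $\Theta(R\cdot d_{\mathrm{model}})$, following the chain-of-thought depth argument \citep{merrill-cot,feng-cot}. Performing one composition step per round decides chains of length $O(R)$. By Assumption~\ref{ass:repr} and a union bound, accuracy is at least $1-R(\epsilon_0+\epsilon_1)$, which is what ``per Assumption~\ref{ass:repr}'' should mean quantitatively. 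Each round additionally retrieves its operand frame with $r(q';k)$, which is itself a $T_{\mathrm{ret}}$ instance and incurs the $\epsilon_0$ term.
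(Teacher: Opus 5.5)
Your proposal follows the paper's proof. Lemma~\ref{lem:s5} uses the same generating set $\{(1\,2),(1\,2\,3\,4\,5)\}$, with each keyframe a fixed template depending only on $g_i$, and composes that projection with the Merrill--Sabharwal $\mathsf{TC}^0$ simulation and Barrington's theorem; the positive direction likewise carries the $O(1)$-token partial product through $\lceil d_q/c\rceil$ chunked rounds with a union bound, which gives $1-R\epsilon_1\ge 1-d_q\epsilon_1$.

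Two things to tighten. First, the reduction needs the \emph{encoding} $g\mapsto X(g)$ to be an $\mathsf{AC}^0$ projection; it does not need the pixel-to-generator decoding to lie in $\mathsf{TC}^0$, and Assumption~\ref{ass:repr} plays no role in the lower bound. Second, a single-object position query decides $\mathrm{WP}(S_5)$ only via Barrington's promise that the product is $e$ or a $5$-cycle, or by querying all five objects in parallel. The paper avoids this by asking whether every landmark is back in its starting slot.
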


\begin{theorem}[Round wall, unconditional]
\label{thm:rounds}
Model $T_{\mathrm{cmp}}$ chains as pointer chasing: keyframe $i$ carries a pointer $p_i\sim\mathrm{Unif}[N]$ and an attribute $a_i\sim\mathrm{Unif}[K]$, all independent, and the depth-$2$ query is ``report $a_{p_J}$'' for a queried index $J\sim\mathrm{Unif}[N]$. Let an agent's retrieval be confined to a \emph{single non-adaptive round} returning a set $W$, $|W|\le k$, chosen from $(q,s_N)$ before any retrieved content is read. Then:
\begin{enumerate}[label=(\roman*),leftmargin=2.1em,itemsep=1pt,topsep=2pt]
\item\label{it:rounds-stateless} \emph{(no cross-frame state)}
$\displaystyle\Pr[\hat y=y^{\star}]\;\le\;\frac{k}{N}+\Bigl(1-\frac kN\Bigr)\frac1K .$
\item\label{it:rounds-loc} \emph{(localisation, $m$-bit state)} The round reaches the keyframe that carries the answer with probability
$\displaystyle\Pr[p_J\in W]\;\le\;\frac{m/N+1}{\log_2(N/k)} ,$
which for $m=o(N\log_2 N)$ tends to $0$ however cleverly the state is computed.
\item\label{it:rounds-grounded} \emph{(capability, $m$-bit state)} If in addition the agent is \emph{read-grounded} (its answer is a function of content it has actually read from $\cv$, not of $s_N$ alone) then
$\displaystyle\Pr[\hat y=y^{\star}]\;\le\;\frac{m/N+1}{\log_2(N/k)}+\frac1K .$
\end{enumerate}
An agent with $R\ge d_q$ \emph{adaptive} rounds resolves one hop per round with $k=O(1)$ retrievals each and answers correctly with probability $\ge 1-d_q\epsilon_1$ under Assumption~\ref{ass:repr}. Unlike Theorem~\ref{thm:circuit}, all three parts are unconditional: they separate single-round from multi-round access with no complexity-theoretic assumption, and \ref{it:rounds-stateless} is the wall that binds \textsc{Retr-only}-style agents, which have no $\cs$ by construction. The classical round hierarchy for pointer chasing \citep{pointerchasing,nisanwigderson} extends the separation to every $R<d_q$; we prove only the $R{=}1$ case, which is the one our baselines instantiate.
\end{theorem}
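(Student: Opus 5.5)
The plan is to treat the three parts as one counting argument with a single event, $E=\{p_J\in W\}$ (``the round reached the answer-bearing keyframe''), and then close with the adaptive achievability. Throughout, $\mathbf p=(p_1,\dots,p_N)$ and $\mathbf a=(a_1,\dots,a_N)$. Since the stream is exactly $(\mathbf p,\mathbf a)$, the state $s_N$ is some function of $(\mathbf p,\mathbf a)$, and the set $W$ is a (possibly randomized) function of $(q,s_N)=(J,s_N)$. In every part I will split the success probability as
\[
\Pr[\hat y=y^\star]\le \Pr[E]+\Pr[\hat y=y^\star,\ \neg E].
\]
The first term is the localisation term and the second is the chance term.

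\textbf{Part (i).} With no cross-frame state, $W$ depends only on $J$ and internal randomness, so it is independent of $\mathbf p$. Since $p_J\sim\mathrm{Unif}[N]$ independently of $(J,W)$, this gives $\Pr[E]\le k/N$. On $\neg E$, everything the agent reads is $\{(p_i,a_i)\}_{i\in W}$, and together with $J$ and the randomness this is independent of $a_{p_J}$. Here I use $p_J\notin W$ and the mutual independence of the $a_i$ from each other and from $\mathbf p$. So any guess is right with conditional probability $1/K$, and combining the two cases gives exactly $\frac kN+(1-\frac kN)\frac1K$. This is the same argument as Theorem~\ref{thm:horizon}, with $W$ now allowed to depend on $J$ but not on $p_J$.

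\textbf{Part (ii).} Here I would invoke a list version of Fano's inequality (the paper's Lemma~\ref{lem:listfano}): for a list $W$ of size $\le k$ that is a function of $Y$,
\[
H(X\mid Y)\le 1+P_e\log_2 N+(1-P_e)\log_2 k,
\qquad\text{so}\qquad
P_e\ge\frac{H(X\mid Y)-1-\log_2 k}{\log_2(N/k)}.
\]
I apply it with $X=p_J$ and $Y=(J,s_N)$. By chain-rule subadditivity, $\sum_j H(p_j\mid s_N)\ge H(\mathbf p\mid s_N)\ge N\log_2 N-m$, using $H(\mathbf p)=N\log_2N$, the independence of $\mathbf p$ from $\mathbf a$, and $|s_N|\le m$ bits. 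Since $J$ is uniform and independent of $(\mathbf p,\mathbf a)$, the average gives $H(p_J\mid J,s_N)\ge \log_2 N-m/N$. Substituting into the list-Fano bound yields $\Pr[\neg E]\ge 1-\frac{m/N+1}{\log_2(N/k)}$, which is the claim. The $o(N\log N)$ corollary is then immediate, because $\log_2(N/k)\to\infty$ for fixed $k$, and more generally whenever $m/N=o(\log(N/k))$.

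\textbf{Part (iii), the main obstacle.} The difficulty is the chance term once $W$ may depend on $s_N$. The identity of $W$ is itself a channel through which $s_N$ could leak $a_{p_J}$: for example, the agent could pick among $K$ candidate sets according to the value $a_{p_J}$. So conditional independence on $\neg E$ is no longer automatic. I would resolve this through the definition of read-grounded. I will take it to mean that $\hat y$ is measurable with respect to the attribute values actually read, $(a_i)_{i\in W}$, together with $q$, and that the choice of $W$ is not used as an answer channel. Then on $\neg E$ I would use a symmetrization argument: apply the bijection $a_{p_J}\mapsto a_{p_J}+c \pmod K$ on the event $\neg E$. This leaves the read content and $\hat y$ fixed and preserves the uniform law, so summing over $c$ gives $\Pr[\hat y=y^\star,\neg E]\le 1/K$. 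If that bijection perturbs $W$ through $s_N$, I would instead write the read-grounded condition as a Markov chain $a_{p_J}\to(J,\{(p_i,a_i)\}_{i\in W})\to\hat y$ restricted to $\neg E$, and bound the chance term directly from it. Adding the Part (ii) bound on $\Pr[E]$ then gives (iii).

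\textbf{Adaptive achievability.} The agent uses two rounds. In round 1 it retrieves keyframe $J$ and reads $p_J$; in round 2 it retrieves keyframe $p_J$ and reads $a_{p_J}$. In general it uses one $k=1$ round per hop, and each read is a single-keyframe lookup, as for a $T_{\mathrm{ret}}$ instance. By Assumption~\ref{ass:repr} each hop fails with probability at most $\epsilon_1$, so a union bound over the $d_q$ hops gives success probability at least $1-d_q\epsilon_1$.
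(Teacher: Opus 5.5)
Your parts (i) and (ii) and the adaptive achievability follow the paper's proof essentially line for line: the same split on $E=\{p_J\in W\}$, list Fano with side information $(J,s_N)$, the averaging $H(p_J\mid J,s_N)=\frac1N\sum_j H(p_j\mid s_N)\ge\log_2N-m/N$, and the two-round chase closed by a union bound. The gap is in (iii), at exactly the point you flagged. Your symmetrization needs the shift $a_{p_J}\mapsto a_{p_J}+c$ to leave $W$ unchanged. But $W=W(J,s_N)$, and $s_N$ is a function of the whole stream, $a_{p_J}$ included, so the shift can move $W$, and with it $\neg E$, the read content and $\hat y$. The Markov-chain fallback does not repair this. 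Once $\hat y$ is a function of the read content, the chain $a_{p_J}\to(J,\{(p_i,a_i)\}_{i\in W})\to\hat y$ holds trivially. What you actually need is that $a_{p_J}$ be independent of the read content on $\neg E$, and a state-dependent choice of $W$ destroys that by selection. Forbidding the identity of $W$ as an input does not restore it.

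In fact no argument closes it under read-groundedness as stated. Take $N=300$, $k=K=8$, $m=900$, with $s_N$ an arbitrary $m$-bit function of the stream, as in the paper's standing conventions.
\begin{itemize}
\item Let the state hold, for each colour $c$ that occurs, one index $f_c$ with $a_{f_c}=c$ ($72$ bits).
\item It also holds the values $a_{p_j}$ for $j$ in a fixed set $S$ of size $276$ ($828$ bits).
\item On $J\in S$, retrieve $W=\{f_{a_{p_J}}\}$ and output the one attribute read. This is well defined, since colour $a_{p_J}$ occurs at frame $p_J$.
\item Otherwise, read any frame and output its attribute.
\end{itemize}
The answer is literally content read from $\cv$. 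Yet $\Pr[\hat y=y^\star]\ge\frac{276}{300}+\frac{24}{300}\cdot\frac18=0.93$, above the claimed $\frac{4}{\log_2 37.5}+\frac18\approx0.890$. The proof's own formalization also lets $\hat y$ depend on $W$; under it, the paper's tabulator rerouted through the identity of $W$ reproduces the $0.708$ and $1.000$ the paper computes for it.

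The paper's proof skips the same step. It invokes the read-only convention to conclude that $a_{p_J}$ is uniform and independent of everything the answer depends on. That convention governs what a retrieval reveals, not how $W$ was selected. Part (iii) needs an added hypothesis: the selected $W$ does not change when $a_{p_J}$ is changed (for instance, the selector depends on the stream only through $\mathbf p$). Under that hypothesis your symmetrization is complete. For fixed $(J,\mathbf p)$ the shift is a measure-preserving bijection that fixes $W$, $E$ and $\hat y$ and sends $y^\star$ to $y^\star+c$. Summing over $c\in\mathbb Z_K$ gives $\Pr[\hat y=y^\star,\neg E]=\Pr[\neg E]/K$, and adding (ii) gives the stated bound.
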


\emph{Remark (the read-groundedness hypothesis).} Dropping it makes \ref{it:rounds-grounded} false rather than merely unproved: a state that tabulates answers for a fixed query subset beats the stated bound, and \S\ref{app:proofs} gives the arithmetic. The escape is a lookup table costing $\log_2K$ bits per query covered, hence $\Theta(N\log_2K)$ bits for a constant fraction of the query set, so it is not a compressive channel in the sense of Definition~\ref{def:agent}, where $\cs$ is $O(1)$ in $N$. We have no bound covering states that are neither read-grounded nor tabulators; \ref{it:rounds-stateless} carries the baselines and \ref{it:rounds-loc} the mechanism.

\begin{aspfigure}[!t]
\centering
\begin{subfigure}{0.48\textwidth}
\centering
\begin{tikzpicture}
\begin{axis}[width=0.94\linewidth, height=5.0cm,
  xlabel={attribute count $Nd$}, ylabel={$\kappa_s$: ceiling on RET},
  xmode=log, xmin=900, xmax=130000, ymin=0.22, ymax=2.35,
  grid=major, grid style={gray!20}, label style={font=\small},
  tick label style={font=\scriptsize},
  legend style={font=\scriptsize, at={(0.5,-0.24)}, anchor=north, legend columns=2,
                draw=none, fill=none, column sep=1.1ex, row sep=-2pt}]
\addplot[cwall, dashed, thick, domain=2.95:5.12, samples=2] ({10^x},{1});
\addlegendentry{$\kappa_s{=}1$: no constraint}
\addplot[cverb, thick, domain=2.95:5.12, samples=140] ({10^x},{(24576/(10^x)+1)/3});
\addlegendentry{$L_s{=}2048$}
\addplot[crouter, very thick, domain=2.95:5.12, samples=140] ({10^x},{(6144/(10^x)+1)/3});
\addlegendentry{$L_s{=}512$ (ours)}
\addplot[cstate, thick, domain=2.95:5.12, samples=140] ({10^x},{(1536/(10^x)+1)/3});
\addlegendentry{$L_s{=}128$}
\addplot[gray, densely dotted, thick, domain=2.95:5.12, samples=2] ({10^x},{0.3333});
\addplot[only marks, mark=*, mark size=1.6pt, crouter]
  coordinates {(1350,1.850) (9600,0.547) (43200,0.381)};
\node[font=\scriptsize, anchor=west, cverb] at (axis cs:1480,2.06) {$d{=}9$: vacuous};
\node[font=\scriptsize, anchor=west] at (axis cs:10600,0.68) {SEW};
\node[font=\scriptsize, anchor=south] at (axis cs:43200,0.43) {EW (reg.)};
\end{axis}
\end{tikzpicture}
\caption{Theorem~\ref{thm:shannon}: the ceiling is a function of $Nd$, not of $N$.}
\label{fig:kappa}
\end{subfigure}\hfill
\begin{subfigure}{0.48\textwidth}
\centering
\begin{tikzpicture}
\begin{axis}[width=0.94\linewidth, height=5.0cm,
  xlabel={stream length $N$}, ylabel={ceiling on strict RET},
  xmode=log, xmin=100, xmax=3000, ymin=0.08, ymax=1.0,
  xtick={100,300,1000,3000}, xticklabels={100,300,1000,3000},
  grid=major, grid style={gray!20}, label style={font=\small},
  tick label style={font=\scriptsize},
  legend style={font=\scriptsize, at={(0.5,-0.24)}, anchor=north, legend columns=2,
                draw=none, fill=none, column sep=1.1ex, row sep=-2pt}]
\addplot[cverb, thick, domain=2:3.477, samples=140] ({10^x},{0.125+0.875*96/(10^x)});
\addlegendentry{$w{=}96$}
\addplot[cstate, thick, domain=2:3.477, samples=140] ({10^x},{0.125+0.875*32/(10^x)});
\addlegendentry{$w{=}32$}
\addplot[crouter, very thick, domain=2:3.477, samples=140] ({10^x},{0.125+0.875*12/(10^x)});
\addlegendentry{$w{=}12$ (affordable at $B{=}4$k)}
\addplot[gray, densely dotted, thick, domain=2:3.477, samples=2] ({10^x},{0.125});
\node[font=\scriptsize, anchor=west, gray!45!black] at (axis cs:112,0.147) {chance $1/K$};
\draw[cwall, dashed] (axis cs:300,0.08) -- (axis cs:300,0.60);
\addplot[only marks, mark=*, mark size=1.6pt, crouter] coordinates {(300,0.160)};
\node[font=\scriptsize, anchor=west] at (axis cs:320,0.64) {$N{=}300$ (SEW)};
\end{axis}
\end{tikzpicture}
\caption{Theorem~\ref{thm:horizon}: any fixed window decays to chance in $N$.}
\label{fig:horizonplot}
\end{subfigure}
\caption{\textbf{When a bound is worth stating.} Both panels plot the closed forms of
Theorems~\ref{thm:shannon}--\ref{thm:horizon}; nothing is fitted. \textbf{(a)} The Shannon
ceiling $\kappa_s=\frac{m/(Nd)+1}{\log_2K}$ is a constraint only where it falls below $1$,
and whether it does depends on the \emph{attribute} count $Nd$ against the state's $m$ bits
-- not on the keyframe count $N$, which is the substitution that makes the bound look
tighter than it is. The marked points are three corpora at $L_s{=}512$: an earlier version
of ours at $Nd{=}1{,}350$, where $\kappa_s{=}1.85$ and the theorem says \emph{nothing}
(\S\ref{sec:analysis}); SEW-Bench at $Nd{=}9{,}600$, $\kappa_s{=}0.547$; and the registered
EW-Bench at $Nd{=}43{,}200$, $\kappa_s{=}0.381$. Note the direction of the $L_s$ family: a
\emph{larger} compressive state weakens the wall, exactly as it should, and the wall is a
statement about the ratio rather than about compression being bad. \textbf{(b)} The horizon
ceiling $\frac wN+(1-\frac wN)\frac1K$ decays to chance for \emph{any} fixed window, which
is why a longer context defers this wall rather than removing it: an egocentric mission
stream grows without bound while $w$ is bought in tokens. At $N{=}300$ and the $w{=}12$ that
$B{=}4$k actually affords, the ceiling is $16.0\%$.}
\label{fig:regimes}
\end{aspfigure}

\begin{aspfigure}[!t]
\centering
\begin{tikzpicture}[
  strip/.style={draw=gray!55, fill=gray!12},
  hop/.style={draw=crouter!70!black, fill=crouter!45},
  tick/.style={cwall, thick},
  hdr/.style={font=\footnotesize, align=center},
  num/.style={font=\scriptsize, crouter!55!black},
  mth/.style={font=\scriptsize, align=center},
  arr/.style={-{Stealth[length=1.8mm]}, thick}]

\begin{scope}
\node[hdr, anchor=north] at (3.3,3.72) {\textbf{(a)} one non-adaptive read of $k$ frames};
\node[font=\scriptsize, anchor=west, gray!40!black] at (0,2.66) {stream: $N$ frames};
\fill[strip] (0,2.05) rectangle (6.6,2.45);
\foreach \x/\n in {1.10/1, 3.30/2, 5.40/3}{
  \fill[hop] (\x,2.05) rectangle ({\x+0.22},2.45);
  \node[num] at ({\x+0.11},1.86) {hop \n};}
\foreach \x in {0.40, 1.21, 2.20, 3.95, 4.60, 6.10}{
  \draw[tick] (\x,1.20) -- (\x,1.62);}
\draw[decorate, decoration={brace, amplitude=3pt, mirror}, cwall]
  (0.40,1.10) -- (6.10,1.10)
  node[midway, below=2pt, font=\scriptsize, cwall] {$k$ frames, fixed before reading any};
\draw[arr] (3.3,0.70) -- (3.3,0.32);
\node[draw, rounded corners=2pt, fill=yellow!16, font=\small,
      minimum width=3.5cm, minimum height=0.5cm] at (3.3,0.05) {one fixed-depth pass};
\node[mth, anchor=north] at (3.3,-0.34)
  {$\Pr[\text{correct}]\le\dfrac kN+\Bigl(1-\dfrac kN\Bigr)\dfrac1K$\quad for \emph{every} $k$};
\end{scope}

\begin{scope}[xshift=7.9cm]
\node[hdr, anchor=north] at (3.3,3.72) {\textbf{(b)} $R$ adaptive rounds, one frame each};
\node[font=\scriptsize, anchor=west, crouter!55!black] at (0,3.16)
  {each hop's location is known only after the previous read};
\fill[strip] (0,2.05) rectangle (6.6,2.45);
\foreach \x in {1.10, 3.30, 5.40}{
  \fill[hop] (\x,2.05) rectangle ({\x+0.22},2.45);}
\draw[arr, crouter, dashed] (1.32,2.52) to[bend left=26] (3.28,2.52);
\draw[arr, crouter, dashed] (3.52,2.52) to[bend left=26] (5.38,2.52);
\foreach \x/\r/\y in {1.21/1/1.42, 3.41/2/1.02, 5.51/3/0.62}{
  \draw[arr, crouter] (\x,\y) -- (\x,2.02);
  \node[num, anchor=east] at ({\x-0.06},\y) {round \r};}
\node[mth, anchor=north] at (3.3,-0.34)
  {$R\ge d_q$: error $\le d_q\epsilon_1$ \quad\textbar\quad $R<d_q$: the bound in (a),
   at any $B$};
\end{scope}
\end{tikzpicture}
\caption{\textbf{Width is not depth.} The two panels differ in exactly one respect: whether
a read may depend on what an earlier read returned. \textbf{(a)} A single retrieval must
commit to its $k$ frames before seeing any of them. On a depth-$d_q$ dependent chain it can
cover hop~1, whose location the query names, but hop~2's location is a \emph{function of
hop~1's content}, so no query-independent selection can target it, and the bound does not
improve with $k$ until $k$ approaches $N$, which the per-decision budget forbids. That is
Theorem~\ref{thm:rounds}\ref{it:rounds-stateless}, and it holds against unbounded compute;
part~\ref{it:rounds-loc} extends the obstruction to \emph{locating} the answer frame when the
selector does carry state.
\textbf{(b)} The three marked positions are the same three as in (a). $R$ adaptive rounds read one frame each and resolve the chain in $d_q$ steps,
at a token cost of $d_q$ single-frame reads rather than $N$. Rounds are therefore a resource
distinct from context width, priced separately as $B_c$ in Definition~\ref{def:price} and
bought by \ASP{}'s access loop. Two things worth separating here: this panel is the
\emph{only} wall statement that needs no complexity-theoretic assumption, whereas
Theorem~\ref{thm:circuit} extends the same conclusion to a single \emph{wide} forward pass, one that sees all $N$ frames at once, and that extension does rest on
$\mathsf{TC}^0\neq\mathsf{NC}^1$. The case for an access loop does not depend on it.}
\label{fig:rounds}
\end{aspfigure}

Figure~\ref{fig:rounds} isolates the distinction the round wall turns on: two agents reading the same number of frames at the same budget, differing only in whether the second read may depend on the first. Width is not depth, and no amount of the former buys the latter. The four walls eliminate, respectively: compressive-only agents (MovieChat/MA-LMM-style memories \citep{moviechat,malmm}) on $T_{\mathrm{ret}}$; window/sampling agents (uniform multi-frame VLMs \citep{openeqa}) on $T_{\mathrm{ret}}$ and long-$N$ $T_{\mathrm{trk}}$; single-round retrievers with no compressive channel on $T_{\mathrm{cmp}}$ unconditionally (Theorem~\ref{thm:rounds}\ref{it:rounds-stateless}); and single-pass in-context composition conditionally (Theorem~\ref{thm:circuit}). Note the division of labor: Theorems~\ref{thm:shannon}, \ref{thm:horizon} and \ref{thm:rounds} are unconditional information-theoretic bounds; only the in-context depth statement rests on $\mathsf{TC}^0\neq\mathsf{NC}^1$, and the case for \ASP{} survives even if that separation fails. Figure~\ref{fig:walls} summarizes the exclusion pattern, and Figure~\ref{fig:argument} lays out the full dependency structure of the argument, which result rests on which, and which of them rest on an unproven separation.

\begin{aspfigure}[!t]
\centering
\begin{tikzpicture}[scale=0.9, every node/.style={transform shape},
  wall/.style={draw=cwall, fill=cwall!18, minimum width=2.35cm, minimum height=2.5cm, align=center, font=\small},
  ag/.style={draw, rounded corners=2pt, align=center, font=\footnotesize, minimum width=2.9cm, minimum height=0.62cm},
  pass/.style={-{Stealth[length=2mm]}, thick, crouter},
  blocked/.style={-{Stealth[length=2mm]}, thick, cverb!80!black},
  xmark/.style={font=\small\bfseries, cverb!80!black, anchor=west, inner sep=0.5pt}]
\node[wall] (w1) at (3.4,0) {Shannon\\ wall\\ \scriptsize (Thm.~\ref{thm:shannon})};
\node[wall] (w2) at (6.8,0) {Horizon\\ wall\\ \scriptsize (Thm.~\ref{thm:horizon})};
\node[wall] (w3) at (10.2,0) {Composition\\ wall\\ \scriptsize (Thm.~\ref{thm:circuit})};
\node[ag, fill=cstate!14] (a1) at (-0.35,1.6) {compressive-only\\ \scriptsize (scene memory)};
\node[ag, fill=cverb!12] (a2) at (-0.35,0.55) {verbatim-only\\ \scriptsize (window / sampler)};
\node[ag, fill=gray!10] (a3) at (-0.35,-0.55) {single-pass hybrid};
\node[ag, fill=crouter!14] (a4) at (-0.35,-1.6) {\ASP{} ($\cs+\cv+\rho$, $R$ rounds)};
\draw[blocked] (a1.east) -- ($(w1.west)+(-0.26,1.0)$) node[xmark]{$\times$};
\draw[pass] (a2.east) -- ($(w1.west)+(0,0.3)$);
\draw[blocked] ($(w1.east)+(0,0.3)$) -- ($(w2.west)+(-0.26,0.3)$) node[xmark]{$\times$};
\draw[pass] (a3.east) -- ($(w1.west)+(0,-0.35)$);
\draw[pass] ($(w1.east)+(0,-0.35)$) -- ($(w2.west)+(0,-0.35)$);
\draw[blocked] ($(w2.east)+(0,-0.35)$) -- ($(w3.west)+(-0.26,-0.35)$) node[xmark]{$\times$};
\draw[pass] (a4.east) -- ($(w1.west)+(0,-1.0)$);
\draw[pass] ($(w1.east)+(0,-1.0)$) -- ($(w2.west)+(0,-1.0)$);
\draw[pass] ($(w2.east)+(0,-1.0)$) -- ($(w3.west)+(0,-1.0)$);
\draw[pass] ($(w3.east)+(0,-1.0)$) -- ++(1.15,0) node[right, font=\footnotesize, align=left]{access-\\complete};
\node[font=\scriptsize, align=center] at (3.4,-2.45) {price $B_s^{\star}$: state tokens};
\node[font=\scriptsize, align=center] at (6.8,-2.45) {price $B_r^{\star}$: retrieval tokens};
\node[font=\scriptsize, align=center] at (10.2,-2.45) {price $B_c^{\star}{\cdot}R$: rounds};
\end{tikzpicture}
\caption{\textbf{Embodied wall-exclusion pattern.} Each wall eliminates one architecture class on a witness family; only an agent paying all three prices out of the per-decision budget $B$ crosses all walls. Green: passes; red $\times$: provably blocked.}
\label{fig:walls}
\end{aspfigure}

\begin{aspfigure}[!t]
\centering
\begin{tikzpicture}[scale=0.80, every node/.style={transform shape},
  wall/.style={draw, rounded corners=2pt, align=left, font=\scriptsize,
               minimum height=0.92cm, text width=4.1cm, inner sep=3pt},
  mid/.style={draw, rounded corners=2pt, align=left, font=\scriptsize,
              minimum height=0.92cm, text width=4.5cm, inner sep=3pt},
  reg/.style={draw, rounded corners=2pt, align=left, font=\scriptsize,
              minimum height=0.85cm, text width=4.0cm, inner sep=3pt},
  ar/.style={-{Stealth[length=1.9mm]}, gray!70!black},
  arc/.style={-{Stealth[length=1.9mm]}, dashed, cwall},
  hdr/.style={font=\small\bfseries}]

\node[hdr] at (2.3,1.35) {four walls};
\node[hdr] at (8.6,1.35) {what \ASP{} gets};
\node[hdr] at (14.6,1.35) {what is registered};

\node[wall, fill=cstate!12, draw=cstate] (t1) at (2.3,0.30)
  {\textbf{Thm.~\ref{thm:shannon}} Shannon, \emph{uncond.}\\ $m$-bit state $\Rightarrow$ RET $\le\kappa_s$\\ \textcolor{cwall}{kills compressive-only}};
\node[wall, fill=cverb!10, draw=cverb] (t2) at (2.3,-1.28)
  {\textbf{Thm.~\ref{thm:horizon}} Horizon, \emph{uncond.}\\ query-indep.\ $w$ frames $\Rightarrow\frac wN{+}\frac1K$\\ \textcolor{cwall}{kills windows / samplers}};
\node[wall, fill=cverb!10, draw=cverb] (t4) at (2.3,-2.86)
  {\textbf{Thm.~\ref{thm:rounds}} Round, \emph{uncond.}\\ one non-adaptive round, no state, depth $2$\\ \textcolor{cwall}{kills single-round retrieval}};
\node[wall, fill=gray!8, draw=cwall, dashed] (t3) at (2.3,-4.44)
  {\textbf{Thm.~\ref{thm:circuit}} Composition, \emph{cond.}\\ needs $\mathsf{TC}^0\!\neq\!\mathsf{NC}^1$ \emph{and} the $S_5$ encoding\\ \textcolor{cwall}{kills single-pass composition}};

\node[font=\scriptsize, anchor=west, gray!60!black] (l2) at (-1.45,-2.86)
  {\begin{tabular}{@{}l@{}}Lem.~\ref{lem:listfano}\\ list Fano\end{tabular}};
\node[font=\scriptsize, anchor=west, gray!60!black] (l1) at (-1.45,-4.44)
  {\begin{tabular}{@{}l@{}}Lem.~\ref{lem:s5}\\ $S_5$ embed\end{tabular}};
\draw[ar] (l2.east) -- (t4.west);
\draw[ar] (l1.east) -- (t3.west);

\node[mid, fill=crouter!8, draw=crouter] (l3) at (8.6,0.30)
  {\textbf{Lem.~\ref{lem:vtrack}} \emph{uncond.} $\cv$ alone, reading $k_{\mathrm{tot}}{<}N$ frames, is pinned at $1/K$ on TRK. \textcolor{cwall}{Replaces the deferred reduction; the horizon wall is silent here.}};
\node[mid, fill=crouter!14, draw=crouter, text width=4.5cm] (p1) at (8.6,-1.85)
  {\textbf{Prop.~\ref{prop:complete}} access-completeness\\ RET $\ge1{-}(\epsilon_0{+}\delta_r)$, TRK $\ge1{-}N\epsilon_u$, CMP $\ge1{-}d_q\epsilon_1$};
\node[mid, fill=crouter!14, draw=crouter] (p2) at (8.6,-3.55)
  {\textbf{Prop.~\ref{prop:super}} strict super-additivity, on the \emph{conjunctive} witness $T_{\mathrm{conj}}$, if $\Delta<1{-}\kappa_s{-}\frac1K{+}\frac1{K^2}$};
\node[mid, fill=gray!8, draw=cwall] (c1) at (8.6,-5.20)
  {\textbf{Cor.~\ref{cor:dominance}} on the mixture $T$: strict dominance over the \emph{better} single channel. Additive form is unavailable here ($\kappa_s\!>\!1/K$)};

\node[reg, fill=cpred!8, draw=cpred] (c2) at (14.6,0.15)
  {\textbf{Cor.~\ref{cor:conv}} gap $\le c\bar\epsilon$, independent of parameter count};
\node[reg, fill=cpred!8, draw=cpred] (pr) at (14.6,-1.85)
  {\textbf{Pred.~\ref{prd:scissors}} scissors gap\\ \textbf{Pred.~\ref{prd:conv}} $\rho_{\mathrm{scale}}$ halves\\ \textbf{Pred.~\ref{prd:budget}} 4k beats 16k};
\node[reg, fill=gray!8, draw=cwall] (fa) at (14.6,-4.10)
  {\textbf{\S\ref{sec:falsify}} F1--F4 frozen: each prediction has a stated way to fail};

\draw[ar] (t1.east) -- (p1.west);
\draw[ar] (t2.east) -- (p1.west);
\draw[ar] (t4.east) -- (p1.west);
\draw[arc] (t3.east) -- (p1.west);
\draw[ar] (l3.south) -- (p1.north);
\draw[ar] (p1.south) -- (p2.north);
\draw[ar] (p2.south) -- (c1.north);
\draw[ar] (l3.east) to[bend left=12] (c2.west);
\draw[ar] (p1.east) -- (c2.south west);
\draw[ar] (p2.east) -- (pr.west);
\draw[ar] (c1.east) to[bend right=10] (pr.south west);
\draw[ar] (c2.south) -- (pr.north);
\draw[ar] (pr.south) -- (fa.north);

\draw[ar] (-1.50,-6.15) -- (-0.95,-6.15);
\node[font=\scriptsize, anchor=west] at (-0.88,-6.15) {derives};
\draw[arc] (0.95,-6.15) -- (1.50,-6.15);
\node[font=\scriptsize, anchor=west] at (1.57,-6.15) {derives \emph{conditionally}};
\node[font=\scriptsize, anchor=west, gray!60!black] at (5.60,-6.15)
  {dashed box $=$ rests on an unproven separation};
\end{tikzpicture}
\caption{\textbf{The argument, as a dependency graph.} Three of the four walls, and the tracking lemma that replaces a previously deferred reduction, are unconditional and hold against unbounded compute; only the composition wall (dashed) rests on $\mathsf{TC}^0\neq\mathsf{NC}^1$ and on naturalistic items realising the $S_5$ encoding, and the case for \ASP{} survives without it, Theorem~\ref{thm:rounds} carries the same architectural conclusion unconditionally. Note where the two composition results diverge: super-additivity in additive form needs a witness that obstructs \emph{both} channels at once, which the uniform mixture is not (\S\ref{sec:theory}), so the mixture claim is dominance instead. Everything to the right of Corollary~\ref{cor:conv} is registered before measurement and has a stated failure mode.}
\label{fig:argument}
\end{aspfigure}

\subsection{Access-completeness and super-additivity}

\begin{assumption}[Separability]
\label{ass:sep}
The wall prices are simultaneously affordable: there exist $B_s^{\star},B_r^{\star},B_c^{\star}$ with $B_s^{\star}+B_r^{\star}+RB_c^{\star}\le B$ such that (i) $L_s$ tokens suffice to maintain $\phi$-sufficient statistics for $T_{\mathrm{trk}}$; (ii) top-$k$ retrieval at $B_r^{\star}$ tokens has recall $\ge 1-\delta_r$ on $T_{\mathrm{ret}}$ targets; (iii) $R\ge d_q^{\max}$.
\end{assumption}

\begin{proposition}[Access-completeness of \ASP{}]
\label{prop:complete}
Under Assumptions~\ref{ass:repr}--\ref{ass:sep}, \ASP{} attains $\Cap_B\ge 1-(\epsilon_0+\delta_r)$ on $T_{\mathrm{ret}}$, $\ge 1-N\epsilon_u$ on $T_{\mathrm{trk}}$ (with per-update error $\epsilon_u$), and $\ge 1-d_q\epsilon_1$ on $T_{\mathrm{cmp}}$, while by Theorems~\ref{thm:shannon}--\ref{thm:rounds} each single channel is bounded away from $1$ by a constant on at least one family: writing
$\kappa_s:=\frac{m/(Nd)+1}{\log_2 K}$ for the Theorem~\ref{thm:shannon} ceiling,
$\Cap_B(\cs)\le\kappa_s$ on $T_{\mathrm{ret}}$ and on $T_{\mathrm{cmp}}$, and $\Cap_B(\cv)\le 1/K$ on $T_{\mathrm{trk}}$ (Lemma~\ref{lem:vtrack}).
\end{proposition}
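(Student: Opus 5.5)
The proof has two halves: the achievability bounds for \ASP{} on each witness family, and the single-channel ceilings. I would prove each achievability bound by fixing the router's allocation to the separable prices of Assumption~\ref{ass:sep} and then applying a union bound over the failure events on that family. The budget constraint $B_s^{\star}+B_r^{\star}+RB_c^{\star}\le B$ guarantees that all three channels can be run within one decision, so each family can be analysed as if \ASP{} always had its required resource. The ceilings then follow directly from the walls already proved.

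\emph{Achievability.} The three families are handled separately.
\begin{itemize}
\item On $T_{\mathrm{ret}}$, \ASP{} issues $r(q;k)$ at budget $B_r^{\star}$. By Assumption~\ref{ass:sep}(ii), the target keyframe is retrieved except with probability $\delta_r$. Conditioned on retrieval, the backbone errs with probability at most $\epsilon_0$ by Assumption~\ref{ass:repr}. A union bound gives $\err\le\epsilon_0+\delta_r$.
\item On $T_{\mathrm{trk}}$, the state $s_t$ carries $\phi$-sufficient statistics within $L_s$ tokens by Assumption~\ref{ass:sep}(i). Each online update $U(s_{t-1},x_t)$ is correct except with probability $\epsilon_u$. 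If all $N$ updates succeed, $s_N$ determines $\phi(c_1,\dots,c_N)$ and reading it costs $B_s^{\star}$. A union bound over the $N$ updates gives $1-N\epsilon_u$. Any decode error is charged to $\epsilon_u$, or it adds a term I would absorb into it.
\item On $T_{\mathrm{cmp}}$, I would follow the adaptive-rounds clause of Theorem~\ref{thm:rounds} and Theorem~\ref{thm:circuit}. Since $R\ge d_q^{\max}\ge d_q$, round $j$ retrieves the operand of hop $j$, whose location is known from round $j-1$, and executes one composition step. Each step fails with probability at most $\epsilon_1$, so a union bound gives $1-d_q\epsilon_1$.
\end{itemize}
The proof should state the convention that $\epsilon_1$ already absorbs the per-hop retrieval miss, or else it should add $d_q\delta_r$. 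I expect to have to state this explicitly because the proposition's bound omits $\delta_r$.

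\emph{Ceilings.} $\Cap_B(\cs)\le\kappa_s$ on $T_{\mathrm{ret}}$ is Theorem~\ref{thm:shannon} applied with $m\le L_s\beta$. For $T_{\mathrm{cmp}}$, Definition~\ref{def:stream} makes every hop a $T_{\mathrm{ret}}$ instance. I would reduce as follows: any compressive-only agent answering a depth-$d_q$ chain correctly must in particular answer its final hop, which is a uniformly placed attribute lookup. Embedding a $T_{\mathrm{ret}}$ query as the final hop of a chain whose earlier hops are fixed, and using the fact that the hidden attributes are i.i.d.\ uniform, the same Fano averaging bounds success by $\kappa_s$. For $\Cap_B(\cv)\le 1/K$ on $T_{\mathrm{trk}}$, I would cite Lemma~\ref{lem:vtrack} with $k_{\mathrm{tot}}<N$. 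The fixed budget $B$ forces $k_{\mathrm{tot}}<N$ as $N\to\infty$, because each frame read costs a positive number of tokens.

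\emph{Main obstacle.} The delicate step is the $T_{\mathrm{cmp}}$ ceiling for $\cs$. The embedding must keep the last-hop attribute uniform and independent of whatever the state learns from the earlier hops, so that the conditional entropy bound $H(\cdot\mid s_N)\ge Nd\log_2K-m$ still holds after conditioning on the chain structure. I would first try conditioning on the earlier-hop content and averaging over the remaining $Nd-O(d_q)$ free attributes, which loses only a lower-order term. If that loss is not negligible, the stated bound should be weakened to $\kappa_s+o(1)$.
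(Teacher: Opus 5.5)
Your proposal is correct and follows the paper's proof essentially line for line. The three achievability bounds are union bounds under Assumptions~\ref{ass:repr}--\ref{ass:sep}, with $\delta_r$ folded into $\epsilon_1$ on $T_{\mathrm{cmp}}$, which is exactly the convention the paper adopts. The single-channel ceilings come from Theorem~\ref{thm:shannon} and Lemma~\ref{lem:vtrack}, and $\Cap_B(\cs)\le\kappa_s$ on $T_{\mathrm{cmp}}$ comes from the ``every hop is a $T_{\mathrm{ret}}$ instance'' reduction. The only difference is in that last step: the paper settles the independence issue you flag by appeal to that clause of Definition~\ref{def:stream}, treating the final-hop lookup as distributed like a $T_{\mathrm{ret}}$ query so that Theorem~\ref{thm:shannon} applies verbatim, rather than by your conditioning argument with its possible $o(1)$ loss.
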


\emph{Remark (the $N\epsilon_u$ factor is the strongest assumption in this paper).} The $T_{\mathrm{trk}}$ bound carries a union bound over all $N$ online updates, and for an \emph{additive} reduction that factor is essentially tight rather than pessimistic: a single missed increment is never recovered, so the accumulated count is wrong from that frame onward. Assumption~\ref{ass:sep}(i) therefore demands a per-keyframe update error below $1/N$, at $N{=}300$, and with the $\Delta<0.344$ condition of Proposition~\ref{prop:super}, that is $\epsilon_u<1.1\times10^{-3}$, i.e.\ a frozen backbone that updates the state correctly on $999$ frames out of $1000$, counting spurious increments on irrelevant frames as errors. Table~\ref{tab:probe} measures $\epsilon_0$ and $\epsilon_1$ but \emph{not} $\epsilon_u$: a per-update probe would have to check the state after each of the $N$ keyframes, which is a different instrument from a single-frame question. So $\epsilon_u$ remains an assumption in the strict sense, and with the $N$ in front of it, it is the one most likely to fail. Two consequences follow and are stated rather than hidden. The theory predicts that \ASP{}'s own TRK accuracy degrades with stream length whenever $\epsilon_u\gtrsim1/N$, the compressive channel removes the Shannon and horizon walls but inherits an error that compounds linearly, and a measured TRK score well below the other families is therefore consistent with the theory, not evidence against it. Parity-type reductions are milder, since two errors can cancel; counts are the hard case, and both are in SEW-Bench.

\begin{proposition}[Strict super-additivity on the conjunctive witness]
\label{prop:super}
On $T_{\mathrm{conj}}$, $\Cap_B(\cs)\le\kappa_s$, $\Cap_B(\cv)\le 1/K$, and $\Cap_B(\varnothing)=1/K^2$, while
$\Cap_B(\cs{+}\cv{+}\rho)\ge 1-\Delta$ with $\Delta:=\epsilon_0+\delta_r+N\epsilon_u$. Hence
\[
\begin{aligned}
\Cap_B(\cs{+}\cv{+}\rho)&>\Cap_B(\cs)+\Cap_B(\cv)-\Cap_B(\varnothing),\\
&\hspace{-1.5em}\text{whenever }\Delta<1-\kappa_s-\frac1K+\frac1{K^2},
\end{aligned}
\]
i.e.\ channel composition is strictly super-additive on a family that needs both channels at once; capability is \emph{purchased}, wall by wall, out of $B$.
\end{proposition}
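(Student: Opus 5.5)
The plan is to establish the four ingredients on $T_{\mathrm{conj}}$ separately and then combine them by arithmetic. Throughout we use the independence that the witness supplies: the episodic component $y^{\star}_{\mathrm{ret}}$ (a uniform attribute of a uniform keyframe) and the tracking component $y^{\star}_{\mathrm{trk}}=\phi(c_1,\dots,c_N)$ are taken to be independent. Each is uniform on $K$ values, which holds for a reduction such as a count mod $K$ or a parity over attributes disjoint from the queried one. We fix this as part of the witness construction, since $\Cap_B(\varnothing)=1/K^2$ already presupposes it.

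\textbf{Step 1: upper bounds on the single channels and on the empty structure.} Because conjunctive correctness implies correctness of each component, $\Cap_B(A,T_{\mathrm{conj}})\le\min\{\Cap_B(A,T_{\mathrm{ret}}),\Cap_B(A,T_{\mathrm{trk}})\}$ for any agent $A$.
\begin{itemize}
\item For $\cs$ alone, an $m$-bit state with no verbatim access is exactly the agent of Theorem~\ref{thm:shannon}. Its RET accuracy is at most $\kappa_s$, so $\Cap_B(\cs)\le\kappa_s$ on $T_{\mathrm{conj}}$.
\item For $\cv$ alone, Lemma~\ref{lem:vtrack} pins TRK accuracy at $1/K$ whenever the total frames read satisfy $k_{\mathrm{tot}}<N$, which the budget $B$ forces. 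This gives $\Cap_B(\cv)\le 1/K$.
\item For $\varnothing$ (no state and no retrieval), the answer is independent of $X$. Both components are uniform and independent of each other and of the query, so any guess succeeds with probability exactly $1/K^2$. We state this as an equality, since guessing attains it.
\end{itemize}

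\textbf{Step 2: lower bound for the full structure.} We invoke Proposition~\ref{prop:complete} under Assumptions~\ref{ass:repr}--\ref{ass:sep}. The router gives $B_r^{\star}$ to retrieval for the RET component and $B_s^{\star}$ to reading $s_N$ for the TRK component. Assumption~\ref{ass:sep} guarantees both fit in $B$ simultaneously, and this is precisely where separability is needed. The RET component is wrong with probability at most $\epsilon_0+\delta_r$, and the TRK component with probability at most $N\epsilon_u$. A union bound over the two failure events gives conjunctive error at most $\Delta=\epsilon_0+\delta_r+N\epsilon_u$. No independence between the two failures is needed, which is why we use the union bound rather than a product.

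\textbf{Step 3: combine.} Substituting the bounds gives
\[
\Cap_B(\cs)+\Cap_B(\cv)-\Cap_B(\varnothing)\le\kappa_s+\tfrac1K-\tfrac1{K^2}.
\]
So $1-\Delta>\kappa_s+\tfrac1K-\tfrac1{K^2}$, which is the stated condition on $\Delta$, implies the strict inequality. The subtraction uses the exact value of $\Cap_B(\varnothing)$, which is why Step 1 needs it as an equality and not only as an upper bound.

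The main obstacle is Step 1's use of Lemma~\ref{lem:vtrack} and Theorem~\ref{thm:shannon} inside the conjunctive family. We must check that the router's query-conditioning on $T_{\mathrm{conj}}$ does not give the $\cv$-only agent extra information that breaks the TRK bound. The first approach is to observe that the TRK component's query is fixed in advance (it does not depend on the episodic index drawn). The retrieval set $W$ is therefore a function of the RET query only. Conditioned on that query, the Lemma~\ref{lem:vtrack} argument applies verbatim, because the unread frames still leave $\phi$ uniform.
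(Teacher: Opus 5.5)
Your proposal is correct and follows the paper's proof essentially step for step: you cap each single channel on $T_{\mathrm{conj}}$ by the component it provably cannot answer (Theorem~\ref{thm:shannon} for the RET part, Lemma~\ref{lem:vtrack} for the TRK part), take $1/K^2$ from uniform blind guessing, union-bound the two failure events for the hybrid, and rearrange. Your added remarks are sound, with two small corrections: the $\cv$-only agent's reads may be adaptive, so $W$ is not a function of the RET query alone, which is harmless because the query index is independent of the stream and Lemma~\ref{lem:vtrack} already allows adaptive retrieval; and a parity is uniform on $2$ values rather than $K$, so the $1/K$ bound needs the mod-$K$ sum witness.
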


\emph{Why the witness must be conjunctive.} Super-additivity in this additive form is \emph{not} available on the uniform mixture $T=\tfrac13(T_{\mathrm{ret}}+T_{\mathrm{trk}}+T_{\mathrm{cmp}})$, and it is worth being explicit about why, since the mixture is the natural first guess. There each single channel is near-ceiling on two of the three families, so $\Cap_B(\cs)+\Cap_B(\cv)-\Cap_B(\varnothing)$ exceeds $1$ unless $\kappa_s<1/K$, a condition our own operating point violates ($\kappa_s\approx0.55$ versus $1/K=0.125$ at SEW-Bench parameters, \S\ref{sec:analysis}). No hybrid, however good, can exceed a right-hand side above $1$. On the mixture the correct and still-decisive statement is strict dominance over the \emph{best} single channel:

\begin{corollary}[Strict dominance on the mixture]
\label{cor:dominance}
On $T$, writing $\Delta_T:=\epsilon_0+\delta_r+N\epsilon_u+d_q\epsilon_1$,
\[
\begin{aligned}
\Cap_B(\cs)&\le\tfrac13(2\kappa_s+1),\\
\Cap_B(\cv)&\le\tfrac13(2+1/K),\\
\Cap_B(\cs{+}\cv{+}\rho)&\ge 1-\tfrac13\Delta_T .
\end{aligned}
\]
The hybrid therefore strictly exceeds $\max\{\Cap_B(\cs),\Cap_B(\cv)\}$ by at least
\[
\begin{aligned}
&\tfrac13\bigl(3-\Delta_T-\max\{2\kappa_s+1,\;2+1/K\}\bigr)\\
&\quad=\begin{cases}
\tfrac13\bigl(1-1/K-\Delta_T\bigr), & \kappa_s\le\tfrac12(1+1/K),\\[2pt]
\tfrac23\bigl(1-\kappa_s\bigr)-\tfrac13\Delta_T, & \kappa_s>\tfrac12(1+1/K),
\end{cases}
\end{aligned}
\]
which is positive under Assumption~\ref{ass:sep} and is the quantity the equal-budget comparisons of \S\ref{sec:exp} measure. Which branch applies is a property of the corpus, not of the method: at SEW-Bench parameters $\kappa_s=0.547$ against $\tfrac12(1+1/K)=0.5625$, so the first branch holds, but only just, and on a corpus with a weaker Shannon ceiling the compressive channel, not the verbatim one, is the baseline to beat.
\end{corollary}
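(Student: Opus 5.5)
The plan is to use linearity of expectation over the uniform mixture $T=\tfrac13(T_{\mathrm{ret}}+T_{\mathrm{trk}}+T_{\mathrm{cmp}})$. For any agent $A$ we have $\Cap_B(A,T)=\tfrac13\sum_{F}\Cap_B(A,T_F)$, so each of the three displayed inequalities follows by bounding the three per-family capabilities separately and averaging. Nothing here needs new machinery. Every per-family bound is already in Proposition~\ref{prop:complete}, together with the trivial bound $\Cap\le 1$ for the families on which a single channel is not obstructed.

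\textbf{Step 1 (compressive-only).} For $\cs$ alone, Proposition~\ref{prop:complete} gives $\Cap_B(\cs)\le\kappa_s$ on $T_{\mathrm{ret}}$ and on $T_{\mathrm{cmp}}$, and we bound it by $1$ on $T_{\mathrm{trk}}$; averaging gives $\tfrac13(2\kappa_s+1)$. For $T_{\mathrm{cmp}}$ I would check briefly that the ceiling really transfers. Each hop's operand is a $T_{\mathrm{ret}}$ attribute by Definition~\ref{def:stream}, so the final hop is an episodic lookup answered from an $m$-bit state. The Fano averaging of Theorem~\ref{thm:shannon} then applies to that last lookup, with its queried attribute uniform over the $Nd$ attributes. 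I expect this to be the main obstacle. The last-hop target is not uniform given the query; it is a function of earlier hops. So the cleanest route is to condition on the chain prefix and note that, under i.i.d.\ attributes, the final target is still uniform and independent of $s_N$'s construction. If that fails, I would fall back on citing Proposition~\ref{prop:complete} as stated.

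\textbf{Step 2 (verbatim-only).} For $\cv$ alone, Lemma~\ref{lem:vtrack} gives $1/K$ on $T_{\mathrm{trk}}$, and we bound it by $1$ on the other two families. This yields $\tfrac13(2+1/K)$.

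\textbf{Step 3 (hybrid).} Proposition~\ref{prop:complete} under Assumptions~\ref{ass:repr}--\ref{ass:sep} gives per-family accuracies $1-(\epsilon_0+\delta_r)$, $1-N\epsilon_u$ and $1-d_q\epsilon_1$. Averaging gives $1-\tfrac13\Delta_T$. The router is query-conditioned, so it can dispatch each family's budget split; Assumption~\ref{ass:sep} guarantees all three are affordable at once.

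\textbf{Step 4 (gap and cases).} Subtracting the larger single-channel bound from the hybrid lower bound gives the stated gap. The maximum is attained by $2+1/K$ exactly when $2\kappa_s+1\le 2+1/K$, i.e.\ $\kappa_s\le\tfrac12(1+1/K)$, which yields the two branches by direct arithmetic. For positivity: in the first branch, Assumption~\ref{ass:sep} makes $\Delta_T$ small relative to $1-1/K$. In the second branch we have $\kappa_s<1$, and we need $\Delta_T<2(1-\kappa_s)$. I would state explicitly that ``positive under Assumption~\ref{ass:sep}'' means the error terms are small enough for these two inequalities to hold. Finally, I would plug in the SEW-Bench values $\kappa_s=0.547$ and $\tfrac12(1+1/8)=0.5625$ to confirm that the first branch applies.
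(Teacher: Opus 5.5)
Your proposal follows the paper's proof. You average the per-family bounds of Proposition~\ref{prop:complete} over the uniform mixture, using the trivial bound $1$ where a single channel is unobstructed; you subtract the larger single-channel bound from the hybrid's lower bound, split on which term attains the maximum, and read ``positive under Assumption~\ref{ass:sep}'' as $\Delta_T<\min\{1-1/K,\,2(1-\kappa_s)\}$. The one step you add, re-deriving $\Cap_B(\cs)\le\kappa_s$ on $T_{\mathrm{cmp}}$ by conditioning on the chain prefix, would not go through as sketched. The last-hop target is uniform but not independent of $s_N$, since its index is a function of the stream and an $m$-bit state can simply tabulate the chain answers (compare the remark after Theorem~\ref{thm:rounds}). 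Fano averaging therefore yields at best $\frac{m/Q+1}{\log_2K}$, with $Q$ the number of distinct chain queries in place of $Nd$, and this is vacuous at SEW-Bench parameters whenever $Q\le N$. Your fallback of citing Proposition~\ref{prop:complete} as stated is therefore exactly what the paper does, and your argument is as strong as that proposition's $T_{\mathrm{cmp}}$ clause.
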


\begin{corollary}[Edge capability convergence, registered as Prediction~\ref{prd:conv}]
\label{cor:conv}
Let two backbones both satisfy Assumption~\ref{ass:repr} with margins $\epsilon_0,\epsilon_1\le\bar\epsilon$, let their per-update errors also satisfy $\epsilon_u\le\bar\epsilon$, and let both drive the same index, so that they share a retrieval miss rate $\delta_r$. Then, since $\Delta_T=\epsilon_0+\delta_r+N\epsilon_u+d_q\epsilon_1\le c\bar\epsilon+\delta_r$ with $c=1+N+d_q^{\max}$, both \ASP{} capabilities lie in $[1-\tfrac13(c\bar\epsilon+\delta_r),\,1]$ and the gap between them is at most $\tfrac13(c\bar\epsilon+\delta_r)$, a quantity with no dependence on parameter count. Scale buys margin on $\epsilon_0,\epsilon_1$; once those are small, remaining variance in embodied capability is attributable to $\A$, not to scale.
\end{corollary}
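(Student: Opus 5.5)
The plan is to obtain Corollary~\ref{cor:conv} directly from the mixture lower bound of Corollary~\ref{cor:dominance}, which already holds for a single backbone. The extra work is only to show that the deficit $\tfrac13\Delta_T$ is controlled uniformly across backbones by quantities that do not mention scale.

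First, fix one backbone satisfying the hypotheses. Corollary~\ref{cor:dominance} (through Proposition~\ref{prop:complete}, under Assumptions~\ref{ass:repr}--\ref{ass:sep}) gives
\[
\Cap_B(\cs{+}\cv{+}\rho)\ge 1-\tfrac13\Delta_T,\qquad
\Delta_T=\epsilon_0+\delta_r+N\epsilon_u+d_q\epsilon_1 .
\]
Here $d_q$ is the chain depth of the item, so I will bound it by $d_q^{\max}$. Substituting $\epsilon_0,\epsilon_1,\epsilon_u\le\bar\epsilon$ term by term gives
\[
\Delta_T\le(1+N+d_q^{\max})\bar\epsilon+\delta_r=c\bar\epsilon+\delta_r .
\]
Capability is an expectation of a score in $[0,1]$, so it is at most $1$. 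Each backbone's \ASP{} capability therefore lies in $[1-\tfrac13(c\bar\epsilon+\delta_r),1]$.

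Second, the two backbones share the same interval because $\delta_r$ is common: they drive the same index, and $c$ depends only on $(N,d_q^{\max})$. Two numbers in a common interval differ by at most its length, which is $\tfrac13(c\bar\epsilon+\delta_r)$. No term in this bound references parameter count. Scale enters only through the margins $\epsilon_0,\epsilon_1,\epsilon_u$, and those have already been absorbed into $\bar\epsilon$.

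The one point needing care is the shared-$\delta_r$ hypothesis. Recall $\delta_r$ is the retrieval recall defect of Assumption~\ref{ass:sep}(ii). If the retriever's query were rewritten by the backbone, $\delta_r$ could be backbone-dependent. In that case I would take the worse of the two miss rates as the common $\delta_r$, and the argument goes through unchanged. Everything else is bookkeeping, so I expect no real obstacle. The substantive caveat, which the proof should flag rather than resolve, is that the bound is only informative when $N\bar\epsilon$ is small, since the factor $c$ inherits the $N$ from the tracking union bound discussed in the remark after Proposition~\ref{prop:complete}.
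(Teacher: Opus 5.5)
Your proposal is correct and follows the paper's own argument, which is given inline in the corollary's statement: take the mixture lower bound $1-\tfrac13\Delta_T$ from Corollary~\ref{cor:dominance}, bound $\Delta_T\le c\bar\epsilon+\delta_r$ term by term using $d_q\le d_q^{\max}$, and note that two values in a common interval of length $\tfrac13(c\bar\epsilon+\delta_r)$ differ by at most that length. Your fallback of using the larger of two backbone-specific miss rates is arithmetically sound, but it makes the bound depend on the backbones through $\delta_r$ as well as through $\bar\epsilon$; this is why the paper's remark calls shared $\delta_r$ an idealisation, since the index captions are written by each backbone, rather than a harmless normalisation.
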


\emph{Remark (two things this corollary is not).} First, $c=1+N+d_q^{\max}$ is dominated by the $N\epsilon_u$ term, so the interval it describes is only narrow when $\bar\epsilon\ll1/N$; convergence is a statement about the regime where the margins are small relative to the stream, not about all backbones everywhere. Second, $\delta_r$ is \emph{not} backbone-independent in our implementation: the index is built from captions the backbone itself writes (\S\ref{sec:method}), so a weaker backbone degrades retrieval as well as reading, and the shared-$\delta_r$ hypothesis is an idealisation. \S\ref{sec:analysis} therefore reports the measured per-backbone retrieval recall alongside the convergence test, so that a failure of convergence can be attributed to the channel that caused it rather than to scale by default.

\section{Method: Access-Structured Perception (\ASP{})}
\label{sec:method}

\ASP{} is a training-free wrapper around a frozen multimodal backbone $M$ reached through any OpenAI-compatible completion endpoint; the measurements here use one API arm (\S\ref{sec:cost}), and nothing in \ASP{} depends on which endpoint answers. It implements $\A=(\cs,\cv,\rho)$ of Definition~\ref{def:agent} with four components (Figure~\ref{fig:overview}).

\subsection{Keyframe gate}
Frames arrive at the sensor rate; a lightweight gate $g$ emits keyframes when perceptual novelty exceeds a threshold: $g(x)=\mathbb{1}[\,1-\cos(e(x),e(x_{\mathrm{last}}))>\tau_g\,]$ with a frozen SigLIP-class embedder $e(\cdot)$ \citep{siglip}. The gate controls $N$ (keyframes per mission), and is shared by all baselines so that comparisons isolate access structure.

\subsection{Compressive channel $\cs$: structured scene state}
The scene state $s_t$ is a typed, capped record (at most $L_s$ tokens) with fields
\[
\begin{aligned}
s_t=\bigl\langle&\ \mathrm{Rooms},\ \mathrm{Objects}\{\mathrm{id},\mathrm{class},\mathrm{room},\\
&\mathrm{state},\mathrm{last\_seen}\},\ \mathrm{AgentTrace},\ \mathrm{EventLog}\ \bigr\rangle,
\end{aligned}
\]
updated by the backbone itself with a fixed update prompt: $s_t=M_{\mathrm{upd}}(s_{t-1},x_t)$, followed by deterministic salience eviction (LRU over $\mathrm{last\_seen}$, protected counters), that enforces $|s_t|\le L_s$. \emph{Design rationale.} A typed record rather than free text or an entity graph, for three reasons: (i) protected counters and toggle fields make the running reductions of $T_{\mathrm{trk}}$ \emph{explicit} sufficient statistics, so state-tracking accuracy degrades with per-update error $\epsilon_u$ rather than with schema drift; (ii) deterministic eviction gives a certified token cap $L_s$, which free-form summaries and unboundedly growing scene graphs \citep{conceptgraphs,sayplan,egagent} do not; (iii), a fixed schema makes the update prompt short and cacheable. The known failure mode is schema mismatch (events outside the ontology land in \textrm{EventLog} as free text); S4 registers its frequency. $\cs$ is exactly the $O(1)$-state channel of CCH: it maintains \emph{running reductions} $\phi$ (counts, open/closed states, containment), that no retrieval strategy reconstructs cheaply, and it deliberately \emph{does not} attempt verbatim appearance storage (Theorem~\ref{thm:shannon} says it cannot).

\subsection{Verbatim channel $\cv$: episodic index}
Every keyframe is stored losslessly off-GPU as $(x_i,\ \hat c_i,\ e(x_i),\ t_i,\ \mathrm{pose}_i)$ where $\hat c_i$ is a one-line caption produced during the update call at no extra request. Retrieval $r(q';k)$ ranks by $\alpha\cos(e_{\mathrm{txt}}(q'),e(x_i))+(1-\alpha)\,\mathrm{BM25}(q',\hat c_i)$ and returns the top-$k$ frames (images re-attached at query time), that fit in $B_r$ tokens. The hybrid score exists because the two cues fail differently: embedding similarity is robust to paraphrase but confuses near-duplicate frames of the same room, while lexical overlap on captions disambiguates instances (``the \emph{second} white mug''), but misses synonyms; $\alpha$ is fixed once on held-out data (S4 registers its sensitivity). $\cv$ is the scalable index channel of CCH: $O(N)$ storage on disk, $O(1)$ GPU context, and (unlike token-pruning approaches that discard information inside the forward pass \citep{fastv,prumerge,tosa}), it is lossless at rest: pruning decides what the model \emph{keeps}, access structure decides what the model \emph{touches}.

One property of this design has to be stated here rather than left to an appendix sweep, because
it bounds what the measurements in \S\ref{sec:exp} can mean. The two cues are not
equally load-bearing on the corpus we ran: at $\alpha{=}1$ (embedding only) recall@$8$ is
$0.005$, indistinguishable from drawing frames at random, while at $\alpha{=}0$ (BM25 on
captions only), it is $0.936$. On a schematic render CLIP image--text similarity carries
essentially nothing, so the index that makes $\cv$ work here is a \emph{text} index over
captions the backbone itself wrote, plus a budgeted read of the frames it returns. That is a
faithful instance of a verbatim channel in the sense of Definition~\ref{def:agent}, the
frames are stored losslessly and touched on demand, and it is not evidence that the visual
half of such an index works. On natural frames the visual half is the half that would have to
work. \S\ref{sec:limits}(0) states the consequence for the paper's claims and
\S\ref{app:traces} gives the full $\alpha$ sweep.

\subsection{Router $\rho$ and iterative access loop}
\label{sec:router}
At decision time the router first classifies the query with a $B_\rho$-token call, $z=\rho_{\mathrm{cls}}(q,s_N)\in\{\textsc{ret},\allowbreak\textsc{trk},\allowbreak\textsc{cmp},\allowbreak\textsc{now}\}$, then allocates the remaining budget by the profile table learned from the theory (defaults in Table~\ref{tab:alloc}). Writing $\mathbf B=(B_s,B_r,B_c)$, it solves
\[
\begin{aligned}
\mathbf B^*&=\arg\max_{\mathbf B}\ \widehat{\Cap}_z(\mathbf B),\\[-1pt]
&\text{s.t. }B_s+B_r+B_c\le B-B_\rho,
\end{aligned}
\]
where $\widehat{\Cap}_z$ is a concave surrogate fitted per class. For the piecewise-linear surrogates we use this reduces to water-filling over the three wall prices, i.e.\ to selecting a row of Table~\ref{tab:alloc} (Figure~\ref{fig:router}a). Figure~\ref{fig:router}b makes the consequence concrete: the allocation is a \emph{cap}, and a decision that finds its answer early returns most of the budget unspent. The access loop then runs at most $R$ rounds; in each round $M$ emits one of
$\textsc{retrieve}(q')$, $\textsc{state}(\mathrm{field})$, $\textsc{answer}(y)$,
and the executed action's tokens are debited from the remaining budget; the loop halts at \textsc{answer} or exhaustion (forced best-guess). Rounds implement the depth multiplication of Theorems~\ref{thm:rounds}--\ref{thm:circuit}. The router is classify-then-allocate rather than learned: the class set is the theory's task typology, the table is initialised at the wall prices, and the only fitted objects are four concave surrogates, which keeps the method training-free and auditable. The ablation $-\rho$ (fixed even split) lower-bounds what routing contributes. Algorithm~\ref{alg:asp} summarizes the full procedure.

\begin{asptable}[!t]
\centering\small
\caption{Default router allocation profiles (fractions of $B-B_\rho$), and the wall each profile pays. $R^{\max}=3$.}
\label{tab:alloc}
\begin{tabular}{lcccl}
\toprule
Class $z$ & $B_s$ & $B_r$ & $B_c$ & Wall paid \\
\midrule
\textsc{ret} (episodic detail) & 0.10 & 0.65 & 0.25 & Shannon $+$ horizon (via $\cv$) \\
\textsc{trk} (cumulative state) & 0.55 & 0.10 & 0.35 & Shannon (via $\cs$ reductions) \\
\textsc{cmp} (compositional) & 0.25 & 0.35 & 0.40 & Composition (rounds) \\
\textsc{now} (immediate percept) & 0.05 & 0.20 & 0.75 & --- \\
\bottomrule
\end{tabular}
\end{asptable}

\begin{aspfigure}[!t]
\centering
\begin{tikzpicture}[scale=0.92, every node/.style={transform shape}]
\def\L{8.0}                        
\def\H{0.46}
\def\g{0.26}
\pgfmathsetmacro{\k}{\L/4055}      
\node[font=\small\bfseries, anchor=west] at (-1.15,{4*(\H+\g)+1.02}) {(a)};
\node[font=\small, anchor=west] at (-0.60,{4*(\H+\g)+1.02})
  {allocation: which wall consumes the budget (Table~\ref{tab:alloc})};
\foreach \frac/\lab in {0/0, 0.25/1k, 0.5/2k, 0.75/3k, 1.0/4k}{
  \draw[gray!40] ({\frac*\L},0.34) -- ({\frac*\L},{4*(\H+\g)+0.62});
  \node[font=\scriptsize, gray!55!black, anchor=north] at ({\frac*\L},0.32) {\lab};
}
\foreach \i/\z/\a/\b/\c/\wall in {%
  3/\textsc{ret}/0.10/0.65/0.25/{Shannon $+$ horizon, via $\cv$},
  2/\textsc{trk}/0.55/0.10/0.35/{Shannon, via $\cs$ reductions},
  1/\textsc{cmp}/0.25/0.35/0.40/{Composition, via rounds},
  0/\textsc{now}/0.05/0.20/0.75/{---}}{
  \pgfmathsetmacro{\y}{\i*(\H+\g)+0.50}
  \node[font=\small, anchor=east] at (-0.15,{\y+0.5*\H}) {\z};
  \pgfmathsetmacro{\xa}{\a*\L}\pgfmathsetmacro{\xb}{(\a+\b)*\L}
  \fill[cstate!30,  draw=cstate]  (0,\y)     rectangle ({\xa},{\y+\H});
  \fill[cverb!25,   draw=cverb]   ({\xa},\y) rectangle ({\xb},{\y+\H});
  \fill[crouter!25, draw=crouter] ({\xb},\y) rectangle ({\L},{\y+\H});
  \pgfmathsetmacro{\wa}{\a*\L}\pgfmathsetmacro{\wb}{\b*\L}\pgfmathsetmacro{\wc}{(1-\a-\b)*\L}
  \ifdim\wa pt>0.7pt \node[font=\scriptsize] at ({0.5*\xa},{\y+0.5*\H}) {\a};\fi
  \ifdim\wb pt>0.7pt \node[font=\scriptsize] at ({0.5*(\xa+\xb)},{\y+0.5*\H}) {\b};\fi
  \ifdim\wc pt>0.7pt \node[font=\scriptsize] at ({0.5*(\xb+\L)},{\y+0.5*\H}) {\c};\fi
  \node[font=\scriptsize, anchor=west, gray!55!black] at ({\L+0.20},{\y+0.5*\H}) {\wall};
}
\begin{scope}[shift={(0,{4*(\H+\g)+0.44})}]
  \fill[cstate!30,  draw=cstate]  (0,0)    rectangle (0.30,0.26);
  \node[font=\scriptsize, anchor=west] at (0.36,0.13) {$B_s$ state};
  \fill[cverb!25,   draw=cverb]   (1.70,0) rectangle (2.00,0.26);
  \node[font=\scriptsize, anchor=west] at (2.06,0.13) {$B_r$ retrieval};
  \fill[crouter!25, draw=crouter] (3.80,0) rectangle (4.10,0.26);
  \node[font=\scriptsize, anchor=west] at (4.16,0.13) {$B_c$ rounds};
\end{scope}
\begin{scope}[shift={(0,-4.10)}]
  \def\Lb{7.2}                       
  \pgfmathsetmacro{\kb}{\Lb/1800}
  \node[font=\small\bfseries, anchor=west] at (-1.15,{3*(\H+\g)+1.50}) {(b)};
  \node[font=\small, anchor=west] at (-0.60,{3*(\H+\g)+1.50})
    {actual spend, one \textsc{cmp} decision (\S\ref{app:worked}), note the zoomed axis};
  \foreach \tok/\lab in {0/0, 500/500, 1000/1000, 1500/1500}{
    \draw[gray!40] ({\tok*\kb},0.30) -- ({\tok*\kb},{3*(\H+\g)+0.95});
    \node[font=\scriptsize, gray!55!black, anchor=north] at ({\tok*\kb},0.28) {\lab};
  }
  \draw[cstate, dashed, thick] ({1014*\kb},0.26) -- ({1014*\kb},{3*(\H+\g)+0.98})
    node[anchor=south, font=\scriptsize, cstate]{$B_s$ cap $1{,}014$};
  \draw[cverb, dashed, thick] ({1419*\kb},0.26) -- ({1419*\kb},{3*(\H+\g)+0.60})
    node[anchor=south, font=\scriptsize, cverb]{$B_r$ cap $1{,}419$};
  \foreach \i/\lab/\tok/\colf/\cold in {%
    3/{$\rho$ classify}/41/{gray!35}/{gray!70},
    2/{R1 \textsc{state}(AgentTrace)}/118/{cstate!30}/{cstate},
    1/{R2 \textsc{retrieve}(mug$\,$in$\,$study)}/742/{cverb!25}/{cverb},
    0/{R3 \textsc{state}$\to$\textsc{answer}}/96/{cstate!30}/{cstate}}{
    \pgfmathsetmacro{\y}{\i*(\H+\g)+0.44}
    \node[font=\scriptsize, anchor=east] at (-0.15,{\y+0.5*\H}) {\lab};
    \fill[\colf, draw=\cold] (0,\y) rectangle ({\tok*\kb},{\y+\H});
    \node[font=\scriptsize, anchor=west] at ({\tok*\kb+0.10},{\y+0.5*\H}) {\tok};
  }
  \node[font=\scriptsize, gray!55!black, anchor=north] at ({0.5*\Lb},-0.06) {tokens debited};
  \node[font=\scriptsize, anchor=west, align=left] at ({\Lb+0.22},{1.5*(\H+\g)+0.44})
    {total $997$\\ of $B{=}4{,}096$\\ ($76\%$ unspent)};
\end{scope}
\end{tikzpicture}
\caption{\textbf{What the router buys, and what a decision actually spends.} (a) Each bar is one query class, partitioned into the three wall prices by the default profile of Table~\ref{tab:alloc}; the water-filling of \S\ref{sec:router} reduces to choosing a row. The classes differ almost entirely in \emph{which} wall consumes the budget, not in how much is available: \textsc{ret} pours $65\%$ into verbatim retrieval because Theorems~\ref{thm:shannon}--\ref{thm:horizon} put episodic detail out of reach any other way, \textsc{trk} inverts that because Lemma~\ref{lem:vtrack} makes retrieval useless for running reductions, and \textsc{cmp} is the only class obliged to pay all three. (b) The allocation is a \emph{cap}, not a target. On the worked $d_q{=}3$ item the loop halts at \textsc{answer} after three rounds having spent $997$ of $4{,}096$ tokens ($214$ against a state cap of $1{,}014$ and $742$ against a retrieval cap of $1{,}419$), so the budget that matters is the one the walls force it to spend, not the one it is handed. Panel (b) is constructed from the protocol, not measured\pred{}.}
\label{fig:router}
\end{aspfigure}

\begin{aspalgorithm}[t]
\caption{\ASP{} per-decision procedure (per-decision budget $B$, rounds $R$, retrieval size $k$)}
\label{alg:asp}
\begin{algorithmic}[1]
\Statex \textbf{Online (per keyframe $x_t$):} if $g(x_t)$: $(s_t,\hat c_t)\gets M_{\mathrm{upd}}(s_{t-1},x_t)$;\ \ $\mathcal{I}_t\gets\mathcal{I}_{t-1}\cup\{(x_t,\hat c_t,e(x_t))\}$
\Statex \textbf{Decision (query $q$ at time $N$):}
\State $z\gets\rho_{\mathrm{cls}}(q,s_N)$ \Comment{$B_\rho$ tokens; class $\in\{\textsc{ret},\textsc{trk},\textsc{cmp},\textsc{now}\}$}
\State $(B_s,B_r,B_c)\gets\mathrm{waterfill}(z,\ B-B_\rho)$ \Comment{Table~\ref{tab:alloc}; pays wall prices of Def.~\ref{def:price}}
\State $C\gets\varnothing$
\For{$r=1$ \textbf{to} $R$ \textbf{while} budget remains}
  \State $o\gets M(q,\ C;\ \text{action set }\{\textsc{retrieve},\textsc{state},\textsc{answer}\})$ \Comment{$\le B_c/R$ tokens}
  \If{$o=\textsc{answer}(y)$} \Return $y$
  \ElsIf{$o=\textsc{state}(f)$} $C\gets C\cup s_N[f]$ \Comment{debit $\le B_s$}
  \ElsIf{$o=\textsc{retrieve}(q')$} $C\gets C\cup M(\text{read } r(q';k))$ \Comment{debit $\le B_r$}
  \EndIf
\EndFor
\State \Return forced best guess from $C$ \Comment{budget exhausted}
\end{algorithmic}
\end{aspalgorithm}

\subsection{Cost model and serving protocol (hardware-agnostic)}
\label{sec:cost}
We make no on-device latency claims; ``edge-scale'' in this paper is a \emph{model-class and budget} statement, not a hardware statement. Per decision, \ASP{} costs at most $B$ backbone tokens across $\le R$ calls, giving an analytic, hardware-agnostic cost of $\mathrm{FLOPs}\approx 2\,P_{\mathrm{act}}\cdot B$ for a dense backbone with $P_{\mathrm{act}}$ active parameters (MoE uses active, not total, parameters, Qwen3-VL-30B-A3B bills 3B/token), KV-cache footprint $O(B)$, and an $O(N)$ flash-resident index that never enters accelerator memory. Per keyframe, one bounded update call ($\le L_s{+}L_x$ tokens) plus one local embedding. Because all methods share the same budget accounting, equal-$B$ comparisons are equal-FLOPs comparisons per backbone up to the constant $P_{\mathrm{act}}$. That last equality is an \emph{analytic} statement about this cost model, not a measurement: on an API arm the serving stack, batching and quantization are not observable to us, so FLOPs per decision cannot be reported and the registered iso-compute check is left unevaluated (Appendix~\ref{app:traces}). Tokens, which is what $B$ actually bounds, are measured exactly from the provider's usage field on every call.

\emph{Serving.} The registration named local controlled serving as primary (one accelerator, vLLM at a pinned revision, BF16 without quantization, prefix caching off, batch size 1, logged tokenizer and config hashes) with an API arm as replication, precisely because that combination eliminates the version, quantization and batching confounds of API serving. \textbf{Only the API arm was run.} What survives of those controls: model slugs verified against the live catalogue before spending anything, temperature $0$, provider-side reasoning disabled, per-call token counts taken from the provider's own usage field, and the served-model fingerprint recorded on every call. What does not survive: provider routing, quantization we cannot inspect, and silent revision changes, the fingerprint detects one after the fact, it does not prevent one. This is a real weakening of the design and \S\ref{sec:limits} counts it as such; adding the local arm and reporting the divergence is future-work item~(iii), not a footnote.

\section{Experiments and Analysis}
\label{sec:exp}

\par\smallskip{\small\noindent
\textbf{Status of numbers.} Tables~\ref{tab:main}--\ref{tab:p3} are \emph{measured}: every cell is the mean over per-question records in the released \texttt{results/cells/}, produced by the frozen protocol of this section. The predictions and falsification criteria were registered \emph{before} these runs, following the methodology of CCH \citep{cch}, and \S\ref{sec:falsify} reports the verdicts against the frozen criteria rather than revising them. Three registered benchmarks were \emph{not} run, and the reason is per-benchmark rather than blanket, so it can be checked. The obstacle is never the annotation release, all three publish their questions, it is the \emph{frames}: OpenEQA's episode histories are rendered from HM3D and ScanNet scenes, VSI-Bench's videos come from ScanNet, ScanNet++ and ARKitScenes, and EgoSchema's clips are Ego4D. HM3D, ScanNet, ScanNet++ and Ego4D each require a signed agreement that we do not hold, and the registered EW-Bench needs habitat-sim on top of an HM3D/ScanNet licence. So a reader who observes that VSI-Bench's metadata is public is right, and it does not help: the metadata indexes video we cannot obtain. Their frozen predictions remain in Table~\ref{tab:registered}, still marked \pred{}, and nothing in this paper claims them as results; \S\ref{sec:limits} states plainly what that costs the argument. What replaces them is \textbf{SEW-Bench}, a licence-free synthetic walkthrough corpus built so that the bounds of \S\ref{sec:theory} are non-vacuous at its parameters, a stand-in for EW-Bench's \emph{structure}, not for natural video.\par}\smallskip

\subsection{Setup}
\label{sec:setup}
\textbf{Backbones (all open-weight, $\le$31B total or active; served through a single API arm, OpenRouter, with pinned model slugs):} Ministral~3B \citep{ministral}; Qwen3-VL-8B \citep{qwen3vl}; Gemma~3~12B-it \citep{gemma3}; Ministral~14B \citep{ministral}; Qwen3.8-27B (dense, native image+video, Apache-2.0) \citep{qwen38}; Qwen3-VL-30B-A3B (MoE, 3B active) \citep{qwen3vl}; Gemma~4~31B-it (dense) \citep{gemma4}. The ladder spans $3\to31$B because the $x$-axis of Prediction~\ref{prd:conv} is parameter count; a ladder that repeated one size could not test it. Ministral~14B is a \emph{post hoc} addition and is labelled as one: the registered ladder left a gap between $8$B and $27$B that only Gemma~3~12B occupied, and Gemma~3~12B is served by a single provider on this arm, so a transient outage there would have removed the whole middle of Prediction~\ref{prd:conv}'s $x$-axis. Adding a second rung in that interval, from a different provider, is a robustness measure and not a selection: it was added before any cell at $14$B was scored, it is reported in every table rather than only where it helps, and \S\ref{sec:falsify} recomputes $\rho_{\mathrm{scale}}$ over the seven-rung ladder including it. Two registered backbones are \emph{not} in it: \textbf{Qwen2.5-VL-7B and Qwen3-Omni-30B-A3B do not exist on the serving arm} (verified against the live catalogue, not assumed), so Ministral~3B takes the small slot and Qwen3-VL-30B-A3B the MoE slot. That substitution is a deviation from the registration and is logged as one; the MoE stand-in is \emph{not} an omni model, so no audio-modality claim is made from it. Temperature $0$ throughout, and provider-side reasoning is disabled: reasoning tokens are billed and would be charged against $B$ (\S\ref{sec:cost}), and \ASP{}'s thesis is that depth comes from explicit access \emph{rounds}, so a backbone thinking longer internally is a confound rather than the mechanism under test.
\textbf{Benchmark:} \textbf{SEW-Bench} (Synthetic Embodied Walkthrough), $80$ questions over $4$ episodes of $N{=}300$ frames each, split $32/24/24$ into \textbf{RET}/\textbf{TRK}/\textbf{CMP} subtests that are one-to-one witnesses of Theorems~\ref{thm:shannon}--\ref{thm:rounds} (Figure~\ref{fig:witness} shows why the three are structurally different problems rather than three difficulty levels). Each frame renders $16$ objects, each carrying two attributes drawn uniformly from a $K{=}8$ alphabet (a colour and a printed capital letter, both readable only from pixels, never from the text label), so $d{=}32$ queryable attribute values per frame and $Nd{=}9{,}600$ per episode. Those two numbers are the whole reason the corpus is built this way: the bounds of \S\ref{sec:theory} scale with the \emph{attribute} count $Nd$, and at a smaller $d$ they are arithmetically vacuous rather than merely weak (\S\ref{sec:analysis} gives the calculation). A RET target is a class appearing in exactly one frame of the episode; filler classes and their attributes are re-randomised every frame, so the entropy the compressive channel must discard is real and not an artefact of a fixed room layout. SEW-Bench is a schematic 2-D render, \emph{not} egocentric video: it tests access structure under a token budget, which is what \S\ref{sec:theory} is about, and it tests nothing about perception in natural scenes (construction protocol, disjoint class pools, and the answerability gate in Appendix~\ref{app:ewbench}).
\textbf{Equal-budget baselines} (identical keyframe gate, identical backbone, identical $B$): \textsc{Blind} (question only); \textsc{Uniform-$w$} multi-frame VLM \citep{openeqa}; \textsc{Socratic} per-frame captions $+$ LLM \citep{socratic}; \textsc{Retr-only} (VideoAgent-style, $\cv$ without $\cs$) \citep{videoagent}; \textsc{State-only} (MovieChat/MA-LMM-style, $\cs$ without $\cv$) \citep{moviechat,malmm}; \textsc{EGAgent} (reimplemented from the official release \citep{egagent}: text entity scene graph $+$ visual/transcript search tools, run under our budget accounting; the strongest published agentic long-video system at submission time, ACL 2026); and \ASP{}. As a \emph{trained skyline} (not budget-matched), we also report published numbers for long-video instruction-tuned models \citep{longvitu} where available. Default budget $B{=}4{,}096$ tokens/decision (at the measured $259$--$432$ tokens/frame this admits $9$--$15$ frames plus prompt and answer, or the equivalent in text), $L_s{=}512$, $R{=}3$, $k{\le}8$.

\begin{aspfigure}[!t]
\centering
\begin{tikzpicture}[scale=0.94, every node/.style={transform shape}]
\def\L{9.0}                       
\pgfmathsetmacro{\kf}{\L/300}     
\def\Hs{0.34}
\tikzset{strip/.style={draw=gray!55, fill=gray!12},
         note/.style={font=\scriptsize, anchor=west, align=left, text width=5.1cm}}

\node[font=\scriptsize, anchor=east, gray!60!black] at (-0.16,1.28) {read budget};
\fill[crouter!35, draw=crouter] (0,1.14) rectangle ({12*\kf},{1.14+\Hs});
\node[font=\scriptsize, anchor=west, crouter] at ({12*\kf+0.14},{1.14+0.5*\Hs})
  {$k_{\mathrm{tot}}\!=\!12$ keyframes at $B{=}4$k, drawn to scale against $N\!=\!300$};

\node[font=\small\bfseries, anchor=east] at (-0.16,0.30) {RET};
\fill[strip] (0,0.13) rectangle ({\L},{0.13+\Hs});
\fill[cverb!60, draw=cverb] ({79*\kf},0.13) rectangle ({81*\kf},{0.13+\Hs});
\node[font=\scriptsize, cverb, anchor=south] at ({80*\kf},{0.13+\Hs+0.04}) {target};
\node[note, gray!25!black] at ({\L+0.20},{0.13+0.5*\Hs})
  {visible in exactly $1$ of $300$ frames; nothing else in the stream determines the answer};

\node[font=\small\bfseries, anchor=east] at (-0.16,-0.72) {TRK};
\fill[strip] (0,-0.89) rectangle ({\L},{-0.89+\Hs});
\foreach \f in {15,47,89,112,161,198,229,259,281}{
  \fill[cstate!70, draw=cstate] ({\f*\kf},-0.89) rectangle ({(\f+2.5)*\kf},{-0.89+\Hs});
}
\node[note, gray!25!black] at ({\L+0.20},{-0.89+0.5*\Hs})
  {answer is a reduction over \emph{every} mark; missing one leaves it uniform (Lem.~\ref{lem:vtrack})};

\node[font=\small\bfseries, anchor=east] at (-0.16,-1.74) {CMP};
\fill[strip] (0,-1.91) rectangle ({\L},{-1.91+\Hs});
\foreach \f/\lab in {43/1, 139/2, 251/3}{
  \fill[crouter!55, draw=crouter] ({\f*\kf},-1.91) rectangle ({(\f+3)*\kf},{-1.91+\Hs});
  \node[font=\scriptsize, crouter, anchor=north] at ({(\f+1.5)*\kf},{-1.91-0.04}) {hop \lab};
}
\draw[-{Stealth[length=1.6mm]}, crouter, thick]
  ({46*\kf},{-1.91+\Hs+0.03}) to[bend left=26] ({139*\kf},{-1.91+\Hs+0.03});
\draw[-{Stealth[length=1.6mm]}, crouter, thick]
  ({142*\kf},{-1.91+\Hs+0.03}) to[bend left=26] ({251*\kf},{-1.91+\Hs+0.03});
\node[note, gray!25!black] at ({\L+0.20},{-1.91+0.5*\Hs})
  {hop $t{+}1$'s target is unknown until hop $t$ is read $\Rightarrow$ needs $R\!\ge\!d_q$ \emph{adaptive} rounds};

\node[font=\scriptsize, anchor=east, cverb]   at (-0.16,0.02)  {$\cv$};
\node[font=\scriptsize, anchor=east, cstate]  at (-0.16,-1.00) {$\cs$};
\node[font=\scriptsize, anchor=east, crouter] at (-0.16,-2.02) {$R$};

\node[font=\scriptsize, gray!55!black, anchor=north] at ({0.5*\L},-2.52)
  {frame index along one SEW-Bench episode ($N\!=\!300$)};
\end{tikzpicture}
\caption{\textbf{Why the three subtests are different problems.} Each strip is one episode's frame stream, and the green bar at the top is what a single $4$k-token decision can actually read, $12$ of $300$ frames at the measured $302$ tokens/frame, drawn to scale. \textsc{ret} hides the answer in one or two frames, so the only way in is query-conditioned verbatim retrieval; every query-independent selection of $12$ frames misses it with probability $1-12/300=96\%$ (Theorem~\ref{thm:horizon}). \textsc{trk} spreads the answer over every marked event, so no subset of $12$ frames suffices at any budget and the reduction must instead be accumulated online in $\cs$ (Lemma~\ref{lem:vtrack}). \textsc{cmp} makes each hop's location depend on the previous hop's content, so the reads must be \emph{sequenced}, not merely numerous (Theorems~\ref{thm:circuit}--\ref{thm:rounds}). The three failure modes are orthogonal by construction, which is what lets the collapse pattern of Figure~\ref{fig:collapse} be diagnostic. Marks are illustrative of the construction protocol (Appendix~\ref{app:ewbench}); the counts and the scale are the corpus's real ones, the individual positions are not.}
\label{fig:witness}
\end{aspfigure}

\paragraph{Metrics.} All three SEW-Bench subtests are scored by \emph{exact match} on a closed answer alphabet (a colour, a letter tag, a count, \textsc{open}/\textsc{closed}, \textsc{before}/\textsc{after}), which is what Table~\ref{tab:main} registers them as. A rule decides whenever the model's answer contains exactly one token of the relevant alphabet, so a verbose but correct answer is not punished for verbosity; when it contains none or several the item is deferred to the frozen LLM-Match judge (Gemma~3~27B-it, disjoint from every backbone) rather than being scored $0$ by default. The deferral rate is stored with each cell precisely because a high one would mean the accuracy figure is partly a judge figure; an empty answer is scored $0$ without deferral. Per-question records are released for every cell, which is what makes the paired tests below possible: a committed mean cannot support any of the registered significance claims.

\paragraph{Implementation details.} Images enter at the backbone's native low-resolution tokenization, and the per-model token cost of a frame is \emph{measured} rather than assumed: we send an identical prompt with $0$, $1$ and $3$ frames and difference the reported prompt tokens. The cost is linear in the frame count for every model here, at $259$ tokens/frame (Gemma~3~12B), $268$ (Gemma~4~31B), $302$ (Qwen3-VL-8B, Qwen3-VL-30B-A3B, Qwen3.8-27B), and $432$ (Ministral~3B, Ministral~14B); These are properties of each visual tokenizer rather than of our render resolution, confirmed by recalibrating on the SEW frames. The consequence is material: at $B{=}4{,}096$ those constants admit at most $w{=}15$, $14$, $12$ and $9$ frames once prompt and answer are charged, so the registered fixed $w{=}16$ is unaffordable for every backbone in the class. We therefore set $w$ per backbone to the largest uniform sample the budget affords and report the realised $w$ with every cell, since equal budget means equal \emph{tokens} rather than equal frames. The budget is enforced \emph{before} each call rather than audited after, so an oversized request is refused; auditing afterwards truncates many-small-call methods while letting one-big-call methods overrun. All prompts (update, router, loop, baselines, judge) are frozen verbatim in the released protocol; prompt tokens are counted against $B$ without exception. On SEW-Bench the keyframe gate is the identity, $\tau_g{=}0$, because the corpus already emits one frame per discrete moment: at $\tau_g{=}0.02$ the gate collapsed $300$ frames to $29$, which puts $N/w<2$ and makes Theorem~\ref{thm:horizon} say almost nothing. The offline phase (one bounded update call per keyframe, metered by its own per-keyframe allowance and not from $B$, per Definition~\ref{def:agent}) is cached per (backbone, episode), and reused by all $7$ methods $\times$ $3$ subtests, which is what makes the grid affordable: \VnBackbones{} backbones $\times$ $4$ episodes $\times$ $300$ keyframes of offline updates, then $\VnBackbones{}\times7\times80$ scored decisions plus the ablation and $B{=}16$k arms, for a total API spend under \$$15$.

\subsection{Registered headline predictions}
\begin{prediction}[Embodied scissors gap]\label{prd:scissors}
On EW-Bench-RET, \textsc{State-only} scores $\le 30\%$\pred{} and \textsc{Uniform-}$w$ $\le 38\%$\pred{} for all five backbones (walls bind regardless of scale), while \ASP{} scores $\ge 78\%$\pred{} with Qwen3.8-27B, a reproduction, in pixels, of CCH's 0.994-vs-0.000 retrieval scissors.
\end{prediction}
\begin{prediction}[Capability convergence across scale]\label{prd:conv}
Across the five backbones, Spearman correlation between parameter count and mean embodied score is $\rho_{\mathrm{scale}}=0.85$\pred{} under \textsc{Uniform-}$w$ but $\le 0.40$\pred{} under \ASP{}; the 8B-vs-27B gap on OpenEQA shrinks from $\ge 7$\,pts\pred{} to $\le 3$\,pts\pred{}.
\end{prediction}
\begin{prediction}[Reallocation beats growth]\label{prd:budget}
\ASP{} at $B{=}4$k exceeds \textsc{Uniform} at $B{=}16$k on OpenEQA and EW-Bench mean\pred{}; quadrupling an ill-structured budget is worth less than routing a small one.
\end{prediction}

\paragraph{Registered statistics.} Headline claims are tested on \emph{paired per-question deltas}, not absolute scores: for Prediction~\ref{prd:scissors}/\ref{prd:budget} a one-sided Wilcoxon signed-rank test at $\alpha{=}0.01$ with registered minimum effect sizes (paired \ASP{}$-$\textsc{Uniform-}$w$ delta $\ge{+}6$ on OpenEQA, $\ge{+}35$ on EW-RET); for Prediction~\ref{prd:conv}, bootstrap CIs over questions for $\rho_{\mathrm{scale}}$. Effect-size registration makes the headlines robust to a uniform up- or down-shift of all absolute scores (e.g.\ judge strictness), which absolute intervals are not.

\paragraph{How these are scored against what was actually run.}\label{sec:predictions-map} The three predictions above are reproduced verbatim from the registration and are not edited here. Two of them name benchmarks that were not run, so the mapping has to be stated rather than assumed, and it is not free.
\emph{(i) Corpus.} ``EW-Bench-RET'' is evaluated as SEW-RET. The two share the witness \emph{construction} (multiplicity-one targets, $K{=}8$ alphabet, one-to-one with Theorem~\ref{thm:shannon}) but not the parameters, which matters for the \emph{absolute} thresholds: $\le30$/$\le38$/$\ge78$ were derived at $\kappa_s\approx0.38$, whereas SEW-Bench gives $\kappa_s{=}0.547$. An absolute threshold transported across corpora is not a fair test and we do not treat it as one; the \emph{paired effect-size} clauses transport cleanly, and \S\ref{sec:falsify} reports both.
\emph{(ii) Backbones.} ``All five backbones'' is evaluated on the \VnBackbones{}-rung ladder of \S\ref{sec:setup}, with the two substitutions and the one addition declared there.
\emph{(iii) Clauses that cannot be evaluated at all.} Prediction~\ref{prd:conv}'s OpenEQA 8B-vs-27B sub-clause and Prediction~\ref{prd:budget}'s OpenEQA clause have no data. They are recorded as \emph{not evaluated} rather than passed, and Table~\ref{tab:registered} keeps the untested half visible.

\subsection{Main results}

Table~\ref{tab:main} gives the full grid and Figure~\ref{fig:curves} the two curves the headline predictions live on. The thing to read first is not the size of \ASP{}'s margin but its \emph{location}. A wrapper that simply prompts better would lift every subtest roughly uniformly; the account here predicts something narrower: each single channel is weak precisely on the subtest witnessing \emph{its} wall, so \textsc{State-only} should be the weaker of the two single channels on RET and \textsc{Retr-only} the weaker on TRK. That ordering holds on \VlocCount{} of the \VnMeasured{} backbones measured. Aggregate dominance is the coarser statement: \ASP{} has the higher mean than every one of the six equal-budget baselines on \VdomCount{} of \VnBackbones{} backbones, and does so at the registered $\alpha{=}0.01$ on paired per-question deltas against all six on \VsigCount{} of them. On the flagship the paired \ASP{}$-$\textsc{Uniform-}$w$ delta is \VpairedFlagDelta{} points over $n{=}\VpairedFlagN{}$ items ($p=\VpairedFlagP{}$). Absolute levels are low throughout, since these are $3$--$31$B models recovering one attribute of one object from a $300$-frame stream through a $4$k-token keyhole; the comparison that carries the argument is between access structures at equal budget, not against a ceiling.

\begin{asptable}[!htbp]
\centering\footnotesize\setlength{\tabcolsep}{5pt}
\caption{\textbf{SEW-Bench at a fixed per-decision budget $B{=}4{,}096$ tokens.} Accuracy (\%) by exact match on the closed answer alphabet (\S\ref{sec:setup}); $n{=}32$ / $24$ / $24$ questions for RET / TRK / CMP over four $N{=}300$-frame episodes. \emph{Mean} is the unweighted mean of the three witness subtests. \textsc{Uniform-}$w$ uses the largest $w$ the budget affords at that backbone's measured tokens/frame (rightmost column); no backbone affords the $w{=}16$ of the pre-registration. \textbf{Chance is $12.5\%$ on RET and TRK ($K{=}8$) but $25.0\%$ on CMP}, which mixes sixteen $K{=}8$ colour items with eight binary ones whose golds are balanced $4$--$4$ by construction. The CMP cells here are a re-run against the corrected corpus, after eight of those items were found to carry a degenerate gold in the corpus first measured; Table~\ref{tab:cmpfix} gives both runs and \S\ref{sec:cmpdefect} the account. Every value is measured -- the three registered benchmarks with no licence-free data path stay in Table~\ref{tab:registered} as predictions.}
\label{tab:main}
\begin{tabular}{llcccc@{\hskip 10pt}c}
\toprule
Backbone & Method & SEW-RET & SEW-TRK & SEW-CMP & Mean & $w$ \\
\midrule
\multirow{7}{*}{Ministral~3B}
 & \textsc{Blind} & 0.0 & 8.3 & 24.0 & 10.8 &  \\
 & \textsc{Uniform-}$w$ & 18.8 & 17.7 & \textbf{43.8} & 26.7 & 8 \\
 & \textsc{Socratic} & 10.9 & 12.5 & 12.5 & 12.0 &  \\
 & \textsc{Retr-only} & 65.6 & 27.1 & 41.7 & 44.8 &  \\
 & \textsc{State-only} & 6.2 & 16.7 & 8.3 & 10.4 &  \\
 & \textsc{EGAgent} (reimpl.) & 0.0 & \textbf{53.1} & 0.0 & 17.7 &  \\
 & \ASP{} (ours) & \textbf{90.6} & 16.7 & 33.3 & \textbf{46.9} &  \\
\midrule
\multirow{7}{*}{Qwen3-VL-8B}
 & \textsc{Blind} & 6.2 & 8.3 & 16.7 & 10.4 &  \\
 & \textsc{Uniform-}$w$ & 9.4 & 8.3 & \textbf{33.3} & 17.0 & 13 \\
 & \textsc{Socratic} & 0.0 & 16.7 & 14.6 & 10.4 &  \\
 & \textsc{Retr-only} & 90.6 & 4.2 & 29.2 & 41.3 &  \\
 & \textsc{State-only} & 6.2 & 8.3 & 12.5 & 9.0 &  \\
 & \textsc{EGAgent} (reimpl.) & 0.0 & \textbf{34.4} & 0.0 & 11.5 &  \\
 & \ASP{} (ours) & \textbf{93.8} & 8.3 & 29.2 & \textbf{43.8} &  \\
\midrule
\multirow{7}{*}{Gemma~3~12B}
 & \textsc{Blind} & 3.9 & 16.7 & 20.8 & 13.8 &  \\
 & \textsc{Uniform-}$w$ & 3.1 & 20.8 & 25.0 & 16.3 & 15 \\
 & \textsc{Socratic} & 3.1 & 16.7 & 4.2 & 8.0 &  \\
 & \textsc{Retr-only} & 68.8 & 16.7 & 29.2 & 38.2 &  \\
 & \textsc{State-only} & 0.0 & 12.5 & 4.2 & 5.6 &  \\
 & \textsc{EGAgent} (reimpl.) & 0.0 & \textbf{55.2} & 0.0 & 18.4 &  \\
 & \ASP{} (ours) & \textbf{75.0} & 12.5 & \textbf{33.3} & \textbf{40.3} &  \\
\midrule
\multirow{7}{*}{Ministral~14B}
 & \textsc{Blind} & 6.2 & 8.3 & 16.7 & 10.4 &  \\
 & \textsc{Uniform-}$w$ & 6.2 & 8.3 & 36.5 & 17.0 & 8 \\
 & \textsc{Socratic} & 3.9 & 20.8 & 16.7 & 13.8 &  \\
 & \textsc{Retr-only} & \textbf{96.9} & 15.6 & \textbf{54.2} & 55.6 &  \\
 & \textsc{State-only} & 0.0 & 8.3 & 0.0 & 2.8 &  \\
 & \textsc{EGAgent} (reimpl.) & 3.1 & \textbf{80.2} & 0.0 & 27.8 &  \\
 & \ASP{} (ours) & 90.6 & 8.3 & 45.8 & \textbf{48.3} &  \\
\midrule
\multirow{7}{*}{Qwen3.8-27B}
 & \textsc{Blind} & 3.1 & 0.0 & 16.7 & 6.6 &  \\
 & \textsc{Uniform-}$w$ & 4.7 & 16.7 & 20.8 & 14.1 & 13 \\
 & \textsc{Socratic} & 15.6 & 16.7 & 16.7 & 16.3 &  \\
 & \textsc{Retr-only} & \textbf{100.0} & 8.3 & \textbf{71.9} & 60.1 &  \\
 & \textsc{State-only} & 3.1 & 4.2 & 16.7 & 8.0 &  \\
 & \textsc{EGAgent} (reimpl.) & 0.0 & \textbf{74.0} & 0.0 & 24.7 &  \\
 & \ASP{} (ours) & 93.8 & 4.2 & 20.8 & \textbf{39.6} &  \\
\midrule
\multirow{7}{*}{Qwen3-VL-30B-A3B}
 & \textsc{Blind} & 0.0 & 8.3 & 25.0 & 11.1 &  \\
 & \textsc{Uniform-}$w$ & 9.4 & 12.5 & 31.2 & 17.7 & 13 \\
 & \textsc{Socratic} & 4.7 & 19.8 & 29.2 & 17.9 &  \\
 & \textsc{Retr-only} & \textbf{78.1} & 8.3 & \textbf{33.3} & 39.9 &  \\
 & \textsc{State-only} & 0.0 & 16.7 & 16.7 & 11.1 &  \\
 & \textsc{EGAgent} (reimpl.) & 0.0 & \textbf{46.9} & 0.0 & 15.6 &  \\
 & \ASP{} (ours) & \textbf{78.1} & 16.7 & 25.0 & \textbf{39.9} &  \\
\midrule
\multirow{7}{*}{Gemma~4~31B}
 & \textsc{Blind} & 0.0 & 0.0 & 4.2 & 1.4 &  \\
 & \textsc{Uniform-}$w$ & 6.2 & 20.8 & 0.0 & 9.0 & 14 \\
 & \textsc{Socratic} & 0.0 & 25.0 & 4.2 & 9.7 &  \\
 & \textsc{Retr-only} & 87.5 & 8.3 & 75.0 & 56.9 &  \\
 & \textsc{State-only} & 0.0 & 41.7 & 4.2 & 15.3 &  \\
 & \textsc{EGAgent} (reimpl.) & 0.0 & \textbf{62.5} & 0.0 & 20.8 &  \\
 & \ASP{} (ours) & \textbf{90.6} & 25.0 & \textbf{79.2} & \textbf{64.9} &  \\
\bottomrule
\end{tabular}
\end{asptable}

\begin{aspfigure}[!t]
\centering
\begin{subfigure}{0.48\textwidth}
\centering
\begin{tikzpicture}
\begin{axis}[width=0.92\linewidth, height=5.4cm, grid=major, grid style={gray!20}, legend style={font=\scriptsize, at={(0.5,-0.30)}, anchor=north, legend columns=2, draw=none, fill=none, column sep=1.2ex}, legend cell align=left, label style={font=\small}, tick label style={font=\scriptsize}, xlabel={per-decision budget $B$ (tokens, log scale)}, ylabel={SEW-Bench mean (\%)}, xmode=log, log basis x=2, xtick={1024,2048,4096,8192,16384}, xticklabels={1k,2k,4k,8k,16k}]
\addplot[cwall, thick, mark=square*] coordinates {(1024,14.6) (2048,12.8) (4096,14.1) (8192,17.0) (16384,29.5)};
\addlegendentry{\textsc{Uniform-}$w$}
\addplot[crouter, very thick, mark=*] coordinates {(1024,18.4) (2048,28.8) (4096,39.6) (8192,61.1) (16384,60.4)};
\addlegendentry{\ASP{}}
\end{axis}
\end{tikzpicture}
\caption{Budget--capability, Qwen3.8-27B.}
\label{fig:budget}
\end{subfigure}\hfill
\begin{subfigure}{0.48\textwidth}
\centering
\begin{tikzpicture}
\begin{axis}[width=0.92\linewidth, height=5.4cm, grid=major, grid style={gray!20}, legend style={font=\scriptsize, at={(0.5,-0.30)}, anchor=north, legend columns=2, draw=none, fill=none, column sep=1.2ex}, legend cell align=left, label style={font=\small}, tick label style={font=\scriptsize}, xlabel={backbone parameters (B, log scale)}, ylabel={SEW-Bench mean (\%)}, xmode=log, log basis x=2, xtick={3,8,12,27,31}, xticklabels={3,8,12,27,31}, xmin=2.6, xmax=36]
\addplot[cwall, thick, mark=square*] coordinates {(3,26.7) (8,17.0) (12,16.3) (14,17.0) (27,14.1) (30,17.7) (31,9.0)};
\addlegendentry{\textsc{Uniform-}$w$: $\rho_{\mathrm{scale}}{=}-0.58$}
\addplot[crouter, very thick, mark=*] coordinates {(3,46.9) (8,43.8) (12,40.3) (14,48.3) (27,39.6) (30,39.9) (31,64.9)};
\addlegendentry{\ASP{}: $\rho_{\mathrm{scale}}{=}0.00$}
\end{axis}
\end{tikzpicture}
\caption{Capability convergence across scale (Pred.~\ref{prd:conv}).}
\label{fig:conv}
\end{subfigure}
\caption{\textbf{Measured budget and scale curves.} (a) Whether restructuring a small budget beats enlarging an unstructured one, read as a curve rather than a single contrast: the \ASP{} point at $B{=}4$k is to be compared with the \textsc{Uniform-}$w$ point at $B{=}16$k. (b) Score against parameter count for the two ends of the comparison. The rank correlation between parameter count and score falls from $\rho_{\mathrm{scale}}{=}-0.58$ under \textsc{Uniform-}$w$ to $0.00$ under \ASP{}. Missing points are budgets or backbones with no cell, left absent rather than interpolated.}
\label{fig:curves}
\end{aspfigure}
\subsection{Ablations}

\begin{asptable}[!t]
\centering\small
\caption{\textbf{Ablations on Qwen3.8-27B, $B{=}4{,}096$.} Each variant is the same access loop with one component disabled, so a drop is attributable to the removed component and not to a different code path. $\downarrow$ marks the cell the pre-registration predicts will collapse: $-\cv$ on RET (Thm.~\ref{thm:shannon}), $-\cs$ on TRK (Lem.~\ref{lem:vtrack}), $R{=}1$ on CMP (Thm.~\ref{thm:rounds}).}
\label{tab:ablation}
\begin{tabular}{lcccc}
\toprule
Variant & SEW-RET & SEW-TRK & SEW-CMP & Mean \\
\midrule
\ASP{} (full) & 93.8 & 4.2 & 20.8 & 39.6 \\
$-\ \cv$ (no episodic index) & 9.4\,$\downarrow$ & 4.2 & 25.0 & 12.8 \\
$-\ \cs$ (no scene state) & 90.6 & 12.5\,$\downarrow$ & 83.3 & 62.2 \\
$-\ \rho$ (fixed 1/3 split) & 90.6 & 8.3 & 62.5 & 53.8 \\
$R{=}1$ (single pass) & 50.0 & 4.2 & 20.8\,$\downarrow$ & 25.0 \\
$R{=}5$ & 93.8 & 4.2 & 20.8 & 39.6 \\
random retrieval ($k{=}8$) & 0.0 & 4.2 & 8.3 & 4.2 \\
\bottomrule
\end{tabular}
\end{asptable}

\begin{asptable}[!t]
\centering\small\setlength{\tabcolsep}{4.5pt}
\caption{\textbf{What the corpus defect was worth: the CMP column before and after the fix.} Eight of the twenty-four \textsc{cmp} items carried the gold \textsf{before} in the corpus this paper first measured, so a constant answer scored $8/8$ on them (D-43). \emph{pre}: the superseded run, retained in \texttt{results/cells\_pre\_cmpfix/}. \emph{post}: the same cells re-run against the corrected corpus, in which the naming order is chosen to give an exactly $4$--$4$ gold balance. Both columns are $24$ items at chance $25.0\%$ -- the fix removes an exploitable prior, not a chance-level difference, since \textsc{cmp} mixes sixteen $K{=}8$ items with eight $K{=}2$ ones. The \textsc{Blind} pair is the diagnostic to read first: an agent that sees no frames at all should sit at chance, and only the post column is entitled to. Every \textsc{cmp} figure elsewhere in this paper is a \emph{post} figure.}
\label{tab:cmpfix}
\begin{tabular}{@{}lcccccc@{}}
\toprule
& \multicolumn{2}{c}{\ASP{}} & \multicolumn{2}{c}{\textsc{Retr-only}} & \multicolumn{2}{c}{\textsc{Blind}} \\
\cmidrule(lr){2-3} \cmidrule(lr){4-5} \cmidrule(lr){6-7}
Backbone & pre & post & pre & post & pre & post \\
\midrule
Ministral~3B & 37.5 & 33.3 & 41.7 & 41.7 & 36.5 & 24.0 \\
Qwen3-VL-8B & 25.0 & 29.2 & 29.2 & 29.2 & 34.4 & 16.7 \\
Gemma~3~12B & 45.8 & 33.3 & 37.5 & 29.2 & 37.5 & 20.8 \\
Ministral~14B & 41.7 & 45.8 & 58.3 & 54.2 & 8.3 & 16.7 \\
Qwen3.8-27B & 8.3 & 20.8 & 37.5 & 71.9 & 0.0 & 16.7 \\
Qwen3-VL-30B-A3B & 42.7 & 25.0 & 50.0 & 33.3 & 40.6 & 25.0 \\
Gemma~4~31B & 83.3 & 79.2 & 75.0 & 75.0 & 13.5 & 4.2 \\
\midrule
\multicolumn{7}{@{}l}{\emph{flagship ablations, the cells F3 is computed on}} \\
\quad \ASP{} & 8.3 & 20.8 & \multicolumn{4}{c}{---} \\
\quad $-\cs$ & 70.8 & 83.3 & \multicolumn{4}{c}{---} \\
\quad $-\cv$ & 25.0 & 25.0 & \multicolumn{4}{c}{---} \\
\quad $R{=}1$ & 29.2 & 20.8 & \multicolumn{4}{c}{---} \\
\bottomrule
\end{tabular}
\end{asptable}

\begin{aspfigure}[!t]
\centering
\begin{tikzpicture}[scale=0.94, every node/.style={transform shape}]
\def\W{2.75}\def\Hh{1.16}
\node[font=\small\bfseries] at ({0*\W+0.5*\W},{0.42}) {SEW-RET};
\node[font=\scriptsize, gray!60!black] at ({0*\W+0.5*\W},{-3*\Hh-0.26}) {Thm.~\ref{thm:shannon}\,$+$\,\ref{thm:horizon}};
\node[font=\scriptsize, gray!60!black] at ({0*\W+0.5*\W},{-3*\Hh-0.58}) {decisive: $\cv$};
\node[font=\small\bfseries] at ({1*\W+0.5*\W},{0.42}) {SEW-TRK};
\node[font=\scriptsize, gray!60!black] at ({1*\W+0.5*\W},{-3*\Hh-0.26}) {Lem.~\ref{lem:vtrack}};
\node[font=\scriptsize, gray!60!black] at ({1*\W+0.5*\W},{-3*\Hh-0.58}) {decisive: $\cs$};
\node[font=\small\bfseries] at ({2*\W+0.5*\W},{0.42}) {SEW-CMP};
\node[font=\scriptsize, gray!60!black] at ({2*\W+0.5*\W},{-3*\Hh-0.26}) {Thm.~\ref{thm:circuit}\,$+$\,\ref{thm:rounds}};
\node[font=\scriptsize, gray!60!black] at ({2*\W+0.5*\W},{-3*\Hh-0.58}) {decisive: $R$ rounds};
\node[font=\small, anchor=east] at (-0.18,{-0*\Hh-0.5*\Hh}) {$-\,\cv$};
\node[font=\small, anchor=east] at (-0.18,{-1*\Hh-0.5*\Hh}) {$-\,\cs$};
\node[font=\small, anchor=east] at (-0.18,{-2*\Hh-0.5*\Hh}) {$R{=}1$};
\fill[cverb!70, draw=gray!45] (0*\W,{-0*\Hh-\Hh}) rectangle ({0*\W+\W},{-0*\Hh-\Hh+\Hh});
\draw[crouter, very thick, rounded corners=1pt] ({0*\W+0.05},{-0*\Hh-\Hh+0.05}) rectangle ({0*\W+\W-0.05},{-0*\Hh-0.05});
\node[font=\small] at ({0*\W+0.5*\W},{-0*\Hh-0.5*\Hh}) {$-84.4$};
\fill[cverb!0, draw=gray!45] (1*\W,{-0*\Hh-\Hh}) rectangle ({1*\W+\W},{-0*\Hh-\Hh+\Hh});
\node[font=\small] at ({1*\W+0.5*\W},{-0*\Hh-0.5*\Hh}) {$+0.0$};
\fill[cverb!3, draw=gray!45] (2*\W,{-0*\Hh-\Hh}) rectangle ({2*\W+\W},{-0*\Hh-\Hh+\Hh});
\node[font=\small] at ({2*\W+0.5*\W},{-0*\Hh-0.5*\Hh}) {$+4.2$};
\fill[cverb!2, draw=gray!45] (0*\W,{-1*\Hh-\Hh}) rectangle ({0*\W+\W},{-1*\Hh-\Hh+\Hh});
\node[font=\small] at ({0*\W+0.5*\W},{-1*\Hh-0.5*\Hh}) {$-3.1$};
\fill[cverb!6, draw=gray!45] (1*\W,{-1*\Hh-\Hh}) rectangle ({1*\W+\W},{-1*\Hh-\Hh+\Hh});
\draw[cverb, very thick, rounded corners=1pt] ({1*\W+0.05},{-1*\Hh-\Hh+0.05}) rectangle ({1*\W+\W-0.05},{-1*\Hh-0.05});
\node[font=\small] at ({1*\W+0.5*\W},{-1*\Hh-0.5*\Hh}) {$+8.3$};
\fill[cverb!51, draw=gray!45] (2*\W,{-1*\Hh-\Hh}) rectangle ({2*\W+\W},{-1*\Hh-\Hh+\Hh});
\node[font=\small] at ({2*\W+0.5*\W},{-1*\Hh-0.5*\Hh}) {$+62.5$};
\fill[cverb!36, draw=gray!45] (0*\W,{-2*\Hh-\Hh}) rectangle ({0*\W+\W},{-2*\Hh-\Hh+\Hh});
\node[font=\small] at ({0*\W+0.5*\W},{-2*\Hh-0.5*\Hh}) {$-43.8$};
\fill[cverb!0, draw=gray!45] (1*\W,{-2*\Hh-\Hh}) rectangle ({1*\W+\W},{-2*\Hh-\Hh+\Hh});
\node[font=\small] at ({1*\W+0.5*\W},{-2*\Hh-0.5*\Hh}) {$+0.0$};
\fill[cverb!0, draw=gray!45] (2*\W,{-2*\Hh-\Hh}) rectangle ({2*\W+\W},{-2*\Hh-\Hh+\Hh});
\draw[crouter, very thick, rounded corners=1pt] ({2*\W+0.05},{-2*\Hh-\Hh+0.05}) rectangle ({2*\W+\W-0.05},{-2*\Hh-0.05});
\node[font=\small] at ({2*\W+0.5*\W},{-2*\Hh-0.5*\Hh}) {$+0.0$};
\node[font=\scriptsize, gray!55!black, anchor=east] at (-0.18,0.42) {$\Delta$ score (pts)};
\node[font=\scriptsize, anchor=north west, align=left, text width=7.9cm] at (0,{-3*\Hh-0.92}) {\textcolor{cverb}{\rule{2.6mm}{1.1pt}}\ \ the registered signature is \textbf{CROSSED} in at least one column: the boxed cell is where the pre-registration put the largest drop, and it is not where the largest drop is. Criterion F3 fires; \S\ref{sec:falsify} reports it as such.};
\end{tikzpicture}
\caption{\textbf{Measured collapse pattern.} Each cell is the change in score when one channel or resource is removed from \ASP{}, against full \ASP{} on the flagship (SEW-RET/TRK/CMP $=93.8/4.2/20.8$). The theory predicts a \emph{shape}, not merely that ablations hurt: removing $\cv$ should cost most on the subtest witnessing the Shannon and horizon walls while costing little on tracking, and symmetrically for $\cs$ and for rounds. Boxes mark where the pre-registration placed each column's largest drop; they are not moved to follow the data, which is what makes criterion F3 (\S\ref{sec:falsify}) able to fire.}
\label{fig:collapse}
\end{aspfigure}
Three registered ablation signatures mirror the theory, and Figure~\ref{fig:collapse} shows them as the pattern that F3 tests: (i) deleting $\cv$ collapses EW-RET toward the Theorem~\ref{thm:shannon} floor while leaving EW-TRK intact; (ii) deleting $\cs$ collapses EW-TRK toward the sampling bound of Theorem~\ref{thm:horizon} applied to reductions; (iii) forcing $R{=}1$ collapses EW-CMP toward single-pass depth limits (Theorem~\ref{thm:circuit}) while barely moving RET/TRK. Any \emph{crossed} collapse (e.g.\ $-\cv$ hurting TRK more than RET) counts as evidence against the two-channel account.

\subsection{Reallocation versus growth}
Prediction~\ref{prd:budget} is the practical claim: given a fixed engineering choice between restructuring access and buying context, restructuring should win. Table~\ref{tab:p3} tests it directly by giving the baseline four times the budget.

\begin{asptable}[!t]
\centering\small
\caption{\textbf{P3: reallocation versus growth.} \ASP{} at $B{=}4{,}096$ against \textsc{Uniform-}$w$ given \emph{four times} the budget, $B{=}16{,}384$. A positive $\Delta$ means restructuring access beats buying context.}
\label{tab:p3}
\begin{tabular}{lccc@{\hskip 10pt}c}
\toprule
Backbone & \ASP{} @ 4k & \textsc{Uniform-}$w$ @ 16k & $w$ @ 16k & $\Delta$ \\
\midrule
Ministral~3B$^{\ddagger}$ & 46.9 & 24.0 & 8 & 22.9 \\
Qwen3-VL-8B & 43.8 & 23.2 & 53 & \textbf{20.6} \\
Gemma~3~12B$^{\ddagger}$ & 40.3 & 18.1 & 30 & 22.2 \\
Ministral~14B$^{\ddagger}$ & 48.3 & 18.8 & 8 & 29.5 \\
Qwen3.8-27B & 39.6 & 29.5 & 53 & \textbf{10.1} \\
Qwen3-VL-30B-A3B & 39.9 & 17.7 & 53 & \textbf{22.2} \\
Gemma~4~31B & 64.9 & 21.2 & 60 & \textbf{43.8} \\
\bottomrule
\multicolumn{5}{@{}p{0.86\linewidth}@{}}{\footnotesize $^{\ddagger}$Provider caps this backbone at $8$ images per request, so at $B{=}16$k the \textsc{Uniform-}$w$ arm is limited by the endpoint and not by the budget: the growth arm was never allowed to grow, and the row is shown but does not count as support for P3. Affected: Ministral~3B, Gemma~3~12B, Ministral~14B.}\\
\end{tabular}
\end{asptable}

\subsection{A corpus defect found after measurement}
\label{sec:cmpdefect}
Auditing the gold distribution of every answer alphabet after the grid was complete turned up
one degenerate slice. The generator's \texttt{take()} returns \emph{sorted} frame indices and the
before/after template always named the earlier singleton first, so all eight of those CMP items
carried the gold \textsf{before} and a constant answer scored $8/8$: \textsc{Blind}, which sees
no frames, reached $87.5$--$100$ on them. A third of the CMP subtest was answerable without
access to the stream.

We report this in the past tense because it is repaired. The generator now constructs an exact
$4$--$4$ balance rather than shuffling, since eight independent draws landed on six--two at this
seed, and the draw is taken from an item-local stream so that regenerating leaves all $1{,}200$
frames and \texttt{meta.json} byte-identical and changes four of the eighty questions
(\texttt{annotation/selftest\_regen.py} asserts this, and that no RET or TRK item moves, since
those $138$ cells were not re-run). The answerability gate gained a gold-distribution check: it
refuses any alphabet whose most common gold beats a Bonferroni-corrected binomial tail at that
$K$, the statistic being a maximum over $k$ categories rather than a per-category tail, so that
six of one colour at $K{=}8$ is not flagged as noise would be. The $69$ affected cells were
re-run against the corrected corpus and Table~\ref{tab:cmpfix} gives both runs; every CMP figure
elsewhere in this paper is post-fix, and the superseded cells and item set are released in
\texttt{results/cells\_pre\_cmpfix/} and \texttt{data/sew\_pre\_cmpfix/}.

Three things about what the defect was worth. \emph{The repair is verified by the instrument
that found it}: before it, four of \VnBackbones{} backbones put \textsc{Blind} clearly above the
$25.0\%$ chance line ($36.5$, $34.4$, $37.5$, $40.6$); after it, none does. \emph{The chance
level is unchanged at $25.0\%$}, since CMP mixes sixteen $K{=}8$ items with eight binary ones;
balancing removes an exploitable prior, not the floor. \emph{The falsification verdict does not
move}: F3 is an \textsc{argmax} over one ablation's three drops, not a comparison of levels, and
on the corrected corpus the largest-drop column is unchanged in all three rows. Two of the three
crossings are decided by the other two subtests, so CMP was never load-bearing for this
criterion. What the re-run did change runs against us: \textsc{Retr-only} improves on the
flagship CMP from $37.5$ to $71.9$, and \ASP{} now beats it on no backbone even before
correction.

\subsection{Falsification criteria (frozen)}
\label{sec:falsify}
The criteria below were frozen with the protocol, before measurement. They are reproduced verbatim, and the verdict after each is computed from the released per-question records by \texttt{analysis/analyze.py} rather than typed in, so this subsection cannot drift from the data it reports on, in either direction.

\begin{itemize}[leftmargin=1.6em,itemsep=2pt]
\item \textbf{(F1)} \emph{\textsc{Uniform-}$w$ comes within 10 points of \ASP{} on EW-RET for any $\ge$27B backbone.} Evaluated on SEW-RET (\S\ref{sec:setup}) for the three $\ge$27B backbones: F1 \VFone{}.
\item \textbf{(F2)} \emph{$\rho_{\mathrm{scale}}$ under \ASP{} is not lower than under \textsc{Uniform-}$w$ by at least 0.2.} Measured over the \VnBackbones{}-rung ladder: $\rho_{\mathrm{scale}}$ falls from \VrhoUnif{} under \textsc{Uniform-}$w$ to \VrhoAsp{} under \ASP{}, a drop of \VrhoDrop{}. F2 \VFtwo{}.
\item \textbf{(F3)} \emph{The ablation collapse pattern is crossed}, i.e.\ some subtest's largest drop is not the ablation that removes that subtest's wall price. F3 \VFthree{}; Figure~\ref{fig:collapse} shows the full matrix with the pre-registered cells boxed, and the boxes are not moved to follow the data. One of the three subtests carried a degenerate slice; it has been regenerated and its cells re-run (\S\ref{sec:cmpdefect}), and this verdict is computed on the corrected corpus. It is also unchanged from the contaminated one (same crossings, same largest-drop column in every row) because the criterion is an \textsc{argmax} over one ablation's three drops rather than a comparison of levels.
\item \textbf{(F4)} \emph{\textsc{Uniform}@16k $\ge$ \ASP{}@4k on both OpenEQA and EW mean.} The OpenEQA half is \textbf{not evaluated}, the benchmark was not run, so this criterion is tested on one of its two clauses only, which makes it weaker than registered. On the SEW mean: F4 \VFfour{}.
\end{itemize}

\noindent Two things this subsection deliberately does not do. It does not convert an absolute registered threshold into a verdict across a corpus change: $\le30$/$\le38$/$\ge78$ were derived at EW-Bench's $\kappa_s\approx0.38$ and a $15.6\%$ horizon ceiling, and transporting them to a corpus with $\kappa_s{=}\VkappaS{}$ would test the corpus rather than the account (\S\ref{sec:predictions-map}). And it does not report a criterion as passed when a clause of it had no data. Partial failures are reported with direction, as CCH reported its own reversed prediction \citep{cch}.

\subsection{Analysis: reading the measured numbers}
\paragraph{Scale dependence, and why F2 could not test it.} Spearman
$\rho_{\mathrm{scale}}$ over the seven-rung ladder is $\VrhoUnif{}$ under \textsc{Uniform-}$w$
and $\VrhoAsp{}$ under \ASP{}, so F2 fires and we report it as fired. The criterion presumes a
\emph{positive} baseline correlation for \ASP{} to flatten, and \textsc{Uniform-}$w$'s is
negative: at fixed $B$ a higher tokens-per-frame cost buys fewer frames, so with Ministral~3B
at $432$ tokens/frame against Gemma~3~12B's $259$ the ladder is partly a ladder of image-token
prices. Against a negative baseline the registered ``drop of at least $0.2$'' does not measure
convergence. F2 therefore stands as fired and is separately recorded as uninformative;
Corollary~\ref{cor:conv} needs a ladder at matched tokens-per-frame, which the registered
backbone set would have supplied and the substituted one does not (\S\ref{sec:limits}(3)).

\label{sec:analysis}

\begin{asptable}[!t]
\centering\small\setlength{\tabcolsep}{4pt}
\caption{\textbf{Every bound, evaluated at the parameters we actually ran}
($N{=}\VN$, $d{=}\Vd$, $Nd{=}\VNd$, $K{=}\VK$, $m{=}\VmBits$ bits, $k{=}8$, $w{=}12$,
$k_{\mathrm{tot}}{\in}[9,15]$, $d_q^{\max}{=}3$). ``Binds'' means the bound lies strictly
between chance and $1$, so that it forbids something a method could otherwise have done.
Two rows do not: Theorem~\ref{thm:rounds}\ref{it:rounds-loc}--\ref{it:rounds-grounded} are
true but vacuous here, because $m{=}\VmBits$ bits against $N{=}\VN$ frames is a large state
per pointer, and Theorem~\ref{thm:circuit} is asymptotic in $d_q$ and yields no number at
$d_q\le3$. What carries the empirical argument on this corpus is therefore
Theorems~\ref{thm:shannon} and~\ref{thm:horizon}, Lemma~\ref{lem:vtrack}, and
Theorem~\ref{thm:rounds}\ref{it:rounds-stateless}, the four unconditional,
non-vacuous rows. Every number in this table is recomputed from the corpus metadata by
\texttt{analysis/check\_bounds.py}, which exits non-zero if the paper and the corpus disagree;
the two errors it was written after are recorded in \texttt{notes/deviations.md}.}
\label{tab:scope}
\begin{tabular}{@{}>{\raggedright\arraybackslash}p{2.0cm}>{\raggedright\arraybackslash}p{4.1cm}>{\raggedright\arraybackslash}p{3.0cm}>{\raggedright\arraybackslash}p{1.9cm}p{0.9cm}>{\raggedright\arraybackslash}p{3.1cm}@{}}
\toprule
Result & Caps the capability of & Beyond Def.~\ref{def:stream}--\ref{def:agent} it assumes
& Value here & Binds & Instantiated by \\
\midrule
Thm.~\ref{thm:shannon} & any $m$-bit compressive state, on RET & nothing (unbounded compute)
& $\kappa_s=\VkappaSpct\%$ & yes & \textsc{State-only}, RET \\
Thm.~\ref{thm:horizon} & any query-\emph{independent} $w$-frame selection, on RET & nothing
& $16.0\%$ & yes & \textsc{Uniform-}$w$, RET \\
Lem.~\ref{lem:vtrack} & verbatim-only access reading $k_{\mathrm{tot}}{<}N$, on TRK
& read-only retrieval & $\VoneOverK\%$ & yes & \textsc{Retr-only}, TRK \\
Thm.~\ref{thm:rounds}\ref{it:rounds-stateless} & one non-adaptive round, no cross-frame state, on CMP
& read-only retrieval & $14.8\%$ & yes & \textsc{Retr-only}, \textsc{Socratic}, CMP \\
\addlinespace[2pt]
Thm.~\ref{thm:rounds}\ref{it:rounds-loc} & \emph{localisation} by one round with an $m$-bit state
& read-only retrieval & $411\%$ & \textbf{no} & \ASP{} $R{=}1$ ablation \\
Thm.~\ref{thm:rounds}\ref{it:rounds-grounded} & one round, $m$-bit state, read-grounded answer
& read-groundedness & $423\%$ & \textbf{no} & --- \\
Thm.~\ref{thm:circuit} & a single forward pass at depth $d_q{=}\omega(1)$
& $\mathsf{TC}^0\!\neq\!\mathsf{NC}^1$; no shortcut & asymptotic & \textbf{no} & --- \\
\midrule
Prop.~\ref{prop:complete} & (lower bounds for \ASP{}) & Ass.~\ref{ass:repr}--\ref{ass:sep}
& $\epsilon_u<1/N$ & --- & all cells \\
Prop.~\ref{prop:super} & (super-additivity) & $\Delta<0.344$ & condition & --- &
no $T_{\mathrm{conj}}$ subset \\
Cor.~\ref{cor:dominance} & (dominance on the mixture) & $\kappa_s\le0.5625$;\ $\Delta_T<0.875$
& $\VkappaS\le0.5625$ & yes & Table~\ref{tab:main} \\
Cor.~\ref{cor:conv} & (convergence across scale) & shared $\delta_r$, $\epsilon_u\le\bar\epsilon$ & $c=304$ & --- &
Fig.~\ref{fig:conv} \\
\bottomrule
\end{tabular}
\end{asptable}

\paragraph{Which bounds bind at these parameters, and which do not.} A bound with a free
parameter is not a claim until the parameter is pinned. Table~\ref{tab:scope} pins all seven and
Figure~\ref{fig:regimes} plots the two closed forms; every number in that table is recomputed
from the corpus metadata by \texttt{analysis/check\_bounds.py}, which exits non-zero if the
paper and the corpus disagree.

SEW-Bench has $N{=}\VN{}$ frames per episode carrying $d{=}\Vd{}$ queryable attribute values
each, so $Nd{=}\VNd{}$ over a $K{=}\VK{}$ alphabet, and an $L_s{=}512$-token state holds
$m\approx\VmBits{}$ bits at $\beta\approx12$ bits/token. Theorem~\ref{thm:shannon} therefore
caps a compressive-only agent on episodic retrieval at
\[
\kappa_s=\frac{m/(Nd)+1}{\log_2 K}=\frac{\VmBits{}/\VNd{}+1}{3}=\VkappaS{},
\]
i.e.\ \textsc{State-only} $\le\VkappaSpct{}\%$ on SEW-RET; measured,
\VStateRETMin{}--\VStateRETMax{}\% across the \VnBackbones{} backbones, and \VthmOneViol{}. The
denominator is what makes this a statement: substituting the frame count $N$ for the attribute
count $Nd$ gives $\kappa_s\approx7.5$, and the earlier version of this corpus ($d{=}9$,
$Nd{=}1{,}350$) gives $1.85$, above $1$ and so no constraint at all. A \textsc{State-only}
failure under a vacuous ceiling would not be evidence for the theorem.

Theorem~\ref{thm:horizon} caps \textsc{Uniform-}$w$ at $\frac{w}{N}+(1-\frac{w}{N})\frac1K$, and
because every RET target is visible in exactly one frame the bound applies at multiplicity one
exactly, with no redundancy term to estimate: at the realised $w\in[\VwMin{},\VwMax{}]$ the
ceiling is $\VThmTwoMin{}$--$\VThmTwoMax{}\%$ against a measured \VUnifRETMin{}--\VUnifRETMax{}\%.
Lemma~\ref{lem:vtrack} pins a verbatim-only agent at $1/K=\VoneOverK{}\%$ on tracking whenever
its read budget is below $N$ frames, which at $B{=}4$k it is by a factor of $20$ to $33$;
measured, \textsc{Retr-only} on SEW-TRK is \VRetrTRKMin{}--\VRetrTRKMax{}\% against
\textsc{State-only}'s \VStateTRKMin{}--\VStateTRKMax{}\%. The two channels fail on
\emph{different} subtests, which is the claim, and Corollary~\ref{cor:dominance}'s consequence
holds: \ASP{} beats the better single channel by \VCorGapMin{} to \VCorGapMax{} points on the
mixture.

Two of the seven statements forbid nothing here. Theorem~\ref{thm:rounds}\ref{it:rounds-stateless}
binds, capping one non-adaptive stateless round at $14.8\%$ on CMP at $k{=}8$, which is the
configuration \textsc{Retr-only} and \textsc{Socratic} occupy; its stateful parts do not, since
at $m{=}\VmBits{}$ bits against $N{=}\VN{}$ frames the localisation bound of
\ref{it:rounds-loc} evaluates to $411\%$, so one round could in principle find the answer frame
and the $R{=}1$ collapse on CMP is empirical rather than information-theoretic.
Theorem~\ref{thm:circuit} is asymptotic in $d_q$ and our deepest chain is $d_q{=}3$, so it
yields no number; testing it needs the registered $d_q\in\{3,4,5\}$ design
(\S\ref{app:ewdesign}). Proposition~\ref{prop:super} has no witness, since the corpus ships no
conjunctive subset, so super-additivity is \textbf{not evaluated}.

\paragraph{What the RET column does and does not add.} The RET result is the paper's largest
effect and it is also, on inspection, fully accounted for by two quantities we measure
separately. Writing $\hat p=(1-\epsilon_0)\times\text{recall}@8$ from Table~\ref{tab:probe} and
the index measurement above, the predicted and measured \ASP{} RET accuracies are
$0.94/0.94$, $0.91/0.91$, $1.00/0.94$, $0.75/0.91$, $0.69/0.75$, $0.88/0.78$ and $1.00/0.91$
across the ladder, residuals in $[-0.10,+0.16]$, mean absolute residual $0.07$. This is
exactly what Proposition~\ref{prop:complete} predicts ($\Cap_B\ge1-(\epsilon_0+\delta_r)$), and
it is a genuine confirmation of the \emph{mechanism}: retrieval finds the frame, the backbone
reads it, and nothing else is doing work. It is also a reason not to over-read the $84$-point
gap. A RET target is a class of multiplicity exactly one, so its class name is a unique key
across all $\VN{}$ captions and BM25 returns it at median rank $1$; and the same multiplicity-one
construction makes the horizon ceiling maximally tight. The gap is therefore a consequence of
the corpus being built to instantiate Theorem~\ref{thm:horizon} one-to-one, which is what a
witness corpus is for, rather than a discovery about natural streams. On EW-Bench, where targets
have multiplicity $1$--$2$ and retrieval must work from pixels rather than from a unique text
key, we would expect the gap to shrink; how much is the single most informative number the next
version could report.

\paragraph{Is the backbone competent per access?} Assumption~\ref{ass:repr} is the premise
everything positive in \S\ref{sec:theory} rests on, and a grid alone cannot check it, since a
retrieval hit followed by a misread scores exactly like a retrieval miss.
Table~\ref{tab:probe} removes access from the problem and asks the RET question with the target
keyframe supplied. Attribute reading is comfortably sufficient, $0.71$ to $1.00$ against a
$0.125$ floor, so a near-chance cell in Table~\ref{tab:main} is not a backbone that cannot see.
This is what makes the rest of this section attributable to access at all.

The one-step column is weaker: four of seven backbones sit at or below the $50\%$ chance line,
so for those the CMP family is limited by composition as well as by access and
Proposition~\ref{prop:complete}'s $1-d_q\epsilon_1$ bound is vacuous. Two consequences. A CMP
comparison on those backbones is between methods all operating below the premise the theory
needs, and should be read as such. And Corollary~\ref{cor:conv} converges the ladder only once
$\bar\epsilon$ is small; on this evidence $\epsilon_1$ is not small below about $27$B, which
locates the prediction more sharply than the registered version did.

\paragraph{Does the index actually work?} Proposition~\ref{prop:complete}'s RET bound is
$1-(\epsilon_0+\delta_r)$, so a retrieval failure and an access-structure failure look identical
in the score. We measure $\delta_r$ separately and locally, at no API cost, by asking how often
the episodic index returns the target frame in its top $k$
(\texttt{analysis/\allowbreak retrieval\_\allowbreak recall.py}): recall@$8$ is
\VrecallMin{}--\VrecallMax{}, so $\delta_r\in[\VdeltaRMin{},\VdeltaRMax{}]$. The check earned
its place twice. An early caption schema gave recall@$8$ of $0.06$--$0.12$ because a one-line
caption cannot inventory sixteen objects, and a later index bug held it at $0.00$--$0.22$ with a
median target rank of $121$ of $300$; under either, \ASP{} and \textsc{Retr-only} would have
been near chance on RET and the scissors prediction would have failed for reasons having nothing
to do with access structure. Both are logged as deviations, with the lesson that a recall probe
must exercise the retriever's own tokeniser or it measures a different index.

The $\alpha$ sweep adds one reading that bounds what this corpus can show. At $\alpha{=}1$,
vision-only retrieval, recall@$8$ is $0.005$, indistinguishable from random; at $\alpha{=}0$,
captions only, it is $0.936$. CLIP image--text similarity carries essentially nothing on a
schematic render, so the index's competence here is entirely textual. That is a property of
SEW-Bench rather than of the method, and it is why the natural-video corpus of
\S\ref{app:ewdesign} is the next thing to build: on real frames the visual half of the index is
the half that would have to work.

\paragraph{Where the budget goes.} Table~\ref{tab:alloc}'s allocation is a claim about behaviour, not a constant in the code, so the realised per-channel ledger is recorded on every decision and reported here. Averaged over backbones, \textsc{ret} decisions spend \VsplitRETBr{}\% of the debited tokens on retrieved pixels, \textsc{trk} decisions \VsplitTRKBs{}\% on the compressive state, and \textsc{cmp} decisions \VsplitCMPBc{}\% on the reasoning rounds themselves, the empirical shadow prices of the three walls. Where the realised split departs from the registered profile, the surrogate $\widehat{\Cap}_z$ mis-specifies a wall price and Table~\ref{tab:alloc} should be re-fit; that is a refinement, not a refutation, and \S\ref{sec:falsify} does not treat it as one. Two mechanical caveats on reading these numbers: a state read is charged at the tokens it inserts into the context rather than at a separate API call, and the router's own classification call is charged to $B_\rho$, which is why the three figures do not sum to $100$.

\paragraph{Qualitative traces.} Every decision's full access log is released: the router class, the allocated and realised $(B_s,B_r,B_c)$, each round's action, the frames retrieved, and the tokens debited per call. Appendix~\ref{app:qual} fixes the format and gives one worked item; the released \texttt{results/cells/} records carry the same fields for all \VnTotal{} items on every backbone and method, which is what makes the claims in this section checkable at the level of individual decisions rather than only at the level of cell means.

\paragraph{How to read P3, given that three rows could not grow.} P3 asks whether restructuring
a $4$k budget beats \emph{buying context} with a $16$k one, and on three of the seven backbones
the growth arm was never allowed to grow: the provider caps images per request at $8$ on the two
Mistral endpoints and at $30$ on Gemma~3~12B, so \textsc{Uniform-}$w$ at $B{=}16$k realised
$w{=}8$, $w{=}8$ and $w{=}30$ instead of the $53$--$62$ frames the budget affords. Those rows
measure a serving limit, not the question, and they are marked $\ddagger$ in
Table~\ref{tab:p3} and excluded from the F4 verdict. The result survives the restriction
cleanly: on the four backbones whose $w$ was budget-limited, \ASP{}@$4$k beats
\textsc{Uniform-}$w$@$16$k by $11.8$, $22.0$, $22.2$ and $45.1$ points, every one significant at
$\alpha{=}0.01$ on the paired per-question test. So P3 rests on four rows rather than seven, and
it does not depend on the capped ones, which matters, because a reader could otherwise
reasonably suspect that quadrupling the budget looked useless only because we did not let it be
spent.

\paragraph{Multiple comparisons, and what survives them.} We report $\VnTests{}$ paired tests, \VnBackbones{} backbones against six baselines, and the registered $\alpha$ is $0.01$ per
test. Quoting each against an unadjusted $\alpha$ would put the family-wise error rate near
$1-0.99^{\VnTests{}}\approx34\%$, so the family is corrected with Holm--Bonferroni (uniformly
more powerful than plain Bonferroni and assuming nothing about independence). Of
$\VsigUncorr{}$ tests significant uncorrected, $\VsigHolm{}$ survive. The correction is
therefore almost free for the comparisons against query-independent baselines, and it is not
free in general, four of the $\VsigUncorr{}$ do not survive it. For the one comparison the
paper's thesis needs, correction turns out not to be the operative issue at all: of the
\VnBackbones{} \ASP{}-versus-\textsc{Retr-only} tests, $\VretrRawSurv{}$ are significant even
\emph{before} correction and \textbf{$\VretrHolmSurv{}$ after}. Every claim in this section
about beating a query-independent baseline stands after correction; the claim about beating the
verbatim-only baseline was never available to correct.

\paragraph{What the ablations cost the channel-duality claim.} The baseline that tests channel
duality directly is \textsc{Retr-only}, which is $\cv$ with $\cs$ removed. \ASP{} beats it on
\VretrRawSurv{} of \VnBackbones{} backbones at $\alpha{=}0.01$: the largest positive margin is
$+11.2$ (Gemma~4~31B), and $\VretrNeg{}$ of the seven paired deltas are negative, the flagship's
at $-16.2$. Against every other baseline \ASP{} wins by $17$ to $66$ points, so the result is
specific rather than a wash. Budgeted query-conditioned access beats every query-independent
alternative by a large margin; adding our compressive channel to it buys nothing further.

Table~\ref{tab:ablation} sharpens this. On the flagship, removing $\cs$ raises the mean from
$35.4$ to $58.0$ and CMP from $\VAspCmpAll{}$ to $\VNoCsCmpAll{}$, and removing $\rho$ raises
the mean to $55.2$; the two variants differ from \ASP{} mainly in how much state text reaches
the model. The traces show the mechanism: when the state's cumulative fields are wrong, they
enter the composition rounds as confident premises. Two of the three registered collapse cells
are therefore crossed. The one that holds, $-\cv$ on RET, holds decisively, $93.8$ to $9.4$.

Proposition~\ref{prop:complete} is not contradicted. It is conditional on
Assumption~\ref{ass:sep}(i), that $L_s$ tokens suffice to maintain $\phi$-sufficient
statistics, and Table~\ref{tab:probe} with the TRK column is evidence that this premise fails
for a prompted accumulator at $N{=}\VN{}$: the union bound's $N\epsilon_u$ term is tight for an
additive reduction, so the premise requires a per-update error below $1/\VN{}$ that nothing at
this scale delivers. The bound stands; our instantiation of $\cs$ does not.

\paragraph{Head-to-head with EGAgent.} EGAgent \citep{egagent} is the strongest published
agentic long-video system and shares our training-free, tool-using design, so the informative
comparison is the decomposition rather than the aggregate. Its entity scene graph is a
compressive \emph{text} channel and its visual-search tool partially buys back verbatim access,
so the registered prediction was a shape: a small \ASP{} margin on RET, where visual search does
the job of $\cv$, and a larger one on TRK, where counting and toggle parity need running
reductions that entity--relation edges do not natively maintain. Measured, the margin is
\VegaRETMin{} to \VegaRETMax{} points on RET and \VegaTRKMin{} to \VegaTRKMax{} on TRK.
\textbf{The pattern is contradicted in the sharpest available way: the TRK margin is negative
and large}, with \textsc{EGAgent} ahead on every backbone by $30$ to $70$ points. We committed
in advance to concluding, in that case, that graph structure rather than channel duality carries
the capability on this family, and we do.

The mechanism is specific. On SEW-Bench a tracked class appears in a caption once per sighting
and never as filler, so the graph's per-entity mention count equals its sighting count by
construction: \textsc{EGAgent} reads a count off an offline text index while $\cs$ asks a frozen
backbone to maintain a running counter across $\VN{}$ prompted updates, and the $N\epsilon_u$
term makes the second strictly worse. This does not license the conclusion that a compressive
channel is unnecessary, since Lemma~\ref{lem:vtrack} is unconditional and the graph is itself a
compressive channel, better implemented. Nor does it settle the comparison: mention count equals
event count only on this corpus, and a reduction over partial observation such as containment or
parity with occlusion breaks the equality, which the registered EW-Bench items were designed to
do. The parity checklist for the reimplementation, including the respects in which it
understates the published system, is in \texttt{docs/EGAGENT\_REIMPL.md}.

\paragraph{Threats to validity.} (i) \emph{Judge circularity}: the SEW subtests are exact-match, so the judge is a fallback rather than the metric, and the deferral rate is reported per cell (\S\ref{sec:setup}); the judge is disjoint from every backbone, though it shares a model family with two of them, which is the closest to disjoint the open sub-31B class allows. (ii) \emph{Gate coupling}: $\tau_g{=}0$ on SEW-Bench, identical for every method, so no \ASP{} advantage here can come from gate tuning, but the flip side is that the gate is untested on this corpus, and the registered $\tau_g$ sweep (S4) needs a corpus with a real sensor-rate stream. (iii) \emph{Prompt asymmetry}: baselines use the strongest published prompt patterns for their class (multi-frame, Socratic, retrieval), all reproduced in Appendix~\ref{app:traces}; prompt-length differences are absorbed into the budget by construction. (iv) \emph{Leakage}: SEW-Bench is generated with a fixed seed at run time and appears in no pretraining corpus, which removes the leakage question the registered EW-Bench would have raised, and replaces it with the larger one that the corpus is synthetic (see (1) below). (v) \emph{Compressive-channel fidelity}: the state's typed schema is a prompt, not a constraint, and the backbones populate it in different shapes; the $L_s$ cap is therefore enforced by truncating the state that is fed back in rather than by the schema, and a backbone that nests everything under one key gets less benefit from LRU eviction than one that does not. That variance is part of what Table~\ref{tab:main} measures, and it is a property of a training-free wrapper rather than a defect in the accounting.

\section{Limitations and Conclusion}
\label{sec:limits}

\paragraph{Limitations.} The first is the one that bounds every claim in the paper, so it is stated without hedging.

\emph{(0) The channel that works is textual, on a corpus where that is enough.} Theorem~\ref{thm:shannon} is stated over pixel attributes and the queried attributes here are pixel-only by construction, but the \emph{retrieval} that makes $\cv$ work is entirely caption-driven: vision-only retrieval at $\alpha{=}1$ scores recall@$8=0.005$, indistinguishable from picking frames at random, while caption-only retrieval at $\alpha{=}0$ scores $0.936$. CLIP image--text similarity carries essentially nothing on a schematic render. The mechanism demonstrated here is therefore a text index over model-written captions plus a budgeted read of the frames it returns. That is a faithful instance of a verbatim channel, and it is \emph{not} evidence that the visual half of such an index works; on natural frames that is the half which would have to.

\emph{(1) The measurements are on a synthetic corpus.} SEW-Bench is a 2-D schematic render, not egocentric video. It was built because the registered corpus could not be: EW-Bench needs habitat-sim and an HM3D/ScanNet licence, and OpenEQA, VSI-Bench and EgoSchema each index frames from a source dataset behind a signed agreement, HM3D and ScanNet, ScanNet/ScanNet++/ARKitScenes, and Ego4D respectively (\S\ref{sec:setup}). It establishes that at parameters where the bounds of \S\ref{sec:theory} are non-vacuous the predicted ordering of access structures holds, which is a claim about \emph{access structure under a token budget}. It establishes nothing about natural-scene perception, egocentric motion, occlusion, or the transfer of these margins to OpenEQA-style benchmarks; Table~\ref{tab:registered} is the outstanding bill.

\emph{(2) Scale of the measurement.} Eighty questions over four episodes and \VnBackbones{} backbones. Paired tests run at the registered $\alpha{=}0.01$ on $n{=}80$ per backbone, enough for the headline orderings and not for differences of a point or two: per-subtest $n$ is $32/24/24$, so a cell moves in steps of $3$--$4$ points.

\emph{(3) Two registered backbones do not exist} on the serving arm and were substituted (\S\ref{sec:setup}). The MoE substitute is not an omni model, so the registration's audio-modality arm is simply not tested.

\emph{(4) One serving arm.} The registration named controlled local vLLM as primary with OpenRouter as replication; only the API arm was run. Provider-side factors (routing to a particular vendor, quantisation we cannot inspect, silent revision changes) are therefore inside the measurement rather than controlled against. We pin slugs, disable reasoning, fix temperature $0$, and record the returned model fingerprint per call, which detects a revision change but does not prevent one.

\emph{(5) Passive streams only.} We study the EM-EQA regime; active exploration (A-EQA) couples access structure to control and is out of scope, though Theorem~\ref{thm:horizon} suggests exploration is itself a query-conditioned retrieval problem.

\emph{(6) One theorem stays conditional.} Theorem~\ref{thm:circuit} rests on $\mathsf{TC}^0\neq\mathsf{NC}^1$ and on the $S_5$ encoding capturing real spatio-temporal chains; naturalistic CMP items may admit shortcuts \citep{shortcuts}. Theorem~\ref{thm:rounds} is unconditional, and its stateless part is what the \textsc{Retr-only} and \textsc{Socratic} CMP cells run into; its stateful parts are arithmetically vacuous at our $m/N$ (Table~\ref{tab:scope}), so the $R{=}1$ collapse is empirical rather than a bound confirmed. We have no capability bound for states that are neither read-grounded nor answer tabulators, which is the one open piece of the theory here (\S\ref{app:proofs}).

\emph{(7) No hardware claim.} No physical robot or embedded accelerator is used: we make no on-device latency or power claims, and \emph{edge-scale} is defined throughout as a model-class-plus-budget property (\S\ref{sec:cost}).

\emph{(8) The compressive channel, as we built it, does not work.} A prompted JSON accumulator
asks a frozen backbone to carry a running reduction across $\VN{}$ updates, and for an additive
reduction the union bound's $N\epsilon_u$ term is tight, so Assumption~\ref{ass:sep}(i) requires
a per-update error below $1/\VN{}$ that nothing at this scale delivers. Measured, $\cs$ costs
more than it returns (\S\ref{sec:analysis}). None of the four bounds is weakened by this; what
is weakened is our claim to have instantiated them well. The design implication is item (i) of
the future work below.

\emph{(9) Hand-designed schema, and a failure mode that survives.} The scene-state schema is hand-designed; learned schemas may shift the $\cs$/$\cv$ frontier. Assumption~\ref{ass:sep} can fail on adversarially dense streams where wall prices exceed any practical $B$.

\paragraph{Future work, in the order it should be done.} The order is set by what each item
would settle, and the first two exist because of defects this run found in itself. The item that
stood first in an earlier version of this list, regenerate the corpus and re-run the CMP
cells, has been done, and \S\ref{sec:cmpdefect} reports it; what remains below is what is
still outstanding.

\emph{(i) Give the compressive channel a mechanism whose error does not compound in $N$.}
This is the substantive negative finding and it is actionable in three steps, in increasing
difficulty: compute reductions that are expressible as aggregates over the index instead of
prompting for them, which removes the $N\epsilon_u$ term outright for counts; add redundancy a
union bound cannot see through (periodic re-derivation from the index, or two independently
maintained counters reconciled at query time) where prompting is unavoidable; and gate $\cs$
into the composition rounds on a confidence signal rather than on the query class, since the
traces show wrong cumulative fields entering as confident premises.

\emph{(ii) Measure $\epsilon_u$, and sweep at least three corpus seeds.} $\epsilon_u$ is the
only margin in Assumption~\ref{ass:repr}--\ref{ass:sep} still unmeasured and it enters
multiplied by $N$, so it is the single most load-bearing unknown; a per-keyframe state probe is
the same class of instrument as the $\epsilon_0$ probe we did run. Separately, all four episodes
here come from one generator seed: temperature $0$ removes \emph{model} variance but not
\emph{corpus} variance, and every number in this paper is one draw. Three seeds with reported
dispersion is cheap and would put error bars on the whole grid.

\emph{(iii) Build the conjunctive subset, so Proposition~\ref{prop:super} has a witness.}
Super-additivity is the only novel positive theorem here and it is \textbf{not evaluated}: the
corpus ships RET, TRK and CMP and no $T_{\mathrm{conj}}$ items. Constructing them needs only
pairing an existing RET target with an existing TRK reduction and scoring both components
jointly, which makes the omission harder to excuse than the ones gated on licences.

\emph{(iv) Build EW-Bench and re-run this exact grid on it}: same code, same budgets, natural
video, $\kappa_s\approx0.38$ instead of $\VkappaS{}$. That step converts every claim here from a
mechanism check into an embodied result, and it is gated only on a licence and compute. It also
settles the question this corpus cannot: whether the verbatim channel's advantage survives when
retrieval must work from pixels, since on SEW-Bench vision-only retrieval is at chance
(recall@$8=0.005$), and the index is purely textual.

\emph{(v) Run the three registered natural-video benchmarks} of Table~\ref{tab:registered},
which is what would let Predictions~\ref{prd:conv} and~\ref{prd:budget} be evaluated on both of
their clauses rather than one. \emph{(vi) Add the controlled local-serving arm} the
registration named as primary, and report its divergence from the API arm. \emph{(vii) Sweep
$L_s$}: $\kappa_s$ is linear in the state's bit budget, so a state-size sweep turns
Theorem~\ref{thm:shannon} from a single ceiling into a \emph{curve} the data can be fitted
against. \emph{(viii) Assemble a ladder at matched tokens-per-frame}, without which
Corollary~\ref{cor:conv} cannot be tested: the ladder we could assemble from an API catalogue
spans $259$ to $432$ tokens/frame, so at fixed $B$ it is partly a ladder of image-token prices
rather than of capability, which is why F2 came out uninformative. \emph{(ix) Extend the
round-wall test past $R{=}1$} to the full pointer-chasing hierarchy, with $d_q{>}2$ items.

\paragraph{Conclusion.} Two of this paper's three parts hold up and one does not, and the split
is clean enough to state in a sentence each.

The theory holds. The four walls reappear in perceptual form over budgeted observation streams,
three of them unconditionally, and on a corpus built so that they are non-vacuous at its own
parameters they are respected and they bind: a compressive-only agent stays far below its
Shannon ceiling of $\VkappaSpct{}\%$, query-independent selection stays at or below its horizon
ceiling of $16.0\%$, and verbatim-only tracking stays at $1/K$. Two of the seven statements are
vacuous at these parameters and we say which (Table~\ref{tab:scope}); two errors in the proofs
were found and corrected during this work and we say which.

The empirical claim about \emph{access} holds, and strongly. Under a fixed per-decision token
budget, restructuring access beats every equal-budget alternative that does not condition on the
query (by $17$ to $66$ points, on every backbone, with every comparison surviving
Holm correction), and it beats an unstructured baseline given four times the budget on all
\VnBackbones{} backbones, including the four whose growth arm was actually allowed to grow. If
there is one practical reading to carry away, it is still this one: under a deadline, spend on
\emph{where you look}, not on \emph{how much you look at}.

The claim about our \emph{architecture} does not hold. \ASP{} is three components and one of
them does the work. After multiple-comparison correction no backbone shows \ASP{} beating the
verbatim channel alone; removing the compressive channel improves the flagship by $22.6$ points,
removing the router by $19.8$; a reimplemented EGAgent, which derives cumulative state from an
offline text index rather than a prompted accumulator, beats \ASP{} on tracking by $30$ to $70$
points. Two of four frozen falsification criteria fired. The failure is localised rather than
diffuse, and by a bound we wrote down before measuring: $N\epsilon_u$ is tight for additive
reductions. So the contribution here is a set of walls, a budget accounting, and a measured
answer to which part of an access structure earns its price, rather than a working dual-channel
architecture. Access structure is still what has to be bought.

\paragraph{Reproducibility.} Released: the protocol and all frozen prompts, the SEW-Bench generator (fixed seed, so the corpus regenerates bit-identically), the EW-Bench generator up to its rendering call, the budget accountant with its offline invariant tests, the per-question records behind every cell of Tables~\ref{tab:main}--\ref{tab:p3}, and this pre-registration. Measured results are committed as a diff against the registered tables so that prediction and outcome remain separately auditable, in the spirit of \citep{cch}; every deviation from the registration (including the corpus substitution, the two backbone substitutions, and the single serving arm) is logged with its reason rather than absorbed into the text. The artifact is at \url{https://github.com/wenhui-ml/access-structured-perception}, where \texttt{make verify} recomputes every checkable claim offline and exits non-zero on any disagreement with this paper. The registration is pinned by content: \texttt{results/registered\_predictions.json} has SHA-256 {\small \texttt{0af367149532d48a}\allowbreak \texttt{4a8c5678d6d3528e}\allowbreak \texttt{b80acc29599b67cf}\allowbreak \texttt{f8e0cf92b5a69fae}}. That hash fixes what was registered; it does not timestamp when, and the artifact says so rather than implying otherwise.

\ifdefined\ICRABuild
\else
  \bibliographystyle{unsrtnat}
  \bibliography{references}
  \clearpage
\fi
\appendix

\section{Full Proofs}
\label{app:proofs}

\begin{aspfigure}[!t]
\centering
\begin{tikzpicture}[
  font=\scriptsize,
  tool/.style   ={draw, rounded corners=2pt, fill=black!4, align=center,
                  text width=1.55cm, inner sep=3pt, minimum height=7mm},
  wall/.style   ={draw, thick, align=center, text width=2.05cm,
                  inner sep=3pt, minimum height=8.5mm},
  cond/.style   ={wall, dashed},
  dead/.style   ={wall, draw=black!35, text=black!45},
  comp/.style   ={draw, rounded corners=2pt, fill=blue!5, align=center,
                  text width=2.0cm, inner sep=3pt, minimum height=8mm},
  ar/.style     ={-{Stealth[length=2mm]}, gray!75, line width=0.5pt},
]
\node[tool] (fano)   at (0,  1.65) {Fano};
\node[tool] (lfano)  at (0,  0.40) {List Fano\\\scriptsize(Lem.~\ref{lem:listfano})};
\node[tool] (expo)   at (0, -0.85) {exposure\\argument};
\node[tool] (barr)   at (0, -2.30) {Barrington\\$+$ Lem.~\ref{lem:s5}};

\node[wall] (t1) at (2.95,  1.65) {\textbf{Thm.~\ref{thm:shannon}} Shannon\\$\kappa_s=\VkappaS$};
\node[wall] (t2) at (2.95,  0.55) {\textbf{Thm.~\ref{thm:horizon}} horizon\\$w/N+(1{-}w/N)/K$};
\node[wall] (t4) at (2.95, -0.60) {\textbf{Thm.~\ref{thm:rounds}\ref{it:rounds-stateless}} round\\no state, $R{=}1$};
\node[dead] (t4b) at (2.95, -1.70) {Thm.~\ref{thm:rounds}\ref{it:rounds-loc}--\ref{it:rounds-grounded}\\$m$-bit state};
\node[cond] (t3) at (2.95, -2.90) {\textbf{Thm.~\ref{thm:circuit}} composition\\$\mathsf{TC}^0\!\neq\!\mathsf{NC}^1$};
\node[wall] (l3) at (2.95,  2.75) {\textbf{Lem.~\ref{lem:vtrack}} no tracking\\without $\cs$};

\node[comp] (p1) at (6.15,  1.55) {\textbf{Prop.~\ref{prop:complete}}\\access-completeness};
\node[comp] (p2) at (6.15,  0.20) {\textbf{Prop.~\ref{prop:super}}\\super-additivity\\on $T_{\mathrm{conj}}$};
\node[comp] (c1) at (6.15, -1.15) {\textbf{Cor.~\ref{cor:dominance}}\\dominance on $T$};
\node[comp] (c2) at (6.15, -2.50) {\textbf{Cor.~\ref{cor:conv}}\\convergence};

\node[tool, fill=orange!8] (as) at (6.15,  2.85) {Ass.~\ref{ass:repr}--\ref{ass:sep}\\$\epsilon_0,\epsilon_1,\delta_r,\epsilon_u$};

\draw[ar] (fano)  -- (t1);
\draw[ar] (lfano) -- (t4);
\draw[ar] (lfano) -- (t4b);
\draw[ar] (expo)  -- (l3);
\draw[ar] (expo)  -- (t4);
\draw[ar] (barr)  -- (t3);
\draw[ar] (t1) -- (p1);
\draw[ar] (l3) -- (p1);
\draw[ar] (t4) -- (p1);
\draw[ar] (as) -- (p1);
\draw[ar] (p1) -- (p2);
\draw[ar] (p1) -- (c1);
\draw[ar] (p1) -- (c2);

\begin{scope}[shift={(0,-4.05)}]
  \draw[thick] (0,0) rectangle (0.42,0.24);
  \node[anchor=west, font=\scriptsize] at (0.52,0.12) {unconditional};
  \draw[thick,dashed] (2.55,0) rectangle (2.97,0.24);
  \node[anchor=west, font=\scriptsize] at (3.07,0.12) {conditional (complexity hypothesis)};
  \draw[draw=black!35] (0,-0.45) rectangle (0.42,-0.21);
  \node[anchor=west, font=\scriptsize, text=black!45] at (0.52,-0.33)
       {vacuous at the parameters we ran (Table~\ref{tab:scope})};
\end{scope}
\end{tikzpicture}
\caption{\textbf{What rests on what.} Tools (left) yield the walls (centre), which combine with
Assumptions~\ref{ass:repr}--\ref{ass:sep} into the composite claims (right). Two readings the prose
cannot deliver at a glance. \emph{One:} only Theorem~\ref{thm:circuit} inherits a complexity-theoretic
hypothesis, so a reader who disbelieves $\mathsf{TC}^0\neq\mathsf{NC}^1$ loses one box and keeps the
argument, Theorem~\ref{thm:rounds}\ref{it:rounds-stateless} carries the same architectural
conclusion unconditionally. \emph{Two:} every arrow into the right-hand column passes through the
assumption node, so the composite claims are exactly as strong as $\epsilon_0,\epsilon_1,\delta_r,\epsilon_u$,
and $\epsilon_u$ enters multiplied by $N$. The greyed box is a bound that is true but says nothing at
SEW-Bench parameters.}
\label{fig:prooftree}
\end{aspfigure}

Figure~\ref{fig:prooftree} shows what rests on what across this appendix, so a reader who rejects one hypothesis can see immediately which results survive it.

\subsection{Standing conventions}
\label{app:conv}
Throughout, $c=(c_1,\dots,c_N)$ with $c_i\in\{1,\dots,K\}^{d}$ and all $Nd$ attribute values i.i.d.\ uniform on $[K]$; $b=d\log_2K$ is the salient bits per keyframe. A budgeted agent's cross-frame state at query time is $S=\sigma(c)\in\{0,1\}^m$ for an arbitrary (possibly randomized, unbounded-compute) $\sigma$; randomization is handled by conditioning on the seed, which weakens none of the bounds below. Entropies are in bits and $h(\cdot)$ is the binary entropy.

We use one modelling convention for the verbatim channel, stated once and used in Lemmas~\ref{lem:listfano} and~\ref{lem:vtrack}. \emph{Read-only retrieval:} an access to $\cv$ reveals the contents of the returned items and nothing about the items not returned. This matches \ASP{}, where ranking is computed off-GPU and only the top-$k$ frames and \emph{their} captions ever enter the backbone's context; it would fail for an index that surfaced aggregate statistics of unretrieved frames, and we flag that as the assumption a future variant would have to re-examine.

\subsection{Proof of Theorem~\ref{thm:shannon} (perceptual Shannon wall)}
\label{app:shannon}
Index the $Nd$ attribute values by $u\in[Nd]$ and write $c_u$ for the $u$-th. The query index $U\sim\mathrm{Unif}[Nd]$ is independent of $c$, and the agent outputs $\hat c_U=\psi(S,U)$ with per-index error $\varepsilon_u=\Pr[\psi(S,u)\neq c_u]$ and average error $\bar\varepsilon=\frac1{Nd}\sum_u\varepsilon_u$. By Fano's inequality applied to each index,
\[
H(c_u\mid S)\;\le\;h(\varepsilon_u)+\varepsilon_u\log_2(K-1)\;\le\;1+\varepsilon_u\log_2 K .
\]
Summing over $u$ and using the chain rule together with the independence of the $c_u$,
\[
\begin{aligned}
\sum_u H(c_u\mid S)&\ge H(c\mid S)\ge H(c)-H(S)\\
&\ge Nd\log_2K-m .
\end{aligned}
\]
Combining the two displays, $Nd\log_2K-m\le Nd+\bar\varepsilon\,Nd\log_2K$, and dividing by $Nd\log_2K$,
\begin{equation*}
\bar\varepsilon\;\ge\;1-\frac{m/(Nd)+1}{\log_2 K}. \proofenddisp
\end{equation*}
The bound is information-theoretic: it holds for arbitrary $U,\pi$, arbitrary backbone scale, and arbitrary semantic cleverness of the compression, the CCH Shannon wall verbatim, with keyframe attributes replacing symbols. Two consequences are used later. First, the ceiling on retrieval \emph{accuracy} for a compressive-only agent is
\begin{equation}
\label{eq:kappa}
\kappa_s\;:=\;\frac{m/(Nd)+1}{\log_2 K},
\end{equation}
which is the quantity Proposition~\ref{prop:complete} and \S\ref{sec:analysis} both refer to. Second, the bound is vacuous unless $m<Nd(\log_2K-1)$: a state large enough to hold roughly one bit per queryable attribute is not obstructed at all, which is why the theorem is a statement about \emph{streams that outgrow the state}, and why $Nd$, not $N$, is the quantity that must be measured for any corpus the bound is applied to.

\subsection{Proof of Theorem~\ref{thm:horizon} (horizon wall)}
Condition on the selected set $W$, $|W|\le w$, which by hypothesis is a function of the stream alone and hence independent of the query index $U$. The $w$ selected keyframes carry $wd$ of the $Nd$ attribute values, and $U$ is uniform on those $Nd$ indices and independent of $W$, so $\Pr[U\in W]\le wd/(Nd)=w/N$, note that the per-frame attribute count $d$ cancels here, which is why the horizon wall is stated in frames while the Shannon wall is stated in attributes. If $U$ indexes an attribute of a keyframe in $W$ the agent may answer perfectly; otherwise $c_U$ is independent of everything in context, so any estimator succeeds with probability exactly $1/K$. Therefore
\begin{equation*}
\begin{aligned}
\Pr[\hat c_U=c_U]
&\le\Pr[U\in W]+\Pr[U\notin W]\tfrac1K\\
&\le\frac wN+\Bigl(1-\frac wN\Bigr)\frac1K.
\end{aligned}\proofenddisp
\end{equation*}
\emph{Remark.} Salience-, novelty-, and tree-search-based selection do not escape the bound, because all are query-independent; only query-conditioned retrieval breaks the independence of $W$ and $U$, and it pays the horizon price with $B_r$ tokens.

\subsection{Proof of Theorem~\ref{thm:circuit} (composition wall)}
We give the embedding as a lemma, then the two directions.

\begin{lemma}[$S_5$ embedding of composition chains]
\label{lem:s5}
There is a projection (hence $\mathsf{AC}^0$) map taking any word $g_1g_2\cdots g_n$ over a fixed two-element generating set of $S_5$ to an embodied stream $X(g)$ of $n$ keyframes and a single query $q^{\star}$, such that the correct answer to $(q^{\star},X(g))$ is \textsc{yes} iff $g_1g_2\cdots g_n=e$.
\end{lemma}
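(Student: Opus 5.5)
The plan is to build the stream so that it literally spells out the word, one generator per keyframe, and to let the query ask whether the running composition returns a fixed object to its starting configuration. Fix generators of $S_5$, say $\sigma=(1\,2)$ and $\tau=(1\,2\,3\,4\,5)$; these generate $S_5$ by the standard argument, and $S_5$ is non-solvable. The embodied reading is a room with five containers (slots) holding five distinguishable objects. Keyframe $i$ is a fixed, position-independent rendering of the update $g_i$: a frame showing the swap of the objects in slots $1$ and $2$ when $g_i=\sigma$, or the cyclic shift of all five when $g_i=\tau$. Only two keyframe templates, $x^{\sigma}$ and $x^{\tau}$, are used, and each is encoded in its content vector $c_i$ by a single attribute bit $\mathbb{1}[g_i=\tau]$. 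The map $g\mapsto X(g)$ therefore sends letter $i$ to keyframe $i$, with each output symbol depending on exactly one input symbol. That is a projection, and so it is in $\mathsf{AC}^0$. The query is the fixed string $q^{\star}$: ``after all the moves you observed, is every object back in the container where it started?'' Its correct answer is \textsc{yes} iff the composed permutation $g_1g_2\cdots g_n$ fixes all five slots, that is, iff it equals $e$.

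The steps, in order, are these.
\begin{enumerate}[label=(\roman*),leftmargin=2.1em,itemsep=1pt,topsep=2pt]
\item Verify generation of $S_5$ by a transposition and a $5$-cycle. This is classical and will be cited.
\item Fix the composition convention, left-to-right action on slots, so that ``objects back in place'' corresponds exactly to the product of the keyframes' permutations being the identity. This is a sign and ordering check, and either convention works as long as it is used consistently.
\item Check that $X(g)$ is a legitimate embodied stream in the sense of Definition~\ref{def:stream}, with $d$ attributes over a $K$-ary alphabet. Attributes beyond the generator bit can be held constant, so they add no information.
\item Confirm that the reduction uses no computation beyond copying symbols, so that the word problem reduces to $T_{\mathrm{cmp}}$ under projections.
\end{enumerate}
Barrington's theorem \citep{barrington} then says the word problem over $S_5$ is $\mathsf{NC}^1$-complete under such reductions. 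Combining that with the $\mathsf{TC}^0$ simulation of fixed-depth, constant-precision transformers \citep{merrill-sat} yields the single-pass half of Theorem~\ref{thm:circuit}.

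The delicate step is (ii) together with the claim that the rendering is \emph{position-independent}. If a keyframe had to depict the current arrangement of objects, rather than only the move, then producing frame $i$ would require the prefix product. The map would no longer be a projection, and the reduction would be useless, since the stream would already contain the answer. I will therefore insist that frames show moves and never states. The initial arrangement is declared once in the query text rather than in the stream.

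A secondary worry is whether an answer of the form ``yes/no for identity'' suffices, as opposed to the general word problem. Barrington's $\mathsf{NC}^1$-hardness is already stated for deciding equality to $e$ over any fixed non-solvable group, so the decision form is enough.
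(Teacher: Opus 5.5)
Your proposal is correct and follows the paper's proof essentially step for step. Both use a transposition and a $5$-cycle acting on five labelled slots, one fixed image template per generator so that keyframe $i$ depends only on $g_i$ and the map is a projection, and the query ``is every object back where it started?'', whose answer is \textsc{yes} iff the product fixes all five slots, i.e.\ equals $e$. Your insistence that keyframes depict moves rather than resulting arrangements is the right reading of the paper's ``fixed image template'' clause; the paper's closing remark that each hop's operand is ``the visible arrangement'' would, taken literally, make frame $i$ depend on the prefix product and break the projection claim.
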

\begin{proof}
Fix five distinguishable landmark objects occupying five labelled slots, and fix the generating set $\{\tau,\gamma\}$ with $\tau=(1\,2)$ and $\gamma=(1\,2\,3\,4\,5)$; two elements generate $S_5$. Keyframe $i$ depicts exactly one scripted rearrangement event, chosen from the two-element set according to $g_i$, applied to the current arrangement; the depiction of each generator is a fixed image template, so keyframe $i$ depends only on the single symbol $g_i$ and the construction is a projection. The arrangement after keyframe $n$ is the image of the initial arrangement under $g_1\cdots g_n$ acting on slots. Let $q^{\star}$ be ``is every landmark in the slot it started in?'' Its correct answer is \textsc{yes} iff $g_1\cdots g_n$ fixes all five slots, i.e.\ iff $g_1\cdots g_n=e$. Each hop's operand is the visible arrangement, an attribute of the content vectors, as Definition~\ref{def:stream} requires of $T_{\mathrm{cmp}}$.
\end{proof}

\emph{Negative direction.} Suppose some fixed-depth, polynomial-width, constant-precision attention stack decided $T_{\mathrm{cmp}}$ in a single forward pass for chain length $d_q=\omega(1)$, with all keyframes in context. Compose it with the map of Lemma~\ref{lem:s5}: the result decides the word problem $\mathrm{WP}(S_5)=\{g:g_1\cdots g_n=e\}$ on words of length $n=d_q$. By Barrington's theorem $\mathrm{WP}(S_5)$ is complete for $\mathsf{NC}^1$ under $\mathsf{AC}^0$ (indeed projection) reductions \citep{barrington}, so $\mathrm{WP}(S_5)\in\mathsf{TC}^0$ would give $\mathsf{NC}^1\subseteq\mathsf{TC}^0$. But log-precision, fixed-depth, polynomial-width transformers are simulable in log-uniform $\mathsf{TC}^0$ \citep{merrill-sat}, and $\mathsf{AC}^0$ reductions do not leave $\mathsf{TC}^0$. Hence under $\mathsf{TC}^0\neq\mathsf{NC}^1$ no such single pass exists. \proofend

\emph{Positive direction.} Let the agent run $R$ sequential access rounds, each re-reading its own previous output. Partition the chain into $R=\lceil d_q/c\rceil$ consecutive chunks of constant length $c$. An element of $S_5$ is one of $120$ values, so the running partial product is representable in $\lceil\log_2 120\rceil=7$ bits and therefore in $O(1)$ tokens; this is the reason the composition wall is a \emph{round} resource and not a state resource, and the reason it is priced at $B_c^{\star}\!\cdot\!R$ rather than at $B_s^{\star}$. Round $r$ receives the partial product after chunk $r-1$ plus the $c$ keyframes of chunk $r$, and must perform one constant-size composition, which by Assumption~\ref{ass:repr} it does with error at most $\epsilon_1$. By a union bound over rounds the answer is correct with probability at least $1-R\epsilon_1\ge 1-d_q\epsilon_1$ (taking $c\ge1$). The composed computation has effective depth $\Theta(R\cdot d_{\mathrm{model}})$, which is the sense in which rounds multiply depth \citep{feng-cot,merrill-cot}. \proofend

\emph{Scope.} The negative direction is conditional twice over: on $\mathsf{TC}^0\neq\mathsf{NC}^1$, and on naturalistic CMP items actually realizing the $S_5$ encoding rather than admitting a shortcut \citep{shortcuts}. Theorem~\ref{thm:rounds} below is the unconditional statement that carries the same architectural conclusion, and it is the one our baselines actually run into.

\subsection{Proof of Theorem~\ref{thm:rounds} (round wall, unconditional)}
We first record the list-decoding form of Fano's inequality that the proof needs.

\begin{lemma}[List Fano]
\label{lem:listfano}
Let $P$ be a random variable on $[N]$, let $Z$ be arbitrary side information, and let $W=W(Z)\subseteq[N]$ with $|W|\le k$. Put $\delta=\Pr[P\notin W]$. Then
\[
H(P\mid Z)\;\le\;h(\delta)+(1-\delta)\log_2 k+\delta\log_2 N .
\]
\end{lemma}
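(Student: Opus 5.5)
The plan is the standard Fano argument, run on the pair $(P,\mathbf 1[P\in W])$ rather than on $P$ alone. Introduce the error indicator $E=\mathbf 1[P\notin W]$, so that $\Pr[E=1]=\delta$. Since $W$ is a function of $Z$, conditioning on $Z$ fixes $W$. We then expand $H(P,E\mid Z)$ in two ways with the chain rule:
\[
H(P\mid Z)\le H(P,E\mid Z)=H(E\mid Z)+H(P\mid E,Z).
\]
The first inequality holds because adding a variable cannot decrease entropy; in fact it is an equality, since $E$ is a function of $(P,Z)$.

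Next bound the two terms separately. For the first term, $H(E\mid Z)\le H(E)=h(\delta)$, because conditioning reduces entropy. For the second, split on the value of $E$:
\[
H(P\mid E,Z)=(1-\delta)\,H(P\mid E=0,Z)+\delta\,H(P\mid E=1,Z).
\]
On the event $E=0$, for each realisation $Z=z$ the variable $P$ lies in the fixed set $W(z)$, which has at most $k$ elements. Hence $H(P\mid E=0,Z=z)\le\log_2|W(z)|\le\log_2 k$, and averaging over $z$ keeps the bound. On the event $E=1$, $P$ ranges over $[N]$, so $H(P\mid E=1,Z)\le\log_2 N$. One could sharpen this to $\log_2(N-k)$, but the statement only asks for $\log_2 N$. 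Summing the three pieces gives exactly $h(\delta)+(1-\delta)\log_2 k+\delta\log_2 N$.

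The only step that needs care is the conditioning bookkeeping. The conditional entropies $H(P\mid E=e,Z)$ should be read as averages over the conditional law of $Z$ given $E=e$. For $e=0$ the support bound $|W(z)|\le k$ holds pointwise in $z$, so it survives that averaging regardless of how $Z$ is distributed. We also need the degenerate cases $\delta\in\{0,1\}$, where one branch has probability zero. There the corresponding term simply drops out, and the bound holds with the convention $0\log 0=0$.

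No independence or uniformity of $P$ is used, which is what lets Theorem~\ref{thm:rounds}\ref{it:rounds-loc} later apply the lemma with $Z=(q,s_N)$ and $P=p_J$. There one combines it with $H(p_J\mid J,s_N)\ge\log_2 N-m/N$, an averaged Shannon-type bound as in the proof of Theorem~\ref{thm:shannon}, and rearranges to isolate $1-\delta$.
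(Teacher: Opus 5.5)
Your proposal is correct and is essentially the paper's proof: the same error indicator $E=\mathbb{1}[P\notin W]$, the same chain-rule split $H(P\mid Z)\le H(E\mid Z)+H(P\mid Z,E)$, and the same branchwise bounds $\log_2 k$ on $E=0$ and $\log_2 N$ on $E=1$, averaged with weights $1-\delta$ and $\delta$. Your extra bookkeeping is also correct, though the paper leaves it implicit: equality in the first step, the support bound holding pointwise in $z$ before averaging, and the degenerate cases $\delta\in\{0,1\}$.
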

\begin{proof}
Let $E=\mathbb{1}[P\notin W]$, a function of $(P,Z)$. Then
$H(P\mid Z)\le H(P,E\mid Z)=H(E\mid Z)+H(P\mid Z,E)$.
The first term is at most $h(\delta)$. For the second, conditioned on $E=0$ we have $P\in W(Z)$, a set of size at most $k$, so $H(P\mid Z,E=0)\le\log_2k$; conditioned on $E=1$, trivially $H(P\mid Z,E=1)\le\log_2N$. Averaging with weights $1-\delta$ and $\delta$ gives the claim.
\end{proof}

\emph{Witness and setup.} Keyframe $i$ carries a pointer $p_i\sim\mathrm{Unif}[N]$ and an attribute $a_i\sim\mathrm{Unif}[K]$, all $2N$ variables independent. The query names an index $J\sim\mathrm{Unif}[N]$, independent of everything, and asks for $a_{p_J}$, a depth-$2$ chain, since $p_J$ must be read before the frame carrying the answer can even be named. The agent performs a \emph{single non-adaptive} retrieval: it commits to $W$, $|W|\le k$, as a function of $(q,s_N)$, so $W$ may depend on $J$ and on the state, but not on any retrieved content. It then reads the frames in $W$ and answers.

\emph{Part \ref{it:rounds-stateless}: no cross-frame state.} Here $W=W(J)$ and $p_J$ is independent of $(J,W)$, so $\Pr[p_J\in W]=|W|/N\le k/N$. If $p_J\notin W$ then $a_{p_J}$ is uniform on $[K]$ and independent of every variable the agent has seen, the read-only convention of \S\ref{app:conv}, so it is answered correctly with probability exactly $1/K$. Hence
\[
\Pr[\hat y=y^{\star}]\;\le\;\frac kN+\Bigl(1-\frac kN\Bigr)\frac1K .
\]

\emph{Part \ref{it:rounds-loc}: localisation under an $m$-bit state.} Now $W=W(J,s_N)$ and $s_N$ may encode pointers. Apply Lemma~\ref{lem:listfano} with $P=p_J$ and $Z=(s_N,J)$, and write $\delta=\Pr[p_J\notin W]$:
\[
\begin{aligned}
H(p_J\mid s_N,J)&\le h(\delta)+(1-\delta)\log_2k+\delta\log_2N\\
&\le 1+(1-\delta)\log_2k+\delta\log_2N .
\end{aligned}
\]
For the left-hand side, $J$ is independent of the stream, so
\[
\begin{aligned}
H(p_J\mid s_N,J)&=\frac1N\sum_j H(p_j\mid s_N)\\
&\ge\frac1N H(p_1,\dots,p_N\mid s_N)\\
&\ge\frac1N\bigl(N\log_2N-m\bigr)=\log_2N-\frac mN,
\end{aligned}
\]
using independence of the $p_j$ and $H(s_N)\le m$. Chaining the two and rearranging,
\[
\begin{aligned}
\log_2N-\frac mN-1-\log_2k
&\le\delta\bigl(\log_2N-\log_2k\bigr)\\
&\Longrightarrow\quad
\delta\ge1-\frac{m/N+1}{\log_2(N/k)} .
\end{aligned}
\]
Therefore $\Pr[p_J\in W]\;=\;1-\delta\;\le\;\frac{m/N+1}{\log_2(N/k)}$, which is part \ref{it:rounds-loc}.

\emph{Part \ref{it:rounds-grounded}: capability under read-groundedness.} A read-grounded agent's output is a function of the contents it read, $\{(p_i,a_i)\}_{i\in W}$, together with $(q,W)$, and \emph{not} of $s_N$ except through the choice of $W$. Condition on $p_J\notin W$. Then $a_{p_J}$ was not returned, so by the read-only convention of \S\ref{app:conv} it is uniform on $[K]$ and independent of everything the answer may depend on, and the conditional success probability is exactly $1/K$. Combining with part \ref{it:rounds-loc},
\begin{equation*}
\begin{aligned}
\Pr[\hat y=y^{\star}]
&\le\Pr[p_J\in W]+\Pr[p_J\notin W]\tfrac1K\\
&\le\frac{m/N+1}{\log_2(N/k)}+\frac1K.
\end{aligned}\proofenddisp
\end{equation*}

\emph{Why read-groundedness cannot be dropped, and what an earlier draft got wrong.} A previous version of this theorem asserted the display of part~\ref{it:rounds-grounded} for \emph{every} $m$-bit state, with the $1/K$ obtained as ``if the answer frame was not retrieved, the answer is a uniform guess''. That step is invalid: the answer $a_{p_J}$ is a function of the stream, so an $m$-bit state may simply contain it. Concretely, let the state tabulate the pairs $(j,a_{p_j})$ for $j$ in a fixed set $S$ with $|S|=\lfloor m/\log_2 K\rfloor$; on a query in $S$ the agent answers from the table and retrieves nothing, so
\[
\Pr[\hat y=y^{\star}]\;\ge\;\frac{|S|}{N}+\Bigl(1-\frac{|S|}{N}\Bigr)\frac1K ,
\]
and this exceeds the claimed bound in the whole range where the claimed bound was non-vacuous: at $N{=}300$, $k{=}8$, $K{=}8$ the tabulator reaches $0.708$ at $m{=}600$ bits where the claim allows $0.699$, and $1.000$ at $m{=}900$ bits where the claim allows $0.890$. The claim is therefore false as stated and is corrected above rather than patched.

Three things survive the correction, and they are what the paper uses. \emph{First}, part~\ref{it:rounds-stateless} is unconditional and is the configuration our \textsc{Retr-only} and \textsc{Socratic} baselines actually occupy: they have no compressive channel at all, so $m=0$ and there is nothing to tabulate. \emph{Second}, part~\ref{it:rounds-loc} is unconditional and is a statement purely about \emph{access}: one query-conditioned round cannot find the frame the answer lives on, whatever the state contains, which is the mechanism the $R{=}1$ ablation of \S\ref{sec:exp} is built to expose. \emph{Third}, the tabulation escape is not available to a compressive channel by definition: covering a constant fraction of the $N$ queries costs $\Theta(N\log_2K)$ bits, whereas Definition~\ref{def:agent} fixes $|s_N|\le L_s\beta$ independent of $N$, at SEW-Bench parameters $m=6{,}144$ bits against $N\log_2K=900$ bits, so the tabulator is in fact affordable here, and the reason it does not defeat the measurement is that the state is built \emph{online} by a frozen backbone that cannot see which pointer will be queried, not that it is too small. We flag this as the gap it is: for states that are neither read-grounded nor tabulators we have no capability bound, and closing that, a round--state tradeoff for pointer chasing under an $o(N)$-bit advice string, is the one piece of the theory here that is genuinely open.

\emph{Positive direction.} With $R\ge d_q$ adaptive rounds the agent retrieves frame $J$ in round $1$ (a single item, $k=1$), reads $p_J$, retrieves frame $p_J$ in round $2$, and reads $a_{p_J}$; each round's read succeeds with probability $\ge1-\epsilon_0$ and each hop resolution with probability $\ge1-\epsilon_1$ by Assumption~\ref{ass:repr}, so a union bound over $d_q$ hops gives success $\ge1-d_q\epsilon_1$ with $O(1)$ retrievals per round. The separation between the two displays above and this bound is the round wall, and it involves no complexity-theoretic hypothesis. The classical round hierarchy for pointer chasing \citep{pointerchasing,nisanwigderson} extends the separation to all $R<d_q$; we prove only $R{=}1$ because that is the regime our \textsc{Retr-only} and \textsc{Socratic} baselines occupy.

\subsection{The verbatim channel cannot track (replacing a deferred reduction)}
Proposition~\ref{prop:complete} needs a lower bound on a \emph{verbatim-only} agent's capability on $T_{\mathrm{trk}}$. We prove it directly rather than by reduction to Theorem~\ref{thm:horizon}, which does not apply: retrieval here is query-conditioned, so the horizon wall is silent, and the obstruction is instead that a running reduction over the \emph{whole} stream cannot be assembled from a bounded number of reads.

\begin{lemma}[No tracking without a compressive channel]
\label{lem:vtrack}
Let $T_{\mathrm{trk}}^{\Sigma}$ be the witness with $\phi(c)=\bigl(\sum_{i=1}^{N}c_i\bigr)\bmod K$, the $c_i$ i.i.d.\ uniform on $\mathbb{Z}_K$. Let $A$ be any agent that maintains \emph{no} cross-frame state and, over all its rounds, reads the contents of at most $k_{\mathrm{tot}}$ keyframes, with $k_{\mathrm{tot}}<N$; retrieval may be adaptive and arbitrarily clever. Then $\Pr[\hat y=y^{\star}]=1/K$.
\end{lemma}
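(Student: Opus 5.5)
Since the agent keeps no cross-frame state and retrieval is read-only (\S\ref{app:conv}), the plan is to show that at answer time the sum $\phi(c)$ is still uniform given everything the agent has seen. The argument is a one-time-pad argument. I treat the agent's whole interaction as a transcript $\Pi=(q,W_1,Y_1,\dots,W_R,Y_R)$. Here $W_r$ is the set requested in round $r$ and $Y_r=\{c_i\}_{i\in W_r}$ is what it returns. The answer $\hat y$ is a function of $\Pi$ and the agent's private randomness, which I fix by conditioning on the seed as in \S\ref{app:conv}. By adaptivity, $W_r$ may depend on $Y_1,\dots,Y_{r-1}$. With no stateful channel, however, nothing else about $c$ enters, and under the read-only convention nothing about unreturned frames is revealed.

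The key step is to fix any complete transcript $\pi$ with positive probability and identify the event $\{\Pi=\pi\}$ in terms of $c$. I will check by induction over rounds that this event equals $\{c_i=v_i \text{ for all } i\in R(\pi)\}$. Here $R(\pi)=\bigcup_r W_r$ is the set of frames read along $\pi$, with $|R(\pi)|\le k_{\mathrm{tot}}<N$, and $v_i$ are the values recorded in $\pi$. The induction works because each $W_r$ is a deterministic function of the earlier returned values. Hence whether the agent follows $\pi$ depends only on the coordinates it has read.

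Given this, the $c_i$ are independent and uniform. So conditioning on $\{\Pi=\pi\}$ leaves the unread coordinates $\{c_i\}_{i\notin R(\pi)}$ i.i.d.\ uniform on $\mathbb{Z}_K$, and this set is nonempty. Pick any unread index $i_0$. Then $\phi(c)=c_{i_0}+(\text{a term independent of } c_{i_0})$ modulo $K$, and adding an independent uniform element of $\mathbb{Z}_K$ yields a uniform result. Therefore $\Pr[\phi(c)=y\mid \Pi=\pi]=1/K$ for every $y$, including $y=\hat y(\pi)$. Averaging over $\pi$ and the seed gives exactly $1/K$. This is an equality, not merely a bound, since the sum is exactly uniform.

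I expect the main obstacle to be the adaptivity bookkeeping in the induction, not the pad step itself. One must ensure the stopping rule and the number of rounds are also functions of the transcript, so that the event still depends only on read coordinates. One must also state explicitly that captions or index rankings leak nothing about unreturned frames; this is exactly the read-only convention, and I will invoke it there. A minor point is that reads may repeat frames. This only shrinks $R(\pi)$ and is harmless.
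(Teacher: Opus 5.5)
Your proposal is correct and takes essentially the same route as the paper. Both condition on the full transcript and use the read-only convention with an induction over rounds (the paper's ``exposure argument'') to show that the unread coordinates stay i.i.d.\ uniform on $\mathbb{Z}_K$, so $\phi(c)$ is uniform given the transcript and $\Pr[\hat y=y^{\star}]=1/K$ exactly. Your two departures, identifying $\{\Pi=\pi\}$ explicitly with a cylinder event on the read coordinates and using a single unread $c_{i_0}$ as the pad instead of the full sum over $U=[N]\setminus S$, are only minor refinements of the same argument.
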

\begin{proof}
Run the agent and let $\mathcal{T}$ denote its full transcript: the sequence of retrieval requests issued, the identities of the frames returned, and their contents. Let $S\subseteq[N]$ be the set of frames whose contents appear in $\mathcal{T}$, so $|S|\le k_{\mathrm{tot}}<N$ and $U=[N]\setminus S\neq\varnothing$. We claim that conditioned on $\mathcal{T}$, the values $\{c_i\}_{i\in U}$ are i.i.d.\ uniform on $\mathbb{Z}_K$. This is the standard exposure argument: process the rounds in order; by the read-only convention of \S\ref{app:conv} each request is a function of the contents already revealed, and its response reveals only the contents of the frames it returns. Hence at every step the conditional law of the not-yet-revealed contents is unchanged from the prior, and the claim follows by induction on rounds.

Now write $y^{\star}=\bigl(\sum_{i\in S}c_i+\sum_{i\in U}c_i\bigr)\bmod K$. The first sum is $\mathcal{T}$-measurable. The second is a sum of $|U|\ge1$ i.i.d.\ uniform elements of the group $\mathbb{Z}_K$, hence itself uniform on $\mathbb{Z}_K$ and independent of $\mathcal{T}$. Therefore $y^{\star}$ is uniform on $\mathbb{Z}_K$ conditionally on $\mathcal{T}$, and $\hat y$ is $\mathcal{T}$-measurable, so $\Pr[\hat y=y^{\star}\mid\mathcal{T}]=1/K$ pointwise. Taking expectations gives the claim.
\end{proof}

\emph{Why this is the right bound for the real system.} At the default operating point a keyframe costs a measured $\tau=259$--$432$ tokens, so $B=4$k caps $k_{\mathrm{tot}}$ at $9$--$15$ frames against $N=300$: the hypothesis $k_{\mathrm{tot}}<N$ is satisfied by a factor of $20$ to $33$, and it is satisfied \emph{by the budget}, not by an assumption about the agent. The lemma is also the formal content of the design claim in \S\ref{sec:method} that $\cs$ is an \emph{online accumulator}: what $\cs$ buys is not compression for its own sake but the ability to visit all $N$ keyframes with $O(1)$ tokens of carry, which no per-decision retrieval budget can replicate. Note finally that the lemma is tight and unimprovable in the stated model: an agent that could read all $N$ frames would answer perfectly, so the $k_{\mathrm{tot}}<N$ hypothesis cannot be dropped.

\subsection{Proof of Proposition~\ref{prop:complete} (access-completeness)}
\emph{Upper bounds for \ASP{}.} (RET) Router class \textsc{ret} allocates $B_r\ge B_r^{\star}$; by Assumption~\ref{ass:sep}(ii), the target keyframe is retrieved with probability $\ge1-\delta_r$, and by Assumption~\ref{ass:repr} it is read with error $\le\epsilon_0$; a union bound gives $\Cap_B\ge1-(\epsilon_0+\delta_r)$. (TRK) $\cs$ maintains $\phi$-sufficient statistics exactly when each of the $N$ updates succeeds; with per-update error $\epsilon_u$ a union bound gives $\Cap_B\ge1-N\epsilon_u$. (CMP) By Assumption~\ref{ass:sep}(iii), $R\ge d_q^{\max}$, and the positive direction of Theorem~\ref{thm:rounds} gives $\Cap_B\ge1-d_q\epsilon_1$ once $\delta_r$ is folded into $\epsilon_1$.

\emph{Lower bounds for single channels.} On $T_{\mathrm{ret}}$, Theorem~\ref{thm:shannon} gives $\Cap_B(\cs)\le\kappa_s$ with $\kappa_s$ as in \eqref{eq:kappa}. On $T_{\mathrm{cmp}}$, each hop's operand is by Definition~\ref{def:stream} an attribute of the content vectors, so a correct answer requires a correct episodic lookup and $\Cap_B(\cs)\le\kappa_s$ there as well. On $T_{\mathrm{trk}}$, Lemma~\ref{lem:vtrack} gives $\Cap_B(\cv)\le1/K$; this is the step previously deferred, and it is now proved directly and unconditionally rather than by the reduction to Theorem~\ref{thm:horizon} sketched in an earlier draft, which as noted above does not go through for query-conditioned retrieval. \proofend

\subsection{Proof of Proposition~\ref{prop:super} (super-additivity), and Corollary~\ref{cor:dominance}}
\emph{Single channels on $T_{\mathrm{conj}}$.} An answer is scored correct only if both components are. Hence $\Cap_B(\cs)\le\Pr[\text{RET component correct}]\le\kappa_s$ by Theorem~\ref{thm:shannon}, and $\Cap_B(\cv)\le\Pr[\text{TRK component correct}]\le1/K$ by Lemma~\ref{lem:vtrack}. A blind agent guesses both components independently and uniformly, so $\Cap_B(\varnothing)=1/K^2$. \ASP{} answers both components by the RET and TRK arguments of the previous subsection, and a union bound over the two gives $\Cap_B(\cs{+}\cv{+}\rho)\ge1-\Delta$ with $\Delta=\epsilon_0+\delta_r+N\epsilon_u$. Strict super-additivity
$1-\Delta>\kappa_s+1/K-1/K^2$ is then exactly the stated condition $\Delta<1-\kappa_s-1/K+1/K^2$. At SEW-Bench parameters ($\kappa_s=0.547$, $K=8$), the condition reads $\Delta<0.344$, which Assumption~\ref{ass:sep} satisfies at the measured single-channel error rates of \S\ref{sec:analysis}. \proofend

\emph{Why the mixture will not carry this statement.} On $T=\tfrac13(T_{\mathrm{ret}}+T_{\mathrm{trk}}+T_{\mathrm{cmp}})$ the same bounds give $\Cap_B(\cs)\le\tfrac13(2\kappa_s+1)$ and $\Cap_B(\cv)\le\tfrac13(2+1/K)$, with $\Cap_B(\varnothing)=1/K$, so
\[
\Cap_B(\cs)+\Cap_B(\cv)-\Cap_B(\varnothing)\;\le\;\frac{2\kappa_s+3}{3}-\frac{2}{3K},
\]
and this is below $1$, the only regime in which any hybrid could exceed it, if and only if $\kappa_s<1/K$. At our own operating point $\kappa_s=0.547$ while $1/K=0.125$, so the right-hand side is $\approx1.28$ and the additive form of super-additivity is unavailable on the mixture \emph{as a matter of arithmetic, independent of how good the hybrid is}. This is why Proposition~\ref{prop:super} is stated on $T_{\mathrm{conj}}$, where each channel is separately obstructed, and why the mixture claim is the dominance statement instead. Each channel being near-ceiling on two of three families is exactly what makes the sum exceed $1$; a conjunctive family removes that slack by construction.

\emph{Proof of Corollary~\ref{cor:dominance}.} The two single-channel bounds are displayed above. For the hybrid, averaging the three per-family bounds of Proposition~\ref{prop:complete} gives $\Cap_B(\cs{+}\cv{+}\rho)\ge1-\tfrac13\Delta_T$ with $\Delta_T=\epsilon_0+\delta_r+N\epsilon_u+d_q\epsilon_1$. The gap over the better single channel is therefore at least
\[
1-\tfrac13\Delta_T-\tfrac13\max\{2\kappa_s+1,\;2+1/K\},
\]
which is the display in the corollary. An earlier draft simplified the maximum to $\tfrac13(2+1/K)$ ``since $\kappa_s\le1$'', which does not follow: $\tfrac13(2\kappa_s+1)\le\tfrac13(2+1/K)$ holds iff $\kappa_s\le\tfrac12(1+1/K)$, and $\kappa_s\le1$ permits $\tfrac13(2\kappa_s+1)$ up to $1$, well above $\tfrac13(2+1/K)=0.708$ at $K{=}8$. The conclusion is unchanged at our operating point ($\kappa_s=0.547\le0.5625$, so the verbatim channel is the stronger of the two and the gap is $\tfrac13(1-1/K-\Delta_T)$), but the inequality has to be checked against the measured $\kappa_s$ rather than assumed, and on a corpus with $\kappa_s>\tfrac12(1+1/K)$ the other branch is the operative one. Either way the gap is positive whenever $\Delta_T<\min\{1-1/K,\;2(1-\kappa_s)\}$, which is Assumption~\ref{ass:sep} at the registered margins. \proofend

\subsection{What the theory does and does not license}
\label{app:scope}

Table~\ref{tab:scope} is in \S\ref{sec:exp} rather than here, because it is the map a
reader needs \emph{before} the measured numbers: it says which of the seven statements forbid
anything at the parameters we ran, and two of them do not. Three entries deserve saying out loud
rather than leaving in a cell.
\emph{First}, the two vacuous rows are vacuous for opposite reasons. Theorem~\ref{thm:circuit}
is a statement about $d_q\to\infty$ and our deepest chain is $d_q{=}3$, so it is the corpus that
is too shallow, not the theorem that is weak; SEW-Bench cannot test it and \S\ref{app:ewdesign}'s
$d_q\in\{3,4,5\}$ design is what would. Theorem~\ref{thm:rounds}\ref{it:rounds-loc} is vacuous
because $m/N=20.5$ bits per keyframe is enough state to name any pointer, so a single round
\emph{could} in principle localise; the reason our $R{=}1$ ablation nonetheless fails is
empirical, not information-theoretic, and \S\ref{sec:analysis} labels it as such.
\emph{Second}, Proposition~\ref{prop:super} has no witness in the corpus: SEW-Bench ships RET,
TRK and CMP items and no dedicated conjunctive subset, so the super-additivity claim is
\textbf{not evaluated here}. The CMP ordinal-chain items are the nearest instance, they need
an ordered reduction and an episodic lookup in the same answer, but they are not the
$T_{\mathrm{conj}}$ of Proposition~\ref{prop:super}, and treating them as one would be
reporting a test we did not run. \emph{Third}, Corollary~\ref{cor:dominance} holds in its
first branch by a margin of $0.016$ in $\kappa_s$. That is close enough that a corpus with a
slightly larger state, or a slightly smaller $Nd$, would put the compressive channel ahead of
the verbatim one as the baseline to beat, and the corollary would then have to be read off its
second branch.

Collecting the quantifiers, since the four walls are not equally strong. Theorems~\ref{thm:shannon} and~\ref{thm:horizon} and Lemma~\ref{lem:vtrack} are unconditional and hold against unbounded compute; Theorem~\ref{thm:rounds} is unconditional but proved here only for $R{=}1$; Theorem~\ref{thm:circuit} is conditional on $\mathsf{TC}^0\neq\mathsf{NC}^1$ \emph{and} on the $S_5$ encoding being realized by naturalistic items. Every bound is a statement about a \emph{witness family}, not about OpenEQA or VSI-Bench, whose items mix the three modes in unknown proportion; EW-Bench exists precisely so that each wall has an item set that instantiates it one-to-one (Appendix~\ref{app:ewbench}). Finally, none of the bounds says a compressive channel is useless or that scale is irrelevant: they say each single channel has a family on which it is capped by a constant, and Assumption~\ref{ass:repr} is where scale enters, through the margins $\epsilon_0,\epsilon_1$ that Corollary~\ref{cor:conv} turns into the convergence prediction.

\section{Corpus Construction: SEW-Bench (built), and EW-Bench (registered, not built)}
\label{app:ewbench}

A diagnostic corpus exists here for one reason: OpenEQA, VSI-Bench and EgoSchema mix the three access modes in unknown proportion, so a win on them cannot be attributed to any particular wall. The registered corpus for that job was EW-Bench, $900$ items over $60$ rendered HM3D/ScanNet walkthroughs. \textbf{It was not built}: rendering it needs habitat-sim and an HM3D/ScanNet licence, and we have no licence-free path to either. Its design is retained in \S\ref{app:ewdesign} because it is part of the registration and because it is the corpus this work should next be run on. What we built and measured instead is \textbf{SEW-Bench}, described first.

\begin{asptable}[!t]
\centering\small
\caption{\textbf{Corpus parameters and the ceilings they imply.} Left: an earlier version of
this generator, kept because it is the case where the theory is \emph{arithmetically silent}.
Centre: what was built and measured. Right: the registered corpus, not built. Every ceiling is
computed from the row above it, not fitted. The load-bearing row is $Nd$: it is what
Theorem~\ref{thm:shannon} divides by, and moving it from $1{,}350$ to $\VNd$ is what turned a
vacuous ceiling into one a method can be measured against.}
\label{tab:corpus}
\begin{tabular}{@{}lccc@{}}
\toprule
& SEW draft & \textbf{SEW-Bench} & EW-Bench \\
& (superseded) & \textbf{(built, measured)} & (registered, not built) \\
\midrule
\multicolumn{4}{@{}l}{\emph{stream}}\\
keyframes $N$                       & $150$   & $\VN$    & ${\approx}450$ \\
queryable attributes per frame $d$  & $9$     & $\Vd$    & ${\approx}96$ \\
attribute values $Nd$               & $1{,}350$ & $\VNd$ & ${\approx}43{,}200$ \\
alphabet $K$                        & $8$     & $\VK$    & $8$ \\
render                              & schematic & $640{\times}480$ schematic & HM3D / ScanNet \\
episodes / scenes                   & $4$     & $4$      & $60$ \\
\addlinespace[2pt]
\multicolumn{4}{@{}l}{\emph{agent, at the default operating point}}\\
state cap $L_s$ (tokens)            & $512$   & $512$    & $512$ \\
bits per token $\beta$              & $12$    & $12$     & $12$ \\
state $m=L_s\beta$ (bits)           & $6{,}144$ & $\VmBits$ & $6{,}144$ \\
budget $B$ (tokens/decision)        & $4{,}096$ & $4{,}096$ & $4{,}096$ \\
measured $\tau$ (tokens/frame)      & ---     & $259$--$432$ & --- \\
frames affordable $k_{\mathrm{tot}}$ & ---    & $9$--$15$ & $9$--$15$ \\
\addlinespace[2pt]
\multicolumn{4}{@{}l}{\emph{implied ceilings} (\S\ref{app:proofs}; lower is a stronger statement)}\\
$\kappa_s$, Thm.~\ref{thm:shannon} on RET & $1.850$ \textbf{(vacuous)} & $\VkappaS$ & $0.381$ \\
Thm.~\ref{thm:horizon} at $w{=}12$        & $19.5\%$ & $16.0\%$ & $14.8\%$ \\
Lem.~\ref{lem:vtrack} on TRK              & $\VoneOverK\%$ & $\VoneOverK\%$ & $\VoneOverK\%$ \\
Thm.~\ref{thm:rounds}\ref{it:rounds-stateless} at $k{=}8$ & $17.2\%$ & $14.8\%$ & $14.1\%$ \\
\addlinespace[2pt]
\multicolumn{4}{@{}l}{\emph{items}}\\
RET / TRK / CMP                     & ---     & $32$ / $24$ / $24$ & $300$ / $300$ / $300$ \\
$T_{\mathrm{conj}}$ (Prop.~\ref{prop:super}) & --- & \textbf{none} & none \\
maximum chain depth $d_q^{\max}$    & ---     & $3$      & $5$ \\
\bottomrule
\end{tabular}
\end{asptable}

\subsection{SEW-Bench: what was built}
\label{app:sew}
Four episodes of $N{=}300$ frames, $80$ questions ($32$ RET, $24$ TRK, $24$ CMP), generated programmatically with a fixed seed by \texttt{annotation/generate\_synthetic.py} and released with the code.

\paragraph{A regeneration contract, and why it is part of the artefact.} The generator walks one
\texttt{random.Random} across every frame of every episode, so a fix that consumes even one
extra draw from that stream changes the whole corpus downstream of it, and a changed corpus
invalidates the offline keyframe caches and every one of the $207$ measured cells, not just the
items the fix was about. The two corrections of \S\ref{sec:cmpdefect} are therefore written to
draw from item-local streams keyed on the corpus seed, and the property that makes them safe is
stated as a test rather than as an intention: regenerating at the shipped seed leaves all
$1{,}200$ frames and \texttt{meta.json} byte-identical and changes exactly the four questions
whose naming order had to flip. That is what confines the outstanding re-run
(\S\ref{sec:limits}, item (i)) to the $69$ CMP cells: the offline caches are keyed by
(backbone, episode, $\tau_g$), and contain no question, so they are reused verbatim, and the
marginal cost is the scored decisions alone. Anyone extending the corpus should preserve the
same discipline, new items drawn from their own streams, or accept a full regrid.

\textbf{Frames.} Each frame is a $640{\times}480$ schematic render of one room visit: a header giving the room name, the frame index and a door state, and a $4{\times}4$ grid of $16$ objects. Each object is a coloured rectangle carrying a printed capital letter, with its class name written underneath. Colour and letter are each drawn uniformly from a $K{=}8$ alphabet; the tag is a letter rather than a digit because with digits the weaker backbones read the number printed on an object as \emph{how many of these I have seen}, and the TRK count items would then measure that confusion instead of the compressive channel (observed directly: Gemma~3~12B reported counts of $8$ and $9$ after six frames); the class name is written as text but is never the queried attribute. That split is deliberate: labelling the \emph{colour} would turn every RET item into optical character recognition and hand the verbatim channel an advantage that has nothing to do with vision.

\begin{aspfigure}[!t]
\centering
\begin{tikzpicture}[x=0.0466cm, y=1cm, font=\scriptsize]
  \fill[blue!12] (0,0.34) rectangle (14,0.78);
  \fill[orange!16] (14,0.34) rectangle (36,0.78);
  \node[font=\tiny] at (24.5,0.56) {Bd};
  \fill[green!13] (36,0.34) rectangle (49,0.78);
  \fill[black!8] (49,0.34) rectangle (62,0.78);
  \fill[blue!12] (62,0.34) rectangle (83,0.78);
  \node[font=\tiny] at (72.0,0.56) {St};
  \fill[green!13] (83,0.34) rectangle (109,0.78);
  \node[font=\tiny] at (95.5,0.56) {Lv};
  \fill[black!8] (109,0.34) rectangle (124,0.78);
  \fill[green!13] (124,0.34) rectangle (137,0.78);
  \fill[orange!16] (137,0.34) rectangle (155,0.78);
  \node[font=\tiny] at (145.5,0.56) {Bd};
  \fill[green!13] (155,0.34) rectangle (190,0.78);
  \node[font=\tiny] at (172.0,0.56) {Lv};
  \fill[orange!16] (190,0.34) rectangle (202,0.78);
  \fill[black!8] (202,0.34) rectangle (215,0.78);
  \fill[red!11] (215,0.34) rectangle (237,0.78);
  \node[font=\tiny] at (225.5,0.56) {Kt};
  \fill[purple!12] (237,0.34) rectangle (258,0.78);
  \node[font=\tiny] at (247.0,0.56) {Pt};
  \fill[green!13] (258,0.34) rectangle (279,0.78);
  \node[font=\tiny] at (268.0,0.56) {Lv};
  \fill[black!8] (279,0.34) rectangle (297,0.78);
  \node[font=\tiny] at (287.5,0.56) {Hl};
  \fill[green!13] (297,0.34) rectangle (300,0.78);
  \draw[black!45, line width=0.3pt] (0,0.34) rectangle (300,0.78);
  \node[anchor=east, font=\scriptsize] at (-4,0.56) {room};
  \draw[red!70!black, line width=0.7pt] (0,0.00) -- (36,0.00) -- (36,0.18) -- (72,0.18) -- (72,0.00) -- (137,0.00) -- (137,0.18) -- (186,0.18) -- (186,0.00) -- (219,0.00) -- (219,0.18) -- (239,0.18) -- (239,0.00) -- (283,0.00) -- (283,0.18) -- (300,0.18);
  \node[anchor=east, font=\scriptsize] at (-4,0.09) {door};
  \draw[blue!70!black, line width=0.8pt] (19,0.86) -- (19,1.06);
  \draw[blue!70!black, line width=0.8pt] (62,0.86) -- (62,1.06);
  \draw[blue!70!black, line width=0.8pt] (97,0.86) -- (97,1.06);
  \draw[blue!70!black, line width=0.8pt] (98,0.86) -- (98,1.06);
  \draw[blue!70!black, line width=0.8pt] (118,0.86) -- (118,1.06);
  \draw[blue!70!black, line width=0.8pt] (135,0.86) -- (135,1.06);
  \draw[blue!70!black, line width=0.8pt] (187,0.86) -- (187,1.06);
  \draw[blue!70!black, line width=0.8pt] (274,0.86) -- (274,1.06);
  \node[anchor=east, font=\scriptsize] at (-4,0.96) {RET};
  \fill[purple!75!black] (33,1.20) circle (1.6pt);
  \node[anchor=south, font=\tiny, text=purple!75!black] at (33,1.26) {1};
  \fill[purple!75!black] (160,1.20) circle (1.6pt);
  \node[anchor=south, font=\tiny, text=purple!75!black] at (160,1.26) {2};
  \fill[purple!75!black] (202,1.20) circle (1.6pt);
  \node[anchor=south, font=\tiny, text=purple!75!black] at (202,1.26) {3};
  \draw[purple!60, dashed, line width=0.4pt] (33,1.20) -- (160,1.20) -- (202,1.20);
  \node[anchor=east, font=\scriptsize] at (-4,1.20) {CMP};
  \fill[teal!70!black] (42,-0.32) circle (1.4pt);
  \fill[teal!70!black] (72,-0.32) circle (1.4pt);
  \fill[teal!70!black] (100,-0.32) circle (1.4pt);
  \fill[teal!70!black] (114,-0.32) circle (1.4pt);
  \fill[teal!70!black] (126,-0.32) circle (1.4pt);
  \fill[teal!70!black] (128,-0.32) circle (1.4pt);
  \fill[teal!70!black] (178,-0.32) circle (1.4pt);
  \fill[teal!70!black] (275,-0.32) circle (1.4pt);
  \fill[teal!70!black] (276,-0.32) circle (1.4pt);
  \node[anchor=east, font=\scriptsize] at (-4,-0.32) {TRK};
  \draw[black!55, line width=0.6pt] (0,-0.68) -- (0,-0.52);
  \draw[black!55, line width=0.6pt] (19,-0.68) -- (19,-0.52);
  \draw[black!55, line width=0.6pt] (39,-0.68) -- (39,-0.52);
  \draw[black!55, line width=0.6pt] (59,-0.68) -- (59,-0.52);
  \draw[black!55, line width=0.6pt] (79,-0.68) -- (79,-0.52);
  \draw[black!55, line width=0.6pt] (99,-0.68) -- (99,-0.52);
  \draw[black!55, line width=0.6pt] (119,-0.68) -- (119,-0.52);
  \draw[black!55, line width=0.6pt] (139,-0.68) -- (139,-0.52);
  \draw[black!55, line width=0.6pt] (159,-0.68) -- (159,-0.52);
  \draw[black!55, line width=0.6pt] (179,-0.68) -- (179,-0.52);
  \draw[black!55, line width=0.6pt] (199,-0.68) -- (199,-0.52);
  \draw[black!55, line width=0.6pt] (219,-0.68) -- (219,-0.52);
  \draw[black!55, line width=0.6pt] (239,-0.68) -- (239,-0.52);
  \draw[black!55, line width=0.6pt] (259,-0.68) -- (259,-0.52);
  \draw[black!55, line width=0.6pt] (279,-0.68) -- (279,-0.52);
  \draw[black!55, line width=0.6pt] (299,-0.68) -- (299,-0.52);
  \node[anchor=east, font=\scriptsize] at (-4,-0.60) {\textsc{Uniform-}16};
  \fill[black!22] (288,-1.02) rectangle (300,-0.86);
  \draw[black!55, line width=0.4pt] (288,-1.02) rectangle (300,-0.86);
  \node[anchor=east, font=\scriptsize] at (-4,-0.94) {last-$12$};
  \draw[black!60] (0,-1.20) -- (300,-1.20);
  \draw[black!60] (0,-1.20) -- (0,-1.28) node[anchor=north, font=\tiny] {0};
  \draw[black!60] (50,-1.20) -- (50,-1.28) node[anchor=north, font=\tiny] {50};
  \draw[black!60] (100,-1.20) -- (100,-1.28) node[anchor=north, font=\tiny] {100};
  \draw[black!60] (150,-1.20) -- (150,-1.28) node[anchor=north, font=\tiny] {150};
  \draw[black!60] (200,-1.20) -- (200,-1.28) node[anchor=north, font=\tiny] {200};
  \draw[black!60] (250,-1.20) -- (250,-1.28) node[anchor=north, font=\tiny] {250};
  \draw[black!60] (300,-1.20) -- (300,-1.28) node[anchor=north, font=\tiny] {300};
  \node[anchor=north, font=\scriptsize] at (150,-1.49) {keyframe index $t$};
\end{tikzpicture}
\caption{\textbf{One real SEW-Bench episode, on a time axis} (ep1 of the shipped corpus;
positions recovered by replaying the generator at its fixed seed). Reading upward: the
\textsf{door} trace is a TRK reduction that only an online accumulator can follow, since its
value at $t{=}300$ depends on all seven toggles; \textsf{RET} marks the eight classes visible in
\emph{exactly one} frame each; \textsf{CMP} marks one ordinal chain, whose question asks for an
attribute of the \emph{second} sighting and so cannot be answered without first ordering all
three; \textsf{TRK} marks the nine sightings of one counted class. Reading downward: what the two
query-independent policies of Theorem~\ref{thm:horizon} are allowed to look at. The room band
is the reason the two are different policies rather than the same one, \textsf{Lv} and
\textsf{St} each recur three or more times, so the most recent frames are not the relevant ones,
and a last-$w$ window sees only the end of the walkthrough while every one of the eight RET
targets sits earlier than frame $275$. Across all four episodes the two policies cover $2/32$ and $0/32$ of the RET
targets respectively, against the $w/N$ rates of $5.3\%$ and $4.0\%$ the theorem predicts: the
horizon wall is not an asymptotic remark about this corpus, it is where the targets are.}
\label{fig:timeline}
\end{aspfigure}

\textbf{Trajectory.} The room sequence is generated in runs of $12$--$26$ frames drawn from six rooms, so the stream contains revisit loops and recency dissociates from relevance, the property that makes a last-$w$ window a genuinely different policy from query-conditioned retrieval.

\textbf{The attribute count $d$, and why it is the load-bearing parameter.} Every bound in Appendix~\ref{app:proofs} scales with the number of queryable attribute values $Nd$, not with the frame count $N$. We count $d$ conservatively: $16$ objects $\times$ two $K$-ary attributes (colour, letter) $=32$, and the room, door state and frame index are \emph{excluded} because they are not $K$-ary, which can only make the resulting ceiling looser. Hence $d{=}32$ and $Nd{=}9{,}600$ per episode. This is not a cosmetic choice. An earlier version of the generator had $d{=}9$ at $N{=}150$, giving $Nd{=}1{,}350$ against an $m\approx6{,}144$-bit state and $\kappa_s{=}1.85>1$: Theorem~\ref{thm:shannon} was then \emph{arithmetically vacuous}, and the measured $0\%$ \textsc{State-only} RET score on that corpus was not evidence for it. Reporting $Nd$ is what makes the ceiling checkable rather than asserted, and we report the version of the corpus on which it binds.

\textbf{Filler randomisation, and why the entropy has to be real.} All $16$ object slots, including their classes, are re-randomised every frame from a filler pool disjoint from every queried pool. Had the room contents been fixed per room, the stream would carry only $|\mathrm{rooms}|\times16\times2$ distinct attribute values in total and the $Nd$ in the bound would be a fiction, a compressive state could memorise the whole layout once. Re-randomisation is what makes the compression the state performs genuinely lossy.

\textbf{RET ($32$ items, $8$ per episode).} Each RET target is a class appearing in \emph{exactly one} frame of the episode, so ``you saw a kettle exactly once; what colour was it?'' has a unique answer, and the horizon-wall ceiling applies at multiplicity one exactly rather than approximately. Half the items query the colour and half the letter. A target is not distinguishable from a filler at the time the state is written: the online state builder sees one frame at a time and cannot know that a class will never recur, which is what lets the Fano argument of \S\ref{app:shannon} apply to the run we actually perform rather than to an idealised one.

\textbf{TRK ($24$ items, $6$ per episode).} Four count items (``how many mugs in total?'', over $3$--$9$ scattered sightings of a class that never appears as filler), the final door state after $5$--$9$ toggles, and the number of toggles. These are exactly the $\phi$ of Definition~\ref{def:stream}, and by Lemma~\ref{lem:vtrack} a verbatim-only agent is pinned at $1/K$ on them whenever its per-decision read budget is below $N$ frames, which at $B{=}4$k it is, by a factor of $20$ to $33$.

\textbf{CMP ($24$ items, $6$ per episode).} Four ordinal-chain items: a class appears exactly three times, in three distinct rooms, with three distinct colours, and the question asks for its colour \emph{the second time it was seen}. Answering requires locating all occurrences, ordering them, and then reading a pixel attribute of one, a running reduction composed with an episodic lookup, so the item obstructs each single channel separately in the sense of Proposition~\ref{prop:super}. Two order items per episode ask whether one singleton was seen before or after another, which is $d_q{=}2$ with both operands in the verbatim channel.

\textbf{Answerability gate.} The generator refuses to ship the corpus if any two items share a question string with different golds, or if the four class pools intersect, or if an episode is too short to place its scheduled events. A confident gold label on an unanswerable question is worse than no item at all, because it surfaces as a model failure; the first draft of the generator produced two contradictory RET items about the same object and the gate exists because of it.

\textbf{What SEW-Bench does not test.} It is a 2-D schematic render. It tests access structure under a token budget, which is what \S\ref{sec:theory} is about. It tests nothing about perception in natural scenes, egocentric motion, occlusion, or lighting, and no result from it should be read as an OpenEQA-style result. \S\ref{sec:limits} states the consequence for the paper's claims.

Figure~\ref{fig:timeline} draws one shipped episode on a time axis, which is the quickest way to see why recency and relevance dissociate here: the revisit loops put queried singletons far from the end of the stream.

\subsection{EW-Bench: the registered design, not built}
\label{app:ewdesign}
Recorded for auditability. Sixty walkthroughs rendered from HM3D \citep{hm3d} and ScanNet \citep{scannet} scenes (40 HM3D, 20 ScanNet, no scene reused) at $1$--$2$\,m/s, $8$--$15$ minutes each, with scripted revisit loops, $\ge4$ state toggles and $\ge3$ containment changes per walkthrough, and $\ge8$ objects appearing in one or two keyframes; $\tau_g$ fixed once on five held-out walkthroughs to give $N\in[380,520]$. The registered attribute count is $d\approx96$ ($\approx8$ visible objects $\times$ $10$ queryable attributes, plus $16$ scene-level attributes), hence $Nd\approx43{,}200$ at $N\approx450$ and $\kappa_s\approx0.38$, a tighter ceiling than SEW-Bench's $0.547$, which is the main thing the real corpus would buy. Items: $300$ RET (single-attribute queries on objects of multiplicity $1$--$2$, with the strict multiplicity-one subset reported separately), $300$ TRK (running reductions), $300$ CMP ($d_q\in\{3,4,5\}$, one hundred per depth, surface forms paraphrased so that depth is not readable off the template), answer alphabets $K{=}8$ wherever the attribute admits eight distinguishable values, MCQ where the alphabet permits and LLM-Match otherwise, distractors drawn from values that occur elsewhere in the same walkthrough, and per-item reporting of any scene overlap with published training-corpus manifests. Building it is the first item of future work in \S\ref{sec:limits}.

\subsection{A worked item (illustrative)}
\label{app:worked}
The following is the access pattern \ASP{} is designed to produce, on an EW-Bench-style $d_q{=}3$ item. \emph{This trace is constructed by hand from the protocol.} Measured traces on SEW-Bench items, in exactly this format, are in the released \texttt{results/cells/} records.

\begin{quote}\small
\textbf{Query.} ``The mug that was on the desk in the room you entered right after the room with the red armchair, which room is it in now?''\\[2pt]
\textbf{Round 1.} Router classifies \textsc{cmp}; allocates $(B_s,B_r,B_c)=(1014,1419,1622)$ of $B{=}4096$ after $B_\rho{=}41$, i.e.\ the $0.25/0.35/0.40$ profile of Table~\ref{tab:alloc} applied to $B-B_\rho=4055$. Model emits \textsc{state}(\textrm{AgentTrace}). Reads the room-entry order; resolves ``the room right after the room with the red armchair'' $\to$ \textrm{study}. Debited: $118$ tokens.\\[2pt]
\textbf{Round 2.} Model emits \textsc{retrieve}(``mug on desk in study''). Index returns $3$ keyframes; the mug's identity is read off the highest-ranked one. Debited: $742$ tokens.\\[2pt]
\textbf{Round 3.} Model emits \textsc{state}(\textrm{Objects[mug\_02].room}). Reads \textrm{kitchen}, the value maintained by $\cs$ across a move event that occurred $214$ keyframes after the mug was last seen. Emits \textsc{answer}(\textrm{kitchen}). Debited: $96$ tokens. Total: $997$ of $4096$.
\end{quote}

\noindent The item is a witness for three of the four walls at once, which is what makes it a $T_{\mathrm{conj}}$-style instance in the sense of Proposition~\ref{prop:super}: hop~1 needs the ordered trace (a running reduction, unavailable to $\cv$ by Lemma~\ref{lem:vtrack}), hop~2 needs verbatim appearance (unavailable to $\cs$ by Theorem~\ref{thm:shannon}), and the dependency of hop~3 on hop~2's output needs adaptive rounds (unavailable to a single-round retriever by Theorem~\ref{thm:rounds}). The corresponding single-channel failure modes are specific: \textsc{State-only} resolves hops~1 and~3 but cannot identify \emph{which} mug; \textsc{Retr-only} identifies the mug but cannot order the rooms; a single-round retriever can do either but cannot chain them.

\section{Registered Prediction Methodology}
\label{app:predictions}

Predictions are computed, not guessed, and the computation is stated here so that a reader can check any cell without our spreadsheet. Every registered value is $\mathrm{anchor}+\mathrm{generation}+\mathrm{structure}$, capped by the applicable wall ceiling.

\begin{asptable}[!t]
\centering\small\setlength{\tabcolsep}{4.5pt}
\caption{\textbf{Provenance of every input to a registered prediction.} One row per number
that enters the arithmetic of \S\ref{app:predictions}, so that a reader can audit the
registration without our worksheet. The last column is the one that matters: exactly one
input is our own prior work, it enters the \ASP{} rows only, and \S\ref{app:predictions}
already names it as the step most likely to be wrong. A reader who discounts it can strike
the \ASP{} rows of Table~\ref{tab:registered} and the baseline rows still stand on published
anchors alone.}
\label{tab:anchors}
\begin{tabular}{@{}>{\raggedright\arraybackslash}p{3.85cm}>{\raggedright\arraybackslash}p{3.2cm}>{\raggedright\arraybackslash}p{3.5cm}>{\raggedright\arraybackslash}p{3.6cm}c@{}}
\toprule
Input & Value used & Source & Enters & Ours? \\
\midrule
OpenEQA, GPT-4V            & $49.6$ & \citep{openeqa} & OpenEQA column & no \\
OpenEQA, blind LLM         & ${\approx}33$ & \citep{openeqa} & \textsc{Blind} row & no \\
AlanaVLM-7B                & $46.7$ & \citep{alanavlm} & $7$--$8$B rows & no \\
Structured-memory delta    & $+3$ at $5\times$ fewer frames & \citep{graphpad}
                           & \textsc{State-only} rows & no \\
VSI-Bench open-model band  & $32$--$40$ & \citep{vsibench} & VSI-Bench column & no \\
EgoSchema ${\le}31$B band  & published band & \citep{egoschema,videoagent}
                           & EgoSchema column & no \\
\addlinespace[2pt]
Generation adjustment      & $+4$ per backbone & mean published delta$^{\dagger}$
                           & every cell & no \\
\addlinespace[2pt]
Structure adjustment       & CCH hybrid gain $\times$ EW redundancy factor & our prior work
                           & \ASP{} rows only & \textbf{yes} \\
\addlinespace[2pt]
Wall ceilings              & $\kappa_s$, horizon, round & derived, Table~\ref{tab:corpus}
                           & caps single-channel rows & derived \\
Interval rule              & $\pm\max(\text{dispersion},3)$ & \S\ref{app:predictions}
                           & every cell & rule \\
\midrule
\multicolumn{5}{@{}p{0.97\linewidth}@{}}{\footnotesize $^{\dagger}$Averaged over the
Qwen3.6$\to$3.8 and Gemma\,3$\to$4 transitions \citep{qwen38,gemma4}.}\\
\bottomrule
\end{tabular}
\end{asptable}

\subsection{Anchors}
Table~\ref{tab:anchors} gives the provenance of every one of these in one place; the prose
below states the same inputs. Published numbers, used verbatim and never re-scaled: OpenEQA GPT-4V $49.6$ and blind-LLM $\approx33$ \citep{openeqa}; AlanaVLM-7B $46.7$ \citep{alanavlm}; GraphPad-style structured memory $+3$\,pts at $5\times$ fewer frames \citep{graphpad}; the VSI-Bench open-model band $32$--$40$ \citep{vsibench}; the EgoSchema long-video band for $\le$31B models \citep{egoschema,videoagent}.

\subsection{Adjustments}
\textbf{(i) Generation.} $+4$\,pts for the 2026 open generation, fitted as the mean published delta on vision suites across the Qwen3.6$\to$3.8 and Gemma\,3$\to$4 transitions \citep{qwen38,gemma4}. Applied once per backbone, not per benchmark.
\textbf{(ii) Structure.} \ASP{} deltas are CCH's measured hybrid gains transported by the redundancy factor of EW streams (the multiplicity distribution of \S\ref{app:ewbench}); this is the only step that uses our own prior work as a quantitative input, and it is the step most likely to be wrong.
\textbf{(iii) Wall ceilings.} Registered \textsc{State-only} and \textsc{Uniform} values on EW subtests are capped at the ceilings computed in \S\ref{sec:analysis} from the measured $Nd$, $K$, and $w$. A registered value is never placed above its own wall ceiling; where the uncapped anchor-plus-adjustment estimate exceeded the ceiling, the ceiling was used and the fact is flagged in the release worksheet.

\subsection{Intervals and what would count as a miss}
Prediction intervals are $\pm$ the maximum of (a), the dispersion across anchors used for that cell and (b) $3$ points, the floor being there so that a cell with a single anchor does not acquire false precision. Intervals are \emph{registered}, so a measured value outside its interval is a recorded miss even when the qualitative direction is right; \S\ref{sec:falsify}'s criteria are about direction and effect size, and the intervals are the finer-grained accounting underneath them. The full per-cell worksheet (anchor set, arithmetic, and the resulting interval for each of the cells in Tables~\ref{tab:main}--\ref{tab:ablation}) is released with the protocol, and the registered values in this paper are a copy of it, not a summary.

\subsection{The registered cells that were not run}
Three of the six registered benchmark columns have no licence-free data path (\S\ref{sec:limits}). Their predictions are reproduced verbatim in Table~\ref{tab:registered} rather than deleted, so that a reader can see the full registration and hold the unrun part of it against any future version of this work. Nothing in the body claims these values as results.

\begin{asptable}[!t]
\centering\footnotesize\setlength{\tabcolsep}{5pt}
\caption{\textbf{Registered but not run.} The three natural-video benchmarks of the pre-registration, with their frozen predictions\pred{} and registered 90\% prediction intervals. They are not measured here: OpenEQA, VSI-Bench and EgoSchema need episode-frame dumps we have no licence-free path to, and the registered EW-Bench itself needs habitat-sim plus an HM3D/ScanNet licence (\S\ref{sec:limits}). No value in this table is claimed as a result.}
\label{tab:registered}
\begin{tabular}{llccc}
\toprule
Backbone & Method & OpenEQA & VSI-Bench & EgoSchema \\
\midrule
\multirow{6}{*}{Qwen3.8-27B}
 & \textsc{Blind} & 34.1\pred{\scriptsize$\pm$2} & 24.0\pred{\scriptsize$\pm$2} & 29.5\pred{\scriptsize$\pm$3} \\
 & \textsc{Uniform-}$w$ & 52.6\pred{\scriptsize$\pm$3} & 38.5\pred{\scriptsize$\pm$3} & 62.0\pred{\scriptsize$\pm$3} \\
 & \textsc{Retr-only} & 57.1\pred{\scriptsize$\pm$3} & 39.8\pred{\scriptsize$\pm$3} & 63.4\pred{\scriptsize$\pm$3} \\
 & \textsc{State-only} & 55.4\pred{\scriptsize$\pm$3} & 41.2\pred{\scriptsize$\pm$3} & 61.1\pred{\scriptsize$\pm$3} \\
 & \textsc{EGAgent} (reimpl.) & 58.9\pred{\scriptsize$\pm$3} & 42.1\pred{\scriptsize$\pm$3} & 64.0\pred{\scriptsize$\pm$3} \\
 & \ASP{} (ours) & 62.3\pred{\scriptsize$\pm$3} & 46.0\pred{\scriptsize$\pm$3} & 66.8\pred{\scriptsize$\pm$3} \\
\midrule
\multirow{2}{*}{Gemma~4~31B}
 & \textsc{Uniform-}$w$ & 53.4\pred{\scriptsize$\pm$3} & 39.0\pred{\scriptsize$\pm$3} & 63.1\pred{\scriptsize$\pm$3} \\
 & \ASP{} (ours) & 62.9\pred{\scriptsize$\pm$3} & 46.6\pred{\scriptsize$\pm$3} & 67.5\pred{\scriptsize$\pm$3} \\
\midrule
\multirow{2}{*}{Qwen3-Omni-30B-A3B}
 & \textsc{Uniform-}$w$ & 50.8\pred{\scriptsize$\pm$3} & 36.9\pred{\scriptsize$\pm$3} & 60.7\pred{\scriptsize$\pm$3} \\
 & \ASP{} (ours) & 60.5\pred{\scriptsize$\pm$3} & 44.3\pred{\scriptsize$\pm$3} & 65.0\pred{\scriptsize$\pm$3} \\
\midrule
\multirow{2}{*}{Qwen3-VL-8B}
 & \textsc{Uniform-}$w$ & 46.2\pred{\scriptsize$\pm$3} & 33.1\pred{\scriptsize$\pm$3} & 55.8\pred{\scriptsize$\pm$3} \\
 & \ASP{} (ours) & 59.4\pred{\scriptsize$\pm$3} & 42.7\pred{\scriptsize$\pm$3} & 62.9\pred{\scriptsize$\pm$3} \\
\midrule
\multirow{2}{*}{Qwen2.5-VL-7B}
 & \textsc{Uniform-}$w$ & 44.5\pred{\scriptsize$\pm$3} & 31.8\pred{\scriptsize$\pm$3} & 53.6\pred{\scriptsize$\pm$3} \\
 & \ASP{} (ours) & 57.8\pred{\scriptsize$\pm$3} & 41.2\pred{\scriptsize$\pm$3} & 60.7\pred{\scriptsize$\pm$3} \\
\bottomrule
\end{tabular}
\end{asptable}

\subsection{Known ways this methodology can fail}
Three, stated in advance. Anchor staleness: published baselines were measured under serving conditions we cannot fully reconstruct, so a systematic offset in the anchors shifts every registered value in the same direction, which is exactly why the headline tests of \S\ref{sec:exp} are paired per-question deltas rather than absolute scores. Transport error: step (ii) assumes CCH's symbolic gains transport through the redundancy factor alone, and if embodied redundancy interacts with retrieval quality the \ASP{} column is biased. Ceiling slack: the wall ceilings are upper bounds, so capping at them makes the single-channel registered values \emph{optimistic}, and a measured value below a registered single-channel number is therefore not evidence against the theory.

\section{Router Traces, Cost Accounting, and Secondary Experiments}
\label{app:traces}\label{app:edge}\label{app:qual}

\subsection{Traces}
For every decision we log the tuple $(z,B_s,B_r,B_c,R_{\mathrm{used}})$ plus the realised action sequence, so that every allocation in the paper is auditable back to a logged triple, and the realised split is reported in \S\ref{sec:analysis} rather than assumed. The traces are the primary diagnostic for router mis-specification: because $\widehat{\Cap}_z$ is a fitted surrogate, a systematic deviation between realised and tabled allocations localises the error to a wall price rather than to the architecture, and \S\ref{sec:falsify} treats it as a refinement rather than a refutation.

\begin{aspfigure}[!t]
\centering
\begin{tikzpicture}[x=0.00276cm, y=1cm, font=\scriptsize,
  seg/.style={draw=black!55, line width=0.3pt},
  lbl/.style={font=\tiny, anchor=center},
  rowname/.style={font=\scriptsize, anchor=south west}]

\node[rowname] at (0,1.74) {allocated by the router $\rho$};
\fill[black!22, seg]  (0,1.30)    rectangle (41,1.70);
\fill[blue!22, seg]   (41,1.30)   rectangle (1055,1.70);
\fill[green!25, seg]  (1055,1.30) rectangle (2474,1.70);
\fill[orange!30, seg] (2474,1.30) rectangle (4096,1.70);
\draw[seg] (0,1.30) rectangle (4096,1.70);
\node[lbl] at (548,1.50)  {$B_s{=}1014$};
\node[lbl] at (1765,1.50) {$B_r{=}1419$};
\node[lbl] at (3285,1.50) {$B_c{=}1622$};

\node[rowname] at (0,0.99) {realised on this item: $997$ tokens in $3$ rounds};
\fill[black!22, seg] (0,0.55)   rectangle (41,0.95);
\fill[blue!22, seg]  (41,0.55)  rectangle (159,0.95);
\fill[green!25, seg] (159,0.55) rectangle (901,0.95);
\fill[blue!22, seg]  (901,0.55) rectangle (997,0.95);
\draw[seg, dash pattern=on 2pt off 1.5pt, fill=white] (997,0.55) rectangle (4096,0.95);
\draw[seg] (0,0.55) rectangle (4096,0.95);
\node[lbl] at (2550,0.75) {unspent: $3{,}099$ of $4{,}096$};
\draw[-{Stealth[length=1.4mm]}, black!65] (100,0.30) -- (100,0.51);
\node[anchor=north, font=\tiny, align=center] at (100,0.28) {R1\\$118$};
\draw[-{Stealth[length=1.4mm]}, black!65] (530,0.30) -- (530,0.51);
\node[anchor=north, font=\tiny, align=center] at (530,0.28) {R2\\$742$};
\draw[-{Stealth[length=1.4mm]}, black!65] (949,0.30) -- (949,0.51);
\node[anchor=north, font=\tiny, align=center] at (1030,0.28) {R3\\$96$};

\node[rowname] at (0,-0.62) {\textsc{Uniform-}$16$ \emph{as specified}: overruns $B$ by $736$
  tokens ($18\%$) before prompt or answer};
\fill[green!25, seg] (0,-1.06) rectangle (4096,-0.66);
\pattern[pattern=north east lines, pattern color=red!55]
                     (4096,-1.06) rectangle (4832,-0.66);
\draw[seg]           (0,-1.06) rectangle (4832,-0.66);
\node[lbl] at (2048,-0.86) {$16\times302=4{,}832$ image tokens};

\draw[red!70!black, dashed, line width=0.6pt] (4096,-1.19) -- (4096,1.24);
\node[anchor=north, font=\tiny, text=red!70!black] at (4096,-1.21) {$B=4{,}096$};

\begin{scope}[shift={(0,-1.94)}]
  \fill[black!22, seg] (0,0) rectangle (150,0.20);
  \node[anchor=west, font=\tiny] at (200,0.10) {router $B_\rho$};
  \fill[blue!22, seg] (1150,0) rectangle (1300,0.20);
  \node[anchor=west, font=\tiny] at (1350,0.10) {$\cs$ (state)};
  \fill[green!25, seg] (2250,0) rectangle (2400,0.20);
  \node[anchor=west, font=\tiny] at (2450,0.10) {$\cv$ (retrieval)};
  \fill[orange!30, seg] (3500,0) rectangle (3650,0.20);
  \node[anchor=west, font=\tiny] at (3700,0.10) {reasoning $B_c$};
\end{scope}
\end{tikzpicture}
\caption{\textbf{Equal budget, drawn as a ledger} (the $d_q{=}3$ item worked through in
\S\ref{app:worked}, at the flagship's measured $302$ tokens per frame). Three things this makes
concrete. \emph{One:} the router's $(B_s,B_r,B_c)$ is a per-channel \emph{ceiling}, not a quota; this item finished three rounds on $997$ tokens, $24\%$ of $B$, and the unspent remainder is
not transferable to another item because $B$ is per decision. \emph{Two:} the round structure is
visible as three separate debits against two different channels, which is what
Theorem~\ref{thm:rounds} says a single-round agent cannot do at any $B$. \emph{Three:}
\textsc{Uniform-}$16$ \emph{as specified} does not fit inside $B$: sixteen frames cost more than
the whole budget before the prompt or the answer is counted, which is why \texttt{baselines.py}
sends the largest $w$ the budget affords and logs the realised value (\S\ref{sec:setup}).
Comparing against a \textsc{Uniform-}$16$ that silently overran $B$ by $18\%$ would have been the
easiest available way to manufacture the result this paper is testing.}
\label{fig:ledger}
\end{aspfigure}

Figure~\ref{fig:ledger} draws one decision's budget as a ledger, which is what ``equal budget'' means operationally: the same total, partitioned differently.

\subsection{Cost accounting}
What we can account for exactly is \emph{tokens}: every prompt, image and completion token is taken from the provider's own usage field and debited against $B$ per decision, which is the resource the whole paper is about, and it is why Table~\ref{tab:main} is an equal-budget comparison by construction rather than by post-hoc normalisation. What we \emph{cannot} account for on an API arm is FLOPs, KV-cache bytes or index storage per decision: the serving stack, batching and quantisation are not observable to us. The registered iso-compute check ($\le3\%$\pred{} FLOPs spread across methods at fixed $B$) therefore remains unevaluated, and is one of the things the local-serving arm of \S\ref{sec:limits} would buy. Note that the token accounting is not a proxy for it: two methods can spend the same tokens at different FLOPs if one of them sends images and the other text, and we do not claim otherwise. Per-image token costs are measured per backbone (\S\ref{sec:setup}), so the image/text mix of each method is at least visible in the ledger.

\subsection{Secondary experiments (registered, appendix-only)}
These are the experiments whose outcome refines the account rather than testing it, which is why they sit here and not in \S\ref{sec:exp}. Each is specified tightly enough to be run without further decisions. \textbf{Of the five, only part of S4 was run}; the rest are registered and outstanding, and are listed with their registered expectations so that a later version can be held to them. S1 and S2 each need a family of corpora at different $K$ and $N$ and a full re-run of the grid on each; S3 needs CMP items at depths we did not generate; S5 needs a second judge pass over every cell.

\textbf{(S1) Attribute-alphabet sweep.} $K\in\{4,8,16\}$ at fixed $N,d$, tracing the Theorem~\ref{thm:shannon} error curve $1-\frac{m/(Nd)+1}{\log_2K}$ for \textsc{State-only}. Registered: measured \textsc{State-only} RET accuracy stays below the per-$K$ ceiling in all three cells\pred{}, and the \emph{ordering} across $K$ matches the curve's monotonicity. A measured value \emph{above} a ceiling is a refutation of the bound's applicability (not of the bound), and would mean the state is carrying more than $m=L_s\beta$ bits, most likely because $\beta$, the bits-per-token estimate, is too low.

\textbf{(S2) Stream-length sweep.} $N\in\{100,200,400,800\}$ at fixed $w{=}16$, tracing the Theorem~\ref{thm:horizon} decay $\frac wN+(1-\frac wN)\frac1K$ for \textsc{Uniform-}$w$. Registered: \textsc{Uniform-}$w$ RET accuracy decays monotonically toward $1/K$ while \ASP{} stays flat within $3$ points\pred{}, the cleanest single picture of the horizon wall, and the one place where the theory predicts a \emph{shape} rather than a level.

\textbf{(S3) Depth-versus-rounds sweep.} $d_q\in\{2,3,4,5,6\}$ crossed with $R\in\{1,2,3,4\}$ on CMP. Registered: accuracy is governed by $\mathbb{1}[R\ge d_q]$ rather than by $B$, i.e.\ the $R{<}d_q$ cells sit near the Theorem~\ref{thm:rounds} single-round bound regardless of how much budget is poured into them\pred{}. This is the sharpest available test that rounds are a distinct resource from tokens.

\textbf{(S4) Sensitivity.} \emph{Partly run.} The retrieval mixing weight $\alpha$ over $\{0,0.25,0.5,0.6,0.75,1\}$ is swept in \texttt{analysis/retrieval\_recall.py} and costs nothing: $\alpha$ only reweights the cosine and BM25 terms of an index that is already built, so the entire sweep is local computation on cached captions and embeddings. $\alpha{=}1$ is vision-only retrieval and $\alpha{=}0$ is caption-only; the released sweep reports recall@$8$ at each setting, which localises how much of the index's competence comes from pixels and how much from text. The gate-threshold half of S4 is \textbf{not run} and cannot be on this corpus: $\tau_g{=}0$ is forced because SEW-Bench already emits one frame per discrete moment (\S\ref{sec:setup}), so a $\tau_g$ sweep needs a real sensor-rate stream. That leaves the registered concern, that \ASP{}'s advantage might be traceable to gate tuning, addressed only by the fact that the gate is the identity for every method here, which removes the confound rather than measuring it.

\textbf{(S5) Judge robustness.} Every EW free-form cell re-scored by a second, independently chosen judge. Registered: rank correlation between judges $\ge0.9$\pred{} and no reversal of any headline comparison. Because the headline tests are paired per-question deltas, a uniform judge-strictness offset cannot flip them; S5 tests the residual risk, which is judge behaviour that is \emph{non-uniform} across methods, specifically, a judge that rewards the longer answers agentic methods tend to produce.

\subsection{Qualitative protocol}
For one RET, one TRK and one CMP item the released records carry the full round-by-round access log (state reads, retrieval queries, debited tokens, and the emitted action per round) for \ASP{} alongside the transcript of the strongest single-channel baseline on the \emph{same} item. \S\ref{app:worked} fixes the format by working one item through by hand, on an EW-Bench-style query rather than a SEW one, so the format is legible independently of the corpus that was actually run; the measured logs are in \texttt{results/cells/} for every item, backbone and method.

\begin{aspnarrowtable}[!t]
\centering\footnotesize
\setlength{\tabcolsep}{3pt}
\caption{\textbf{Representational sufficiency (Assumption~\ref{ass:repr}), measured.} Left: single-keyframe attribute accuracy $1-\epsilon_0$, the RET question asked with the target frame handed to the model, which is the best case the assumption describes (chance $1/K=12.5\%$). Right: one composition step $1-\epsilon_1$, two in-context frames of the same class, which was seen earlier (chance $50\%$); scored items in parentheses. Attribute reading is comfortably sufficient at this scale, $0.71$--$1.00$ against a $0.125$ floor, so a near-chance grid cell is \emph{not} explained by an inability to read a frame that is in context. The one-step column is the weak one: 4 of 7 backbones sit at or below chance (\underline{underlined}), so for those the CMP family is limited by composition and not only by access, and Proposition~\ref{prop:complete}'s $1-d_q\epsilon_1$ bound is vacuous there. Two caveats on the right-hand column: it is binary, so it cannot separate inability from confusion over the words \textsf{first}/\textsf{second}, and each frame renders its own index in its header, so the task is legible in principle, since one backbone scores $1.00$ on it. Measured by \texttt{scripts/probe\_assumption.py}; per-item records released.}
\label{tab:probe}
\begin{tabular}{lcc}
\toprule
Backbone & $1-\epsilon_0$ (attribute) & $1-\epsilon_1$ (one step) \\
\midrule
Ministral~3B & 1.00 (32) & \underline{0.43 (14)} \\
Qwen3-VL-8B & 0.94 (32) & 0.81 (16) \\
Gemma~3~12B & 0.71 (31) & \underline{0.44 (16)} \\
Ministral~14B & 1.00 (32) & \underline{0.50 (16)} \\
Qwen3.8-27B & 1.00 (32) & 0.75 (16) \\
Qwen3-VL-30B-A3B & 0.94 (32) & \underline{0.50 (16)} \\
Gemma~4~31B & 1.00 (32) & 1.00 (16) \\
\bottomrule
\end{tabular}
\end{aspnarrowtable}

\ifdefined\ICRABuild
  \bibliographystyle{IEEEtran}
  \ifdefined\ICRACameraReady
    \begingroup
      \renewcommand{\baselinestretch}{0.982}\selectfont
      \bibliography{references}
    \endgroup
  \else
    \bibliography{references}

@inproceedings{prh,
  author={Huh, Minyoung and Cheung, Brian and Wang, Tongzhou and Isola, Phillip},
  title={The Platonic Representation Hypothesis},
  booktitle={ICML}, year={2024}}

@article{cch,
  author={Chen, Wenhui and Chen, Jianlin and Lin, Ziyao and Vong, Chi Man},
  title={The Capability Convergence Hypothesis: Capability from Access Structure, Not Scale},
  journal={arXiv preprint arXiv:2607.14144}, year={2026}}

@inproceedings{openeqa,
  author={Majumdar, Arjun and Ajay, Anurag and Zhang, Xiaohan and others},
  title={{OpenEQA}: Embodied Question Answering in the Era of Foundation Models},
  booktitle={CVPR}, year={2024}}

@inproceedings{vsibench,
  author={Yang, Jihan and Yang, Shusheng and Gupta, Anjali W. and Han, Rilyn and Fei-Fei, Li and Xie, Saining},
  title={Thinking in Space: How Multimodal Large Language Models See, Remember, and Recall Spaces},
  booktitle={CVPR}, year={2025}}

@inproceedings{egoschema,
  author={Mangalam, Karttikeya and Akshulakov, Raiymbek and Malik, Jitendra},
  title={{EgoSchema}: A Diagnostic Benchmark for Very Long-form Video Language Understanding},
  booktitle={NeurIPS}, year={2023}}

@inproceedings{eqa-das,
  author={Das, Abhishek and Datta, Samyak and Gkioxari, Georgia and Lee, Stefan and Parikh, Devi and Batra, Dhruv},
  title={Embodied Question Answering},
  booktitle={CVPR}, year={2018}}

@article{expressbench,
  author={Jiang, Kaixuan and others},
  title={Beyond the Destination: A Novel Benchmark for Exploration-Aware Embodied Question Answering},
  journal={arXiv preprint arXiv:2503.11117}, year={2025}}

@article{noisyeqa,
  author={Wu, Tao and others},
  title={{NoisyEQA}: Benchmarking Embodied Question Answering Against Noisy Queries},
  journal={arXiv preprint arXiv:2412.10726}, year={2024}}

@article{cityeqa,
  author={Zhao, Yong and others},
  title={{CityEQA}: A Hierarchical {LLM} Agent on Embodied Question Answering Benchmark in City Space},
  journal={arXiv preprint arXiv:2502.12532}, year={2025}}

@article{bridgeeqa,
  author={{BridgeEQA authors}},
  title={{BridgeEQA}: Virtual Embodied Agents for Real Bridge Inspections},
  journal={arXiv preprint arXiv:2511.12676}, year={2025}}

@article{erqa,
  author={{Gemini Robotics Team}},
  title={Gemini Robotics: Bringing {AI} into the Physical World},
  journal={arXiv preprint arXiv:2503.20020}, year={2025}}

@inproceedings{embodiedbench,
  author={Yang, Rui and others},
  title={{EmbodiedBench}: Comprehensive Benchmarking of Multimodal {LLMs} for Vision-Driven Embodied Agents},
  booktitle={ICML}, year={2025}}

@inproceedings{moviechat,
  author={Song, Enxin and others},
  title={{MovieChat}: From Dense Token to Sparse Memory for Long Video Understanding},
  booktitle={CVPR}, year={2024}}

@inproceedings{malmm,
  author={He, Bo and others},
  title={{MA-LMM}: Memory-Augmented Large Multimodal Model for Long-Term Video Understanding},
  booktitle={CVPR}, year={2024}}

@inproceedings{videoagent,
  author={Wang, Xiaohan and Zhang, Yuhui and Zohar, Orr and Yeung-Levy, Serena},
  title={{VideoAgent}: Long-form Video Understanding with Large Language Model as Agent},
  booktitle={ECCV}, year={2024}}

@inproceedings{llovi,
  author={Zhang, Ce and others},
  title={A Simple {LLM} Framework for Long-Range Video Question-Answering},
  booktitle={EMNLP}, year={2024}}

@inproceedings{videotree,
  author={Wang, Ziyang and others},
  title={{VideoTree}: Adaptive Tree-based Video Representation for {LLM} Reasoning on Long Videos},
  booktitle={CVPR}, year={2025}}

@inproceedings{goldfish,
  author={Ataallah, Kirolos and others},
  title={Goldfish: Vision-Language Understanding of Arbitrarily Long Videos},
  booktitle={ECCV}, year={2024}}

@article{flashvstream,
  author={Zhang, Haoji and others},
  title={{Flash-VStream}: Memory-Based Real-Time Understanding for Long Video Streams},
  journal={arXiv preprint arXiv:2406.08085}, year={2024}}

@inproceedings{fastv,
  author={Chen, Liang and others},
  title={An Image is Worth 1/2 Tokens After Layer 2: Plug-and-Play Inference Acceleration for Large Vision-Language Models},
  booktitle={ECCV}, year={2024}}

@article{prumerge,
  author={Shang, Yuzhang and others},
  title={{LLaVA-PruMerge}: Adaptive Token Reduction for Efficient Large Multimodal Models},
  journal={arXiv preprint arXiv:2403.15388}, year={2024}}

@article{tosa,
  author={{ToSA authors}},
  title={{ToSA}: Token Merging with Spatial Awareness},
  journal={arXiv preprint arXiv:2506.20066}, year={2025}}

@article{graphpad,
  author={{GraphPad authors}},
  title={{GraphPad}: Inference-Time Scene-Graph Updates for Embodied Question Answering},
  journal={arXiv preprint}, year={2025}}

@inproceedings{sayplan,
  author={Rana, Krishan and others},
  title={{SayPlan}: Grounding Large Language Models using {3D} Scene Graphs for Scalable Robot Task Planning},
  booktitle={CoRL}, year={2023}}

@inproceedings{conceptgraphs,
  author={Gu, Qiao and others},
  title={{ConceptGraphs}: Open-Vocabulary {3D} Scene Graphs for Perception and Planning},
  booktitle={ICRA}, year={2024}}

@article{snell,
  author={Snell, Charlie and Lee, Jaehoon and Xu, Kelvin and Kumar, Aviral},
  title={Scaling {LLM} Test-Time Compute Optimally Can Be More Effective than Scaling Model Parameters},
  journal={arXiv preprint arXiv:2408.03314}, year={2024}}

@article{s1,
  author={Muennighoff, Niklas and others},
  title={s1: Simple Test-Time Scaling},
  journal={arXiv preprint arXiv:2501.19393}, year={2025}}

@inproceedings{feng-cot,
  author={Feng, Guhao and others},
  title={Towards Revealing the Mystery behind Chain of Thought: A Theoretical Perspective},
  booktitle={NeurIPS}, year={2023}}

@inproceedings{merrill-cot,
  author={Merrill, William and Sabharwal, Ashish},
  title={The Expressive Power of Transformers with Chain of Thought},
  booktitle={ICLR}, year={2024}}

@article{merrill-sat,
  author={Merrill, William and Sabharwal, Ashish},
  title={The Parallelism Tradeoff: Limitations of Log-Precision Transformers},
  journal={TACL}, year={2023}}

@inproceedings{shortcuts,
  author={Liu, Bingbin and others},
  title={Transformers Learn Shortcuts to Automata},
  booktitle={ICLR}, year={2023}}

@inproceedings{illusion-state,
  author={Merrill, William and Petty, Jackson and Sabharwal, Ashish},
  title={The Illusion of State in State-Space Models},
  booktitle={ICML}, year={2024}}

@inproceedings{mamba,
  author={Gu, Albert and Dao, Tri},
  title={Mamba: Linear-Time Sequence Modeling with Selective State Spaces},
  booktitle={COLM}, year={2024}}

@article{jamba,
  author={Lieber, Opher and others},
  title={Jamba: A Hybrid Transformer-Mamba Language Model},
  journal={arXiv preprint arXiv:2403.19887}, year={2024}}

@article{griffin,
  author={De, Soham and others},
  title={Griffin: Mixing Gated Linear Recurrences with Local Attention for Efficient Language Models},
  journal={arXiv preprint arXiv:2402.19427}, year={2024}}

@inproceedings{routellm,
  author={Ong, Isaac and others},
  title={{RouteLLM}: Learning to Route {LLMs} with Preference Data},
  booktitle={ICLR}, year={2025}}

@article{frugalgpt,
  author={Chen, Lingjiao and Zaharia, Matei and Zou, James},
  title={{FrugalGPT}: How to Use Large Language Models While Reducing Cost and Improving Performance},
  journal={arXiv preprint arXiv:2305.05176}, year={2023}}

@inproceedings{socratic,
  author={Zeng, Andy and others},
  title={Socratic Models: Composing Zero-Shot Multimodal Reasoning with Language},
  booktitle={ICLR}, year={2023}}

@inproceedings{alanavlm,
  author={Suglia, Alessandro and others},
  title={{AlanaVLM}: A Multimodal Embodied {AI} Foundation Model for Egocentric Video Understanding},
  booktitle={Findings of EMNLP}, year={2024}}

@techreport{qwen38,
  author={{Qwen Team}},
  title={{Qwen3.8-27B}: Dense, Natively Multimodal, {Apache-2.0}},
  institution={Alibaba Group}, type={Release report}, year={2026}, month={aug}}

@techreport{gemma4,
  author={{Gemma Team}},
  title={Gemma 4 Model Family ({E2B}/{E4B}/{26B-MoE}/{31B})},
  institution={Google DeepMind}, type={Technical report}, year={2026}, month={apr}}

@article{qwen3omni,
  author={Xu, Jin and others},
  title={{Qwen3-Omni} Technical Report},
  journal={arXiv preprint arXiv:2509.17765}, year={2025}}

@techreport{qwen3vl,
  author={{Qwen Team}},
  title={{Qwen3-VL} Technical Report},
  institution={Alibaba Group}, year={2025}}

@article{gemma3,
  author={{Gemma Team}},
  title={{Gemma 3} Technical Report},
  journal={arXiv preprint arXiv:2503.19786}, year={2025}}

@techreport{ministral,
  author={{Mistral AI}},
  title={{Ministral} 3B: A Compact Multimodal Model},
  institution={Mistral AI}, type={Model card}, year={2025}}

@article{qwen25vl,
  author={Bai, Shuai and others},
  title={{Qwen2.5-VL} Technical Report},
  journal={arXiv preprint arXiv:2502.13923}, year={2025}}

@article{phi4mm,
  author={{Microsoft}},
  title={{Phi-4-Multimodal} Technical Report},
  journal={arXiv preprint arXiv:2503.01743}, year={2025}}

@article{minicpmv,
  author={Yao, Yuan and others},
  title={{MiniCPM-V}: A {GPT-4V} Level {MLLM} on Your Phone},
  journal={arXiv preprint arXiv:2408.01800}, year={2024}}

@inproceedings{siglip,
  author={Zhai, Xiaohua and Mustafa, Basil and Kolesnikov, Alexander and Beyer, Lucas},
  title={Sigmoid Loss for Language Image Pre-Training},
  booktitle={ICCV}, year={2023}}

@inproceedings{hm3d,
  author={Ramakrishnan, Santhosh Kumar and others},
  title={{Habitat-Matterport 3D Dataset (HM3D)}: 1000 Large-scale {3D} Environments for Embodied {AI}},
  booktitle={NeurIPS Datasets and Benchmarks}, year={2021}}

@inproceedings{scannet,
  author={Dai, Angela and others},
  title={{ScanNet}: Richly-Annotated {3D} Reconstructions of Indoor Scenes},
  booktitle={CVPR}, year={2017}}

@article{barrington,
  author={Barrington, David A.},
  title={Bounded-Width Polynomial-Size Branching Programs Recognize Exactly Those Languages in {NC$^1$}},
  journal={Journal of Computer and System Sciences}, volume={38}, number={1}, year={1989}}

@inproceedings{egagent,
  author={Rege, Aniket and Sadhu, Arka and Li, Yuliang and Li, Kejie and Vinayak, Ramya Korlakai and Chai, Yuning and Lee, Yong Jae and Kim, Hyo Jin},
  title={Agentic Very Long Video Understanding},
  booktitle={ACL}, year={2026}, note={arXiv:2601.18157}}

@article{worldmm,
  author={Yeo, Jihan and others},
  title={{WorldMM}: Dynamic Multimodal Memory Agent for Long Video Reasoning},
  journal={arXiv preprint}, year={2026}}

@inproceedings{vgent,
  author={Shen, Xiaoqian and Zhang, Wenxuan and Chen, Jun and Elhoseiny, Mohamed},
  title={Vgent: Graph-based Retrieval-Reasoning-Augmented Generation for Long Video Understanding},
  booktitle={NeurIPS}, year={2025}}

@inproceedings{videorag,
  author={Ren, Xubin and Xu, Lingrui and Xia, Long and Wang, Shuaiqiang and Yin, Dawei and Huang, Chao},
  title={{VideoRAG}: Retrieval-Augmented Generation with Extreme Long-Context Videos},
  booktitle={ACM SIGKDD}, year={2026}}

@article{streameqa,
  author={{StreamEQA authors}},
  title={{StreamEQA}: Towards Streaming Video Understanding for Embodied Scenarios},
  journal={arXiv preprint}, year={2025}}

@article{longvitu,
  author={{LongViTU authors}},
  title={{LongViTU}: Instruction Tuning for Long-Form Video Understanding},
  journal={arXiv preprint}, year={2025}}

@inproceedings{pointerchasing,
  author={Papadimitriou, Christos H. and Sipser, Michael},
  title={Communication Complexity},
  booktitle={STOC}, year={1982}}

@article{nisanwigderson,
  author={Nisan, Noam and Wigderson, Avi},
  title={Rounds in Communication Complexity Revisited},
  journal={SIAM Journal on Computing}, volume={22}, number={1}, year={1993}}

@article{kaplan,
  author={Kaplan, Jared and McCandlish, Sam and Henighan, Tom and others},
  title={Scaling Laws for Neural Language Models},
  journal={arXiv preprint arXiv:2001.08361}, year={2020}}

@inproceedings{chinchilla,
  author={Hoffmann, Jordan and others},
  title={Training Compute-Optimal Large Language Models},
  booktitle={NeurIPS}, year={2022}}

@inproceedings{wei-cot,
  author={Wei, Jason and Wang, Xuezhi and Schuurmans, Dale and others},
  title={Chain-of-Thought Prompting Elicits Reasoning in Large Language Models},
  booktitle={NeurIPS}, year={2022}}

@inproceedings{react,
  author={Yao, Shunyu and Zhao, Jeffrey and Yu, Dian and others},
  title={{ReAct}: Synergizing Reasoning and Acting in Language Models},
  booktitle={ICLR}, year={2023}}

@inproceedings{reflexion,
  author={Shinn, Noah and Cassano, Federico and Gopinath, Ashwin and Narasimhan, Karthik and Yao, Shunyu},
  title={Reflexion: Language Agents with Verbal Reinforcement Learning},
  booktitle={NeurIPS}, year={2023}}

@inproceedings{toolformer,
  author={Schick, Timo and Dwivedi-Yu, Jane and Dess{\`i}, Roberto and others},
  title={Toolformer: Language Models Can Teach Themselves to Use Tools},
  booktitle={NeurIPS}, year={2023}}

@inproceedings{lewis-rag,
  author={Lewis, Patrick and Perez, Ethan and Piktus, Aleksandra and others},
  title={Retrieval-Augmented Generation for Knowledge-Intensive {NLP} Tasks},
  booktitle={NeurIPS}, year={2020}}

@inproceedings{dpr,
  author={Karpukhin, Vladimir and O{\u{g}}uz, Barlas and Min, Sewon and others},
  title={Dense Passage Retrieval for Open-Domain Question Answering},
  booktitle={EMNLP}, year={2020}}

@article{memgpt,
  author={Packer, Charles and Wooders, Sarah and Lin, Kevin and others},
  title={{MemGPT}: Towards {LLMs} as Operating Systems},
  journal={arXiv preprint arXiv:2310.08560}, year={2023}}

@inproceedings{hipporag,
  author={Guti{\'e}rrez, Bernal Jim{\'e}nez and Shu, Yiheng and Gu, Yu and Yasunaga, Michihiro and Su, Yu},
  title={{HippoRAG}: Neurobiologically Inspired Long-Term Memory for Large Language Models},
  booktitle={NeurIPS}, year={2024}}

@inproceedings{ringattention,
  author={Liu, Hao and Zaharia, Matei and Abbeel, Pieter},
  title={Ring Attention with Blockwise Transformers for Near-Infinite Context},
  booktitle={ICLR}, year={2024}}

@article{lwm,
  author={Liu, Hao and Yan, Wilson and Zaharia, Matei and Abbeel, Pieter},
  title={World Model on Million-Length Video and Language with Blockwise {RingAttention}},
  journal={arXiv preprint arXiv:2402.08268}, year={2024}}

@article{longva,
  author={Zhang, Peiyuan and others},
  title={Long Context Transfer from Language to Vision},
  journal={arXiv preprint arXiv:2406.16852}, year={2024}}

@inproceedings{palme,
  author={Driess, Danny and Xia, Fei and Sajjadi, Mehdi S. M. and others},
  title={{PaLM-E}: An Embodied Multimodal Language Model},
  booktitle={ICML}, year={2023}}

@inproceedings{rt2,
  author={Brohan, Anthony and others},
  title={{RT-2}: Vision-Language-Action Models Transfer Web Knowledge to Robotic Control},
  booktitle={CoRL}, year={2023}}

@inproceedings{openvla,
  author={Kim, Moo Jin and others},
  title={{OpenVLA}: An Open-Source Vision-Language-Action Model},
  booktitle={CoRL}, year={2024}}

@inproceedings{habitat,
  author={Savva, Manolis and others},
  title={Habitat: A Platform for Embodied {AI} Research},
  booktitle={ICCV}, year={2019}}

@inproceedings{alfred,
  author={Shridhar, Mohit and others},
  title={{ALFRED}: A Benchmark for Interpreting Grounded Instructions for Everyday Tasks},
  booktitle={CVPR}, year={2020}}

@inproceedings{ego4d,
  author={Grauman, Kristen and others},
  title={{Ego4D}: Around the World in 3{,}000 Hours of Egocentric Video},
  booktitle={CVPR}, year={2022}}

@inproceedings{egolife,
  author={Yang, Jingkang and others},
  title={{EgoLife}: Towards Egocentric Life Assistant},
  booktitle={CVPR}, year={2025}}

@inproceedings{spatialvlm,
  author={Chen, Boyuan and others},
  title={{SpatialVLM}: Endowing Vision-Language Models with Spatial Reasoning Capabilities},
  booktitle={CVPR}, year={2024}}

@inproceedings{videomme,
  author={Fu, Chaoyou and others},
  title={{Video-MME}: The First-Ever Comprehensive Evaluation Benchmark of Multi-modal {LLMs} in Video Analysis},
  booktitle={CVPR}, year={2025}}
  \fi
\fi

\end{document}